\documentclass{article}

\usepackage{microtype}
\usepackage{graphicx}
\usepackage{subcaption}
\usepackage{booktabs} % for professional tables
\usepackage{multirow} % for multirow in tables
\usepackage{etoc}
\usepackage{hyperref}

\usepackage[accepted]{icml2026}

\usepackage{amsmath}
\usepackage{amssymb}
\usepackage{mathtools}
\usepackage{amsthm}
\usepackage{caption}

\usepackage{enumitem}
\usepackage[capitalize,noabbrev]{cleveref}

\theoremstyle{plain}
\newtheorem{theorem}{Theorem}[section]

\theoremstyle{definition}
\newtheorem{definition}[theorem]{Definition}

\theoremstyle{remark}

\usepackage[textsize=tiny]{todonotes}

\icmltitlerunning{   Understanding Transferability in Graph Pretraining under Riemannian Geometry}

\begin{document}
\setcounter{tocdepth}{3}
\etocdepthtag.toc{mtoc}
\etocsettagdepth{mtoc}{none}
\etocsettagdepth{atoc}{none}

\twocolumn[
  \icmltitle{Are Common Substructures Transferable? Riemannian Graph Foundation Model with Neural Vector Bundles}

  % It is OKAY to include author information, even for blind submissions: the
  % style file will automatically remove it for you unless you've provided
  % the [accepted] option to the icml2026 package.

  % List of affiliations: The first argument should be a (short) identifier you
  % will use later to specify author affiliations Academic affiliations
  % should list Department, University, City, Region, Country Industry
  % affiliations should list Company, City, Region, Country

  % You can specify symbols, otherwise they are numbered in order. Ideally, you
  % should not use this facility. Affiliations will be numbered in order of
  % appearance and this is the preferred way.
  % \icmlsetsymbol{equal}{*}

  \begin{icmlauthorlist}
    \icmlauthor{Li Sun}{bupt}
    \icmlauthor{Zhenhao Huang}{ncepu}
    \icmlauthor{Yiding Wang}{bupt}
    \icmlauthor{Qin Chen}{bupt}
    \icmlauthor{Pietro Li\`o}{cambridge}
    \icmlauthor{Philip S. Yu}{uic}
  \end{icmlauthorlist}

  \icmlaffiliation{bupt}{Beijing University of Posts and Telecommunications, Beijing, China}
  \icmlaffiliation{ncepu}{North China Electric Power University, Beijing, China}
  \icmlaffiliation{cambridge}{University of Cambridge, Cambridge, UK}
  \icmlaffiliation{uic}{University of Illinois Chicago, IL, USA}

  \icmlcorrespondingauthor{Li Sun}{lsun@bupt.edu.cn}

  % You may provide any keywords that you find helpful for describing your
  % paper; these are used to populate the "keywords" metadata in the PDF but
  % will not be shown in the document
  \icmlkeywords{Machine Learning, ICML}

  \vskip 0.3in
]

% this must go after the closing bracket ] following \twocolumn[ ...

% This command actually creates the footnote in the first column listing the
% affiliations and the copyright notice. The command takes one argument, which
% is text to display at the start of the footnote. The \icmlEqualContribution
% command is standard text for equal contribution. Remove it (just {}) if you
% do not need this facility.

% Use ONE of the following lines. DO NOT remove the command.
% If you have no special notice, KEEP empty braces:
\printAffiliationsAndNotice{}  % no special notice (required even if empty)
% Or, if applicable, use the standard equal contribution text:
% \printAffiliationsAndNotice{\icmlEqualContribution}

\begin{abstract}
Foundation models have sparked a revolution via a pretraining-adaptation paradigm, 
with recent efforts extending this success to graphs. 
Unlike other modalities, graphs contain rich structural patterns, 
yet their structural transferability remains poorly understood. 
Prior studies consider common substructures in the discrete realm, 
and we are motivated by a fundamental question: Are common substructures transferable? 
The underlying theory is largely underexplored. 
In this work, we shift toward learning transferable structures through the lens of functional behavior. 
Theoretically, we connect transferable substructures to intrinsic geometry of the representation space. 
However, characterizing such intrinsic geometry has rarely been touched. 
Grounded in Riemannian geometry, we develop a graph intrinsic geometry learning framework—\textbf{Neural Vector Bundle}, 
which enables parsing intrinsic geometry with local coordinates. 
Building on this, we design \textbf{\textsc{Gauge}}, a pretrainable neural architecture  that constructs the vector bundle, flattening geometrically compatible local coordinates, 
and a new Dirichlet loss, which also measures the transfer effort.  
We empirically validate its superior expressiveness in challenging tasks including zero-shot link prediction and graph isomorphism.
  % Foundation models have revolutionized language modeling, catalyzing widespread interest in extending their success to graphs. However, unlike natural language with its shared vocabulary, knowledge transfer across graphs remains poorly understood and inherently challenging. Current graph pre-trained models couple structural patterns with domain semantics. There still lacks a principled framework to characterize which structural patterns learned during pre-training benefit the target graph. In this paper, we disentangle the semantic-structure coupling to uncover the underlying mechanism on structural transferability. To this end, we construct a new geometric framework—neural fiber bundle—to study universal, domain-agnostic structural invariance which defined as characteristic structure: a subgraph exhibits locally-consistent behavior under a pre-trained model, which serves as the fundamental transferable knowledge. Building upon this theory, we design a pre-training architecture named Connector, with a simple yet effective objective that learns characteristic structures from diverse graphs. Extensive experiments demonstrate the evident superiority of Connector on zero-shot link prediction, and showcases its potential in solving graph isomorphism, validating that characteristic structures are indeed universally transferable.
\end{abstract}

%!TEX root = ./main.tex
\section{Introduction}

Foundation models have driven revolutionary progress in natural language~\cite{brown2020language}, computer vision~\cite{Radford2021learning}, and multimodal reasoning~\cite{li2024multimodalfoundation}.
Recently, significant efforts have aimed to replicate such success in the graph, given its ubiquity in real-world applications.
The goal is to develop a universal representation mechanism capable of capturing shared structural patterns across diverse graphs and supporting a broad spectrum of downstream tasks.

A key distinguishing characteristic of graphs, compared to other modalities, is their rich structural patterns. 
While large language model (LLM)-based methods show promise for text-attributed graphs~\cite{zhu2025llm, xia2024opengraph, pmlr-v235-chen24bh, chen2025graph, ren2024survey}, 
their text reliance limits their wide usage particularly for the prevalent text-free graphs. 
% in the prevalent text-free graphs.
% they often diminish these structural patterns when encoding them into sequential descriptions, 
Another line of research ties structural transferability to common substructures, such as motifs~\cite{sun2025riemanngfm}, graphons~\cite{yuan2025graver,wang2025graphfoundation}, or structural vocabularies~\cite{wang2025towards,bo2025quantizing}. Delving deeper into structural transferability, 
our work is motivated by a fundamental question: 
\underline{\emph{Are common substructures truly transferable?}}
In language, a word derives meaning from its local context; 
analogously, in graphs, a substructure derives its function from in the surrounding structural neighborhood, 
and a graph learning model ultimately centers on this functional behavior, namely, how it operates on the graph structure. 
However, a principled framework for understanding such \textbf{behavioral invariance} is still lacking: 
which structural behaviors learned during pretraining transfer to  target graphs? 
The theoretical underpinnings of this question remain underexplored.

In this paper, 
departing from prior studies that identify common substructures in discrete realm, 
we shift toward learning transferable structures through the lens of functional behavior.
Our solution begins with the intuition that, 
if a structural behavior learned during pretraining is transferable, 
it requires little adaptation when applied to target graphs. 
In other words, 
its invariant behavior is not affected by the structural neighborhood. 
Our first contribution is to discover the connection between 
behavioral invariance and geometric flatness in the representation space. 
This implies that structures associated with geometrically flat representations are transferable, 
although identifying such flatness is nontrivial.

Riemannian geometry offers an elegant framework for analyzing graph structure geometrically, 
yet significant challenges remain.
In the literature, a series of Riemannian models have been proposed for graph representation learning~\cite{chami2019hyperbolic, bachmann2020constant, Xiong2021Pseudo}, and
more recently, for graph foundation models~\cite{sun2025riemanngfm}.
These methods primarily represent graphs in predefined, extrinsic geometric priors ~\cite{ chenFullyHyperbolicNeural2022,gu2018learningmixed, zhang2021switchspaces}, 
but they fail to characterize the intrinsic geometry underlying representations generated by usual graph models.
Bridging this gap, 
our second contribution is the development of a new framework for intrinsic graph geometry learning—\textbf{Neural Vector Bundle}—designed to systematically analyze transferable structures during pretraining. 
The novelty lies in introducing a vector bundle formulation to model  intrinsic graph geometry. 
In this framework, the graph structure is preserved in an abstract base manifold, 
while each local region (i.e., node) is equipped with an attached fiber (i.e., a vector space) depicting its local structure. 
To enable this, we start from the concept of local trivialization and formalize a neural implementation of local coordinates and parallel transport.

Based on these foundations, 
we propose a novel \textbf{g}r\textbf{a}ph \textbf{u}niversal pretraining architecture with intrinsic \textbf{ge}ometry learning (referred to as \textbf{\textsc{Gauge}})
for more accurate knowledge transfer and better expressiveness in downstream tasks. 
% GAUGE：Graph universal pretraining Architecture intrinsic Geometry learning
% we propose a pretrainable neural architecture \textbf{\textsc{Gauge}} with several theoretical guarantees.
% whose overall architecture is sketched in Fig. X.
% \textbf{Its novelty lies in introducing the intrinsic graph geometry to graph pretraining }
Specifically, \textsc{Gauge} constructs a neural vector bundle by progressively flattening neighboring fibers that are geometrically compatible, and learning invariant substructures through their connection to geometric flatness.
Our third contribution is discovering that Dirichlet energy can serve as the  measure of behavioral invariance, and we thus formulate a Dirichlet loss to jointly learn the representation and invariant substructures. 
This loss serves as a measure of transfer effort and specifies the behavior mechanism  learned from which substructures is appliable to unseen graphs. 

\textbf{Contribution Highlights.} Key  contributions are three-fold:
\begin{itemize}
    \item We develop a principled geometric understanding on structural transferability in graph pretraining. Rather than focusing on  structures in the discrete realm, we bridge transferable structures to intrinsic graph geometry through the lens of behavior invariance. 
    \item We propose an intrinsic geometry learning framework, Neural Vector Bundle, which characterizes local region of the underlying manifold with parametric local coordinates, grounded in Riemannian geometry.
    \item We design a novel pretraining architecture \textsc{Gauge} with a Dirichlet loss that measures transfer effort, and its superior expressiveness is empirically demonstrated on challenging tasks, including graph isomorphism and zero-shot link prediction.
\end{itemize}
\section{Related Work}
\paragraph{Towards Graph Foundation Model}
The expressiveness of Graph Neural Networks (GNNs) often degrades on new graphs without retraining. 
Thus, research interest is shifting toward developing pre-trainable, general-purpose neural architectures for universal graph learning. Recently, the success of LLM inspires its extension to text-attributed graphs. 
However, due to differences between graph and language structures, aligning graphs with LLMs is challenging~\cite{ren2024survey}. 
Although GNNs are often incorporated to capture structural patterns, the universality of such hybrid methods primarily comes from LLMs through textual attributes ~\cite{pmlr-v235-chen24bh,huang2024large}. 
This reliance limits the applicability of LLM-based methods to prevalent text-free graphs ~\cite{yu2025samgpt,zhao2025textfree}. 
Meanwhile, there exist efforts toward general-purpose graph learning within narrower domains~\cite{guo2025unimot,liu2023multimodal} or specialized tasks~\cite{zhao2025fullyinductive}.
\paragraph{General-Purpose Graph pretraining}
Substantial efforts have been devoted to designing universal mechanisms that can learn invariant patterns from diverse graphs. 
However, identifying such patterns is nontrivial, and existing candidates include motifs~\cite{sun2025riemanngfm}, trees in message-passing schemes~\cite{wang2024gft,wang2025towards}, graphons~\cite{yuan2025graver}, and various forms of structural vocabularies~\cite{bo2025quantizing,jiang2024ragraph}. 
On the one hand, prior work often focuses on discrete substructures; in contrast, we explore invariant structural patterns in a continuous geometric space regarding the functional behavior. 
On the other hand, although existing solutions show promising results, a principled understanding of transferability—which structural pattern learned during pretraining applies to target graphs—remains lacking. 
Another research direction investigates the adaptation of pre-trained models to downstream tasks via either fine-tuning~\cite{wang2024gft,sun2026multidomain} or prompt learning~\cite{huang2025one,wang2025multidomain,yu2025samgpt}, with recent extensions to continual learning settings ~\cite{guo2025graphkeeper}. We note that this line of work is orthogonal to our focus.

% \paragraph{General-Purpose Graph pretraining}
% Significant efforts have sought for universal representation mechanism that learns the invariant pattern from diverse graphs. 
% However, identifying invariant patterns in graph domain is nontrivial, and existing
% candidates includes motifs~\cite{sun2025riemanngfm}, trees in message-passing~\cite{wang2024gft,wang2025towards}, graphons~\cite{yuan2025graver} and other kinds of structural vocabularies~\cite{bo2025quantizing,jiang2024ragraph}. 
% On the one hand, previous works begin with the common substructures in the discrete realm, 
% while we explore invariant structural patterns in a continuous geometric space regarding the functional behavior on graphs. 
% On the other hand, although previous solutions have achieved encouraging results, there still lacks a principled understanding of invariant structural patterns during graph pretraining so as to further facilitate accurate knowledge transfer.
% Another line of work studies the adaptation between pre-trained models with downstream tasks, either fine-tuning~\cite{sun2025riemanngfm,wang2024gft} or prompt learning~\cite{huang2025one,wang2025multidomain,yu2025samgpt}, and it is extended to continual learning setting recently~\cite{guo2025graphkeeper}.
% We clarify that this research line is orthogonal to our focus.
\paragraph{Riemannian Graph Learning}
Existing methods study graph representation learning in a predefined manifold. 
Research has centered on selecting geometric priors that align with certain graphs~\cite{sun2026riemannglriemanniangeometrychanges}. 
For example, hyperbolic spaces are well-suited for hierarchical structures~\cite{chami2019hyperbolic,chenFullyHyperbolicNeural2022,sun2026asil}, while hyperspherical spaces are well-suited for cyclical ones~\cite{liu2022spherical}. 
Representation spaces can be constructed via products or quotients~\cite{sunmotif2024,zhang2021switchspaces,gu2018learningmixed,Xiong2021Pseudo,law2021ultra,sun2024spiking}. 
Also, we note that the concepts of sheaves, bundles~\cite{zaghen2024sheaf,hansen2020sheaf,bamberger2025bundle} and Riemannian dynamics~\cite{sun2025deeper} have been introduced for designing GNNs, but they have not yet been connected to transferability.
In contrast, we focus on the intrinsic geometry of graphs, i.e., the geometry of the underlying data manifold induced by graph representations. 
This is challenging because the manifold is unknown, and its tools for characterizing it remain underdeveloped.

\section{A Neural Vector Bundle Framework}

Currently, a principled understanding of transferability remains lacking,  which limits precise knowledge transfer and the  expressiveness of pretrained models. 
In this paper, we aim to bridge this gap.
Notably, we establish a theoretical connection between transferability and the intrinsic geometry of graphs; however, characterizing such geometry has received little attention, posing a significant challenge. 
To address this, we introduce the concept of vector bundles into graph learning and propose a novel Riemannian deep learning framework—Neural Vector Bundle—that enables a principled geometric understanding for modeling structural transferability.
Key notations are summarized in Appendix~\ref{subsec:notation_table}, and detailed proofs are provided in Appendix~\ref{subsec:proofs}.

% A real vector bundle over graph $\mathcal{G}=(\mathcal{V}, \mathcal{E}, \mathbf{X})$ equips each edge $(i, j)$ a linear isomorphism
% \[
% g_{ji}=\phi_j \circ \phi_i^{-1}: E_{v_i}\rightarrow E_{v_j},
% \]
% called parallel transport from $v_j$ to $v_i$, which defines a discrete connection of $E$.
\subsection{Background}
%Before elaborating our theoretical construction, 
We briefly review the concepts in Riemannian geometry: manifold, Riemannian metric, connection and vector bundle. 
A graph $\mathcal{G}=(\mathcal{V}, \mathcal{E}, \mathbf{X})$ is defined on node set $\mathcal{V}$ and edge set $\mathcal{E}$ with $\mathbf{X} \in \mathbb{R}^{N\times D}$ summarizing node attributes.
In Riemannian geometry, manifold is the continuous counterpart of graph. 
Specifically, a Riemannian manifold $(\mathcal{M}, \mathsf{g})$ is defined on a $C^\infty$ smooth manifold $\mathcal{M}$ with a $C^\infty$ Riemannian metric $\mathsf{g}$, which assigns each point $\boldsymbol{p}\in\mathcal{M}$ an inner product on its tangent space.
A connection $\nabla$ on $\mathcal{M}$ (e.g., Levi-Civita connection)  defines how to differentiate vector fields along curves, enabling the notion of parallel transport. 

\paragraph{Vector Bundle} As a key concept in this paper, it can be roughly understood as a ``background''  space surrounding the manifold. 
Specifically, a $C^\infty$ vector bundle with rank $r$ is a $C^\infty$ surjection $\pi: E \rightarrow \mathcal{M}$
%, where $\mathcal{M}$ is the base manifold and $E$ is the total space, 
with base manifold $\mathcal{M}$ and total space $E$
satisfies  two conditions: 
(1) Each point $\boldsymbol{p} \in \mathcal{M}$ is attached to a \textbf{fiber} $E_{\boldsymbol{p}}=\pi^{-1}(\boldsymbol{p})$, a real vector space with dimension $r$.
%(2) There exists an open neighborhood $U \in \mathcal{M}$ such that there is a fiber-preserving diffeomorphism
(2) There exists an open neighborhood $U \in \mathcal{M}$ such that there is a diffeomorphism
\begin{equation}
    \phi_U: \pi^{-1}(U) \rightarrow U \times \mathbb{R}^r,
\end{equation}
  which is restricted to a linear isomorphism $E_{\boldsymbol{p}} \rightarrow \{\boldsymbol{p}\} \times \mathbb{R}^r $ on each fiber,
    such that for every \(\boldsymbol{q} \in U\), the restriction \(\phi_U|_{E_{\boldsymbol{q}}}: E_{\boldsymbol{q}} \to \{\boldsymbol{q}\} \times \mathbb{R}^r\) is a linear isomorphism.
Rigorous elaborations are given in Appendix~\ref{subsec:background}.

\subsection{Connecting Transferability to Geometry} \label{sec: transfer2geo}

% Essentially, a graph learning model $\Phi: \mathcal V \to \mathbb{R}^{d}$ defines a function with respect to nodes, 
% and generates node representation $\boldsymbol z \in \mathbb{R}^{d}$ under structural interactions.
% Thus we explore the transferability through the lens of functional behavior, different from those focusing on the discrete realm.
% % We first show that the behavioral invariance can be analyzed via the gradient flow, and further connect it to the intrinsic flat geometry in the representation space.
% We first elaborate on the behavioral invariance and further connect it to the intrinsic flat geometry in the representation space.

Essentially, a graph learning model $\Phi: \mathcal V \to \mathbb{R}^{d}$ defines a real function over $\mathcal V $ that generates representations $\boldsymbol z \in \mathbb{R}^{d}$ under structural interactions among the nodes. 
Thus, we explore the transferability through the lens of functional behavior, departing from previous methods focusing on the discrete realm.
Specifically, we first elaborate on the behavioral invariance and further subsequently it to the intrinsic flat geometry in the representation space.

% \paragraph{Transferability \& Gradient Analysis}
% Here, we begin with an intuition on knowledge transfer: if a structural behavior learned during pretraining is transferable, 
% it requires little adaptation when applied to unseen graphs. 
% In other words, the transferability is rooted in the \textbf{behavioral invariance}. 
% In a graph $\mathcal{G}$, we consider a substructure $\mathcal{T}$  (a connected component of $\mathcal{G}$) in the representation space.
% Given an invariant $\mathcal{T}$, its node $i \in \mathcal{T}$ is not influenced by the nodes outside  $j \in \mathcal{V}/\mathcal{T}$, and thus can be inferred from other nodes in $\mathcal{T}$ with an error $\varepsilon$ sufficient small.
% Such invariance can be demonstrated through gradient analysis of node representations.
% \begin{theorem}[\textbf{Vanishing Inward Gradient, Appendix ?}]
%     Given a graph $\mathcal{G}=(\mathcal{V},\mathcal{E})$ with node embedding $\mathbf{Z}\in \mathbb{R}^{|\mathcal{V}| \times d}$ encoded by a graph encoder $\Phi$ and $\mathcal{L}$ is a loss function. If $\Phi$ is predictable over $\mathcal{T} \subseteq \mathcal{V}$, the gradient $\mathrm{grad}_{\boldsymbol{z_i}^{(l)}}\mathcal{L}$ at $l$-th layer propagates to other nodes in $\mathcal{V}$ whenever $v_i \in \mathcal{T}$. Conversely, gradients from nodes in $\mathcal{V} - \mathcal{T}$ do not influence any node within $\mathcal{T}$.
% \end{theorem}
% It claims that the nodes in $\mathcal{T}$ are not affected by gradients from outside the structure or, in other words, $\mathcal{T}$ is invariant to its structural neighborhood.

\paragraph{Transferability \& Behavioral Invariance}
Here, we begin with the intuition on knowledge transfer: if a structural behavior learned during pretraining is transferable, 
it requires little to no adaptation when applied to unseen graphs. 
In other words, transferability is rooted in behavioral invariance. 
In a graph $\mathcal{G}$, we consider a substructure $\mathcal{T}$  (a connected component of $\mathcal{G}$) in the representation space. Nodes in $\mathcal{T}$ are unaffected by the nodes outside this substructure or, in other words, $\mathcal{T}$ is invariant to its structural neighborhood.

% This design ensures that nodes in a characteristic structure can influence others during training, but are not affected by gradients from outside the structure—protecting stable structural knowledge from external noise.
%  means it is 

\paragraph{Vector Bundle Representation}
In this paper, we consider the \textbf{intrinsic geometry}\footnote{The term of intrinsic geometry in this paper is aligned with its meaning in differential geometry~\cite{docarmo1992riemannian}.}—the geometry of the underlying data manifold from given node representations in $\mathbb{R}^{d}$, generated by any graph learning model $\Phi$. 
To this end, we introduce a vector bundle as the representation space. Its abstract base manifold preserves the graph structure, while the attached fibers characterize the local geometry of the manifold—without requiring any geometric prior. In this framework, each node $v_i$ is endowed with both a global representation $\boldsymbol{z}_i$ and a local representation $\boldsymbol{x}_i$ residing in its corresponding fiber.
% Let $\phi_i: E_{v_i}\rightarrow \{v_i\} \times \mathbb{R}^r$ be a local trivialization restrict to fiber $E_{v_i}$.   
% Local representation $\boldsymbol{x}$ is mapped from $E_{v}$ to a vector in $\mathbb{R}^r$. 
%  A real vector bundle over $\mathcal{G}$ equips each edge $(i, j)\in \mathcal E$ a linear isomorphism $ g_{ji}=\phi_j \circ \phi_i^{-1}: E_{v_i}\rightarrow E_{v_j}$, defining a discrete connection of the total space $E$.
Our theoretical analysis is constructed within this representation framework.

\paragraph{Connection to Intrinsic Geometry}
We show that the (behavioral) invariant substructure is related to the flatness of its intrinsic geometry.  
Under vector bundle representation, the invariant substructure $\mathcal{T}$ can be formalized with parallel transport $\boldsymbol{g}$:
 $\mathcal{T}$ is said to be  invariant if $\left\| \boldsymbol{x}_i - \boldsymbol{g}_{ij} (\boldsymbol{x}_j) \right\|\leq \varepsilon$
holds for any $\varepsilon \rightarrow 0$ and $v_j \in \mathcal{N}_i$.
We analyze the geometry of $\mathcal{T}$ using the concept of holonomy in Riemannian geometry, describing how a vector changes when traversing along a closed curve.
In a graph, the holonomy can be composed by parallel transport $\boldsymbol{g}_{ij}$ along a loop.
\begin{theorem}[\textbf{Invariance \& Intrinsic Flatness~\cite{berwickevans2023discretevectorbundle}, Appendix~\ref{sec:proof.intrinsic flatness}}]
    A discrete connection of $E$ is flat if and only if holonomy around every $2$-simplex is the identity. If there is no $2$-simplex, the flat condition is automatically satisfied.
\label{thm:intrinsic flatness}
\end{theorem}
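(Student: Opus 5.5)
The argument works directly from the discrete definitions of Berwick-Evans et al. We model the graph (with its clique or triangle complex) as a simplicial complex $K$. A discrete vector bundle assigns to each vertex $v_i$ a fiber $E_{v_i}\cong\mathbb{R}^r$ via a local trivialization $\phi_i$. A discrete connection assigns to each oriented edge $(i,j)$ a linear isomorphism $\boldsymbol{g}_{ij}:E_{v_j}\to E_{v_i}$ with $\boldsymbol{g}_{ji}=\boldsymbol{g}_{ij}^{-1}$. The holonomy along an edge loop $\gamma=(i_0,i_1,\dots,i_k=i_0)$ is the composite $\mathrm{Hol}_\gamma=\boldsymbol{g}_{i_0i_1}\cdots\boldsymbol{g}_{i_{k-1}i_k}\in GL(E_{v_{i_0}})$.

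Flatness is defined as in that reference. The connection is flat if it is locally trivializable: on the open star of each simplex, one can choose frames of the fibers in which every $\boldsymbol{g}_{ij}$ becomes the identity. Equivalently, the holonomy of any loop depends only on its homotopy class rel the 2-skeleton, i.e. the curvature is trivial. The proof then has two directions plus the degenerate case.

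For ($\Rightarrow$), take a 2-simplex $\sigma=[i,j,k]$. Its boundary loop $i\to j\to k\to i$ lies in the star of $\sigma$. There the local trivialization gives frames $\psi_i,\psi_j,\psi_k$ with $\boldsymbol{g}_{ij}=\psi_i\psi_j^{-1}$, and similarly for the other edges. The composite $\boldsymbol{g}_{ij}\boldsymbol{g}_{jk}\boldsymbol{g}_{ki}$ then telescopes to $\psi_i\psi_i^{-1}=\mathrm{id}$.

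For ($\Leftarrow$), fix a simplex $\tau$ and a base vertex $i_0$ of $\tau$, and take any vertex $v$ in its star. Define the frame at $v$ by transporting the fiber $E_{v_{i_0}}$ to $v$ along an edge path inside the star. We must check this is path independent. Any two such paths differ by a loop in the star. The star is a cone over $\tau$, so such a loop is homotopic to the trivial loop through moves across 2-simplices, each elementary move replacing one edge by the other two sides of a triangle. Holonomy around each triangle is the identity, so each move preserves the composite transport. Hence the transported frame is well defined, and in these frames every edge map in the star is the identity. This is exactly local triviality.

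The main obstacle is this homotopy step. One must argue that loops in the star (or, globally, null-homotopic loops) are products of conjugates of triangle boundaries. This is the combinatorial Seifert–van Kampen / edge-path group presentation, where $\pi_1$ of a simplicial complex is generated by edge loops modulo triangle relations. I would cite that presentation directly rather than reprove it.

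For the final claim, suppose $K$ has no 2-simplex. Then the condition "holonomy around every 2-simplex is the identity" is vacuous, so by the ($\Leftarrow$) direction the connection is flat. Concretely, the star of any simplex is then a tree-like cone with no loops, so the frames defined by path transport are automatically well defined. Nontrivial holonomy around longer cycles is then a global, topological obstruction rather than curvature, consistent with the definition of flatness.
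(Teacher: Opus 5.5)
Your argument is correct once ``star'' is read as the \emph{closed} star, but it goes through a different definition of flatness than the paper's. The paper defines a flat connection as one whose parallel transport depends only on the simple-homotopy class of the path. Simple homotopies are generated by trading $v'\to v\to v''$ for $v'\to v''$ across a $2$-simplex. Both directions then reduce to one observation: such a move preserves transport iff $\boldsymbol{g}_{ik}=\boldsymbol{g}_{ij}\boldsymbol{g}_{jk}$, i.e.\ iff the triangle holonomy is trivial. The no-$2$-simplex case holds because there are no moves. You instead define flatness as local gauge-triviality on stars. That same observation therefore appears only as a sub-step of your ($\Leftarrow$) direction, and on top of it you build explicit frames by transport from a base vertex. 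Your route gives a stronger, more geometric conclusion: a discrete Frobenius-type statement that trivial triangle holonomy yields local frames in which every $\boldsymbol{g}_{ij}$ is the identity. This matches the smooth notion and the paper's own reading of flatness as $\mathbf{P}^{(i,j)}\approx\mathbf{I}$. The paper's route buys brevity and a directly global statement, namely path-class invariance on the whole complex. Two repairs are needed. First, with the literal open star your definition is vacuous: the open star of a vertex contains only the edges at that vertex, and the open star of $\sigma$ contains none of its boundary edges. So ($\Rightarrow$) genuinely needs the closed star, which is also what makes the star a cone with apex $i_0$. Second, the edge-path-group presentation you flag as the main obstacle is unnecessary. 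In the closed star of $\tau$, every vertex $v\neq i_0$ is adjacent to $i_0$, and every edge $[a,b]$ spans a $2$-simplex with $i_0$, since all of them lie in one simplex containing $\tau$. Setting $\psi_v=\boldsymbol{g}_{v i_0}$ and using the triangle relation $\boldsymbol{g}_{ba}=\boldsymbol{g}_{b i_0}\boldsymbol{g}_{i_0 a}$ gives $\psi_b^{-1}\boldsymbol{g}_{ba}\psi_a=\mathrm{id}$ directly. Your stated equivalence of the two definitions is asserted rather than proved. It does follow from your two directions together with the elementary-move observation, so nothing substantive is missing.
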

It claims that the intrinsic geometry of vector bundle $E$ over an invariant $\mathcal{T}$ is flat.
For any $2$-simplex $(i, j, k)$, we have $\boldsymbol{g}_{ik}\boldsymbol{g}_{kj}\boldsymbol{g}_{ji} = \mathrm{id}_{E_{v_i}}$ where $\mathrm{id}$ denotes the identity mapping.

\begin{figure*}[t]
\centering
    \includegraphics[width=1\linewidth]{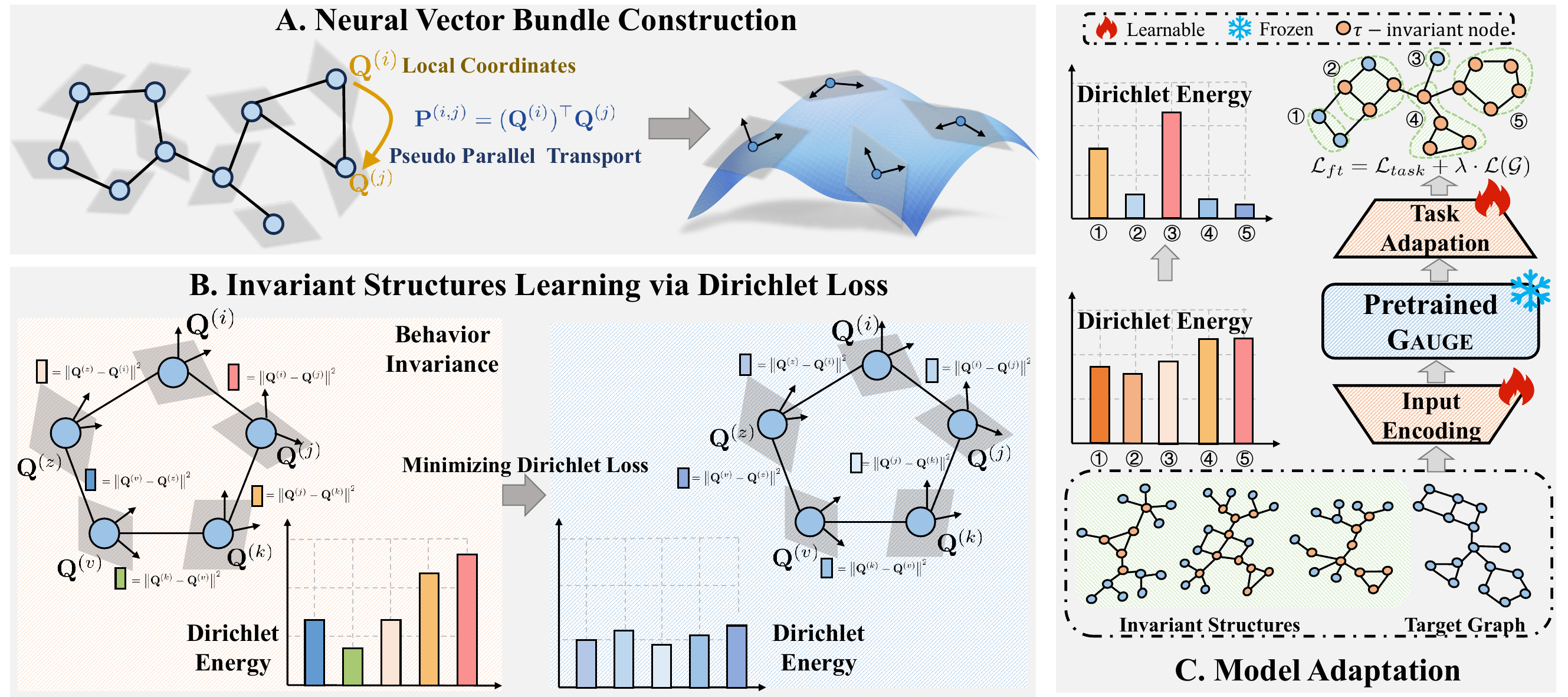}
        \caption{Overall Framework. \textbf{\textsc{Gauge}} learns the intrinsic graph geometry while generating node representations, improving model expressiveness and knowledge transfer with invariant substructures.}
    \label{fig:overall}
\end{figure*}

\subsection{Intrinsic Graph Geometry Learning}
% Although vector bundle provides an elegant framework to study the manifold, the elements of each fiber are abstract vectors.
% In this part, we propose a deep learning implementation of the vector bundle theory, allowing for  construct the vector bundle structure over a graph in a data-driven fashion.

While the vector bundle offers a mathematical framework for studying manifold geometry, the elements in each fiber are abstract vectors. To implement this theory in practice, we propose a data-driven deep learning framework for constructing vector bundles, namely, Neural Vector Bundle.
Its key components include local trivialization for the fibers and pseudo parallel transport.

\paragraph{Local Coordinates}
First, we parse the local coordinate of each fiber in computational valid $\mathbb{R}^r$ from the global representation in $\mathbb{R}^d$.
The key is  \textbf{local trivialization}.
Concretely, for each node $v_i$, the local trivialization $\phi_i$ restricted to its fiber $E_i$  defines a linear isomorphism $\phi_i: E_{v_i}\rightarrow \{v_i\} \times \mathbb{R}^r$, mapping to $\mathbb{R}^r$ space.
Thus, given a graph encoder $\Phi$,  we instantiate a parametric local trivialization as  $\phi_i = \psi_i \circ \Phi$, 
where $\psi_i$ is represented by a matrix $\left(\mathbf{Q}^{(i)}\right)^\top\in \mathbb{R}^{r\times d}$ with $(\mathbf{Q}^{(i)})^\top\mathbf{Q}^{(i)}=\mathbf{I}$. 
We denote the collection of trivializations over open neighborhoods of all nodes by $\mathcal{Q}$. For the subset of $\mathcal{Q}$ over $\mathcal{N}_i$, we denote $\mathcal{Q}^{(i)}=\left[\mathrm{Vec}(\mathbf{Q}^{(j)})\right]^\top_{j \in \mathcal{N}_i}$.

Now we introduce a neural implementation of $\mathcal{Q}$.
Our strategy is to first obtain a set of vectors spanning the fiber and then determine an orthonormal basis, giving the matrix form of local trivialization. 
First, we construct disjoint subspaces to capture the diverse ways in which nodes interact with their neighborhoods.
Without loss of generality, 
for the input matrix of layer $l$, 
we decompose $\mathbf{Z}^{l-1} \in \mathbb{R}^{N \times d}$ into $r$ disjoint subspaces via learnable projections, yielding $\{\boldsymbol{z}_i^{(h),l-1}\}_{h=1}^r$.
Next, we adopt the multi-head attention, and the fiber vector in each head $h$ is derived as follows, 
\begin{align}
    \label{eq. multi-path}
    &\hat{\boldsymbol{q}}^{(h),l}_i = \boldsymbol{z}_i^{l-1} - \frac{1}{\sum_{j\in\mathcal{N}_i}\alpha_{ij}^{(h),l}}\sum_{j\in \mathcal{N}_i}\alpha_{ij}^{(h),l}\boldsymbol{z}^{(h),l-1}_j \\
    &\alpha_{ij}^{(h),l}=\frac{\exp(f^{(h)}(\boldsymbol{z}^{(h),l-1}_i\|\boldsymbol{z}^{(h),l-1}_j)/\tau)}{\sum_{k=1}^{r}\exp(f^{(h)}(\boldsymbol{z}^{(h),l-1}_i\|\boldsymbol{z}^{(h),l-1}_j)/\tau)},
\end{align}
where $f^{(h)}: \mathbb{R}^{2d}\rightarrow \mathbb{R}$ is a nonlinear learnable function, and $\tau$ controls attention sharpness.
Finally, at each fiber $E_{v_i}$, we assemble the $H$ residual vectors into a matrix and apply QR decomposition to obtain an orthonormal basis:
\begin{align}
     \mathbf{Q}^{(i),l}= \mathrm{QR}\left( \left[ \hat{\boldsymbol{q}}^{(1),l}_i, \cdot, \hat{\boldsymbol{q}}^{(r),l}_i\right] \right).
     \label{eq. qr}
\end{align}
This yields a local map $\mathbf{Q}^{(i),l}: E_{v_i}\cong_{\phi_i}\mathbb{R}^r \rightarrow \mathbb{R}^d $, whose transpose $(\mathbf{Q}^{(i),l})^\top$ is a matrix form of the local trivialization over $\mathcal{N}_i$. 
%By construction, $\mathbf{Q}^{(i),l}$ encodes multi-directional structural knowledge, providing the foundation for characteristic structure recognition.

% , we apply message passing neural network (MPNN) to compute a residual vector that captures the discrepancy between a node and its neighborhood under the $h$-th semantic view:

\paragraph{Parallel Transport}
Then, we specify the map between different fibers.
The canonical map in Riemannian geometry is called parallel transport, which is a vector field $V(t)$ along a smooth curve $\gamma(t): [a,b] \to \mathcal{M}$ satisfying $\nabla_{\dot{\gamma}(t)} V(t) = 0$ under a certain connection $\nabla$. 
In our context, a real vector bundle over $\mathcal{G}$ equips each edge $(i, j)\in \mathcal E$ a linear isomorphism $ g_{ji}=\phi_j \circ \phi_i^{-1}: E_{v_i}\rightarrow E_{v_j}$, endowing a discrete connection of the total space $E$ and parallel transport.
Thus, we obtain the parallel transport between fibers along edges.
\begin{definition}[\textbf{Pseudo Parallel Transport along Edge}]
Given an edge $(v_i, v_j) \in \mathcal{E}$ and the fibers of its two endpoints $E_{v_i}$ and $E_{v_j}$, the pseudo parallel transport $\mathbf{P}^{(i, j)}: E_{v_j} \rightarrow E_{v_i}$ is defined as
\begin{align}
    \mathbf{P}^{(i, j)} = (\mathbf{Q}^{(i)})^\top\mathbf{Q}^{(j)}.
    \vspace{-0.1in}
\end{align}
\label{def. parallel transport}
\vspace{-0.3in}
\end{definition}
Accordingly, $\mathbf{P}^{(i, j)}$ captures the geometric torsion when a vector in fiber $E_{v_i}$ moves to $E_{v_j}$ along an edge. 

\section{\textsc{Gauge}: Improving Graph Pretraining with Intrinsic Geometry}

% We propose a pre-trainable neural architecture, named \textsc{Gauge}, whose overall architecture is sketched in Fig.~\ref{fig:overall}.
% Its novelty lies in introducing intrinsic graph geometry to graph pretraining for more accurate knowledge transfer, thereby enabling better expressiveness in downstream tasks. 
% In addition, we discover that Dirichlet energy serves as  a measure of behavioral invariance, and thus formulate a Dirichlet loss to jointly learn the representation and invariant substructures. 
% It also serves as a measure of transfer effort, indicating transferable substructures whose the knowledge lebenefits learning on target graphs.

We propose a pretrainable \textsc{Gauge}, sketched in Fig.~\ref{fig:overall}.
The novelty lies in its intrinsic geometry learning together with node representation.
In brief, we progressively construct the neural vector bundle by flattening geometrically compatible fibers, as in Fig.~\ref{fig:overall}A, and simultaneously identify invariant substructures via minimizing the proposed Dirichlet loss, as in Fig.~\ref{fig:overall}B, which is proven to measure behavioral invariance.
% During model adaptation, as in Fig. ~\ref{fig:overall}C, we froze the pretrained parameters and finetune the linear layers for input encoding and task adaptation.
During adaptation as in Fig.~\ref{fig:overall}C, the pretrained parameters are kept frozen, and only the two linear layers for input encoding and task adaptation are finetuned.

\subsection{Neural Architecture}
% \textsc{Gauge} is designed to  construct a neural vector bundle for learning the intrinsic graph geometry in  terms  of invariant substructures.
% %, where the local coordinates of consistent fibers are merged. 
% It adopts a layer-wise architecture, where the progressive transforms of 

% In \textsc{Gauge}, each layer updates the node representation $\mathbf{Z}$, considering the intrinsic geometry, while flattening the vector bundle according to geometric compatibility, characterized by local coordinates of its fibers $\mathbf{Q}$.
% The updating rules of $\mathbf{Q}$ and $\mathbf{Z}$ are introduced as follows.

In each layer of \textsc{Gauge}, node representations $\mathbf{Z}$ are updated in account of the intrinsic geometry. Concurrently, a gated aggregation is designed to flatten the vector bundle by aligning geometrically compatible fibers, characterized by their local coordinates $\mathbf{Q}$. The update rules for $\mathbf{Q}$ and $\mathbf{Z}$ are detailed below.

\paragraph{Updating $\mathbf{Q}$}
Aiming at  neural vector bundle construction, we propose to merge the fibers' local coordinates  based on their geometric compatibility.
To achieve this, we introduce a gated flattening mechanism that selectively aggregates neighboring fibers according to local coordinates.
By Definition \ref{def. parallel transport}, the compatibility between neighboring fibers is measured by the pseudo parallel transport $\mathbf{P}$.
Specifically, for each edge $(i, j)$, the gating mechanism is formulated with $\mathbf{P}^{(i, j)}$ as follows,
\begin{align}
    \label{eq. consist metric}
    g_{ij} &= \frac{\exp(-\mathrm{Tr}(\mathbf{I} _r - \mathbf{P}^{(i, j)})/\tau)}{\sum_{k\in\mathcal{N}_i}\exp(-\mathrm{Tr}(\mathbf{I} _r - \mathbf{P}^{(i, k)})/\tau)} \in (0, 1),
\end{align}
where $\beta > 0$ is a sensitive scaler, and $b \in \mathbb{R}$ is a bias.
Neighboring fibers with high compatibility yields large $g_{ij}$, so that they are selected for aggregations.
%encouraging aggregation while low consistency suppresses it.
Thus the local coordinates are updated as follows:
\begin{align}
    \hat{\mathbf{Q}}^{(i),k} &= (1 - \gamma)\mathbf{Q}^{(i), k-1} +  \gamma\sum_{j\in\mathcal{N}_i}g_{ij}\mathbf{Q}^{(j), k-1}, \\
    \mathbf{Q}^{(i),k} &= \mathrm{QR}\left( \hat{\mathbf{Q}}^{(i),k} \right),
    \label{eq. flatten qr}
\end{align}
where $\gamma \in (0, 1)$ is the momentum coefficient.
The updating rule of $\mathbf{Q}$ iteratively refines local coordinates by smoothing geometrically consistent neighborhoods.
%Interesting, we will show that this process indeed decreases the Dirichlet energy in the Sec \ref{sec:Dirichlet Energy}.

% As we learn invariant structural knowledge layer-by-layer, we hope the semantic noise can be filtered with $E_{v_i}^\perp$, hence, we update the global embedding by the inverse of local trivialization only for the components in fiber $E_{v_i}$ as:

\paragraph{Updating $\mathbf{Z}$}
Different from the conventional message passing that directly updates node representation in $\mathbb R^d$ space, 
we consider the vector bundle for updating $\mathbf{Z}$ where we leverage the local trivialization for recovering its fiber $E_{v_i}$ and filter noise with the orthogonal complement of its fiber $E_{v_i}^\perp$. 
Additionally, we apply residual connections to allow for the network to go  deeper.
Thus the global node representation is updated as follows, 
\begin{align}
    &\tilde{\boldsymbol{z}}^{l-1}_j = \mathbf{Q}^{(j),l-1}\left(\mathbf{Q}^{(j),l-1}\right)^\top \boldsymbol{z}_j^{l-1},\\
    &\boldsymbol{z}_i^{l} = \frac{1}{|\mathcal{N}_i|}\sum_{j\in\mathcal{N}_i}\mathbf{W}^{l}\tilde{\boldsymbol{z}}^{l-1}_j + \psi^l(\boldsymbol{z}_i^{l-1}), 
    \label{eq. update embedding}
\end{align}
where $\mathbf{W}^l \in \mathbb{R}^{d \times d}$ is a learnable matrix, and $\psi^l$ is a learnable nonlinear function.

% This design encourages locally-consistency so that when parallel transport approximates to identity map, the neighborhood embeddings $\boldsymbol{z}_j$ can be interpreted more predictable under trivialization $\phi_i$.

\subsection{Dirichlet Energy and Transferablity} \label{sec:Dirichlet Energy}

% The above theorem formalizes this intuition, showing that the local predictive error is indeed upper-bounded by the Dirichlet energy of the local trivialization. It provides a \textbf{design principle for graph foundation models}. Specifically, it implies that if a model is designed to minimize the Dirichlet energy of its local trivializations, it will automatically learn to identify regions with low predictive error, ensuring the emergence of characteristic structures. 

As elaborated in Sec. \ref{sec: transfer2geo}, the invariance is a key to transferablity, yet measuring invariance is challenging.
In this paper, we introduce the concept of Dirichlet energy over vector bundle~\cite{bamberger2025bundle}.
Specifically, 
given a $\mathcal{G} = (\mathcal{V}, \mathcal{E})$ equipped with a vector bundle structure $E$, it is defined as
$ \mathcal{E}_{\mathrm{Dir}}[E] = \frac{1}{|\mathcal{E}|} \sum\nolimits_{(i,j) \in \mathcal{E}} \left\| \boldsymbol{x}_i - \mathbf{P}^{(i,j)} \boldsymbol{x}_j \right\|^2$,
where $\mathbf{P}^{(i,j)}$ denotes the pseudo parallel transport map from node $v_j$ to node $v_i$ along edge $(i,j)$. 
We show that the Dirichlet energy gives an upper bound of invariance in terms of local trivialization, where local representation of a node can be inferred from others in the invariant substructure.
That is, minimizing Dirichlet energy is sufficient to learn behavioral invariant substructures. 
\begin{theorem}[\textbf{Local Invariance Measure, Appendix~\ref{sec:proof.flatness}.}]
    Denote local trivializations at each node $v_i$ as $\mathcal{Q}(v_i) = \mathbf{Q}^{(i)}$. 
    Given node representation $\mathbf{Z} = \{\boldsymbol{z}_i\}_{i \in \mathcal{V}}$ satisfying $\max_{i} \|\boldsymbol{z}_i\| \leq M$, the following \textbf{upper bound} holds,
    \begin{equation}
            \mathcal{E}_{\mathrm{Dir}}[E] \leq (4M^2 + 1)(\mathcal{E}_{\mathrm{Dir}}[\mathcal{Q}] + \mathcal{E}_{\mathrm{Dir}}[\mathbf{Z}]),
    \end{equation}
    where $\mathcal{E}_{\mathrm{Dir}}[\mathcal{Q}]$ and $\mathcal{E}_{\mathrm{Dir}}[\mathbf{Z}]$ denote the Dirichlet energies associated with the local trivializations and the node embeddings, respectively.
    \label{theorem. flatness}
\end{theorem}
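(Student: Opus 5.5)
Our plan is to fix the missing definitions in the most natural way and then do a per-edge expansion. We take the local representation as $\boldsymbol{x}_i = (\mathbf{Q}^{(i)})^\top \boldsymbol{z}_i \in \mathbb{R}^r$. We take the Dirichlet energies as
\[
\mathcal{E}_{\mathrm{Dir}}[\mathcal{Q}] = \frac{1}{|\mathcal{E}|}\sum_{(i,j)\in\mathcal{E}} \|\mathbf{Q}^{(i)}-\mathbf{Q}^{(j)}\|_F^2,
\qquad
\mathcal{E}_{\mathrm{Dir}}[\mathbf{Z}] = \frac{1}{|\mathcal{E}|}\sum_{(i,j)\in\mathcal{E}} \|\boldsymbol{z}_i-\boldsymbol{z}_j\|^2 .
\]
By Definition~\ref{def. parallel transport},
\[
\boldsymbol{x}_i - \mathbf{P}^{(i,j)}\boldsymbol{x}_j = (\mathbf{Q}^{(i)})^\top\boldsymbol{z}_i - (\mathbf{Q}^{(i)})^\top\mathbf{Q}^{(j)}(\mathbf{Q}^{(j)})^\top\boldsymbol{z}_j .
\]
Both sides of the claimed inequality are averages over edges, so it suffices to prove the per-edge bound
\[
\|\boldsymbol{x}_i - \mathbf{P}^{(i,j)}\boldsymbol{x}_j\|^2 \le (4M^2+1)\bigl(\|\mathbf{Q}^{(i)}-\mathbf{Q}^{(j)}\|_F^2 + \|\boldsymbol{z}_i-\boldsymbol{z}_j\|^2\bigr)
\]
and then sum.

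\textbf{Step 1.} We factor out $(\mathbf{Q}^{(i)})^\top$, which has operator norm $1$ because its rows are orthonormal. This gives
\[
\|\boldsymbol{x}_i - \mathbf{P}^{(i,j)}\boldsymbol{x}_j\| \le \|\boldsymbol{z}_i - \Pi_j \boldsymbol{z}_j\|,
\qquad \Pi_j = \mathbf{Q}^{(j)}(\mathbf{Q}^{(j)})^\top .
\]

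\textbf{Step 2.} The dangerous term is $(\mathbf{I}-\Pi_j)\boldsymbol{z}_j$, the part of $\boldsymbol{z}_j$ orthogonal to fiber $j$. To avoid it, we instead decompose the original expression as
\[
(\mathbf{Q}^{(i)})^\top(\boldsymbol{z}_i-\boldsymbol{z}_j) \;+\; (\mathbf{Q}^{(i)})^\top(\mathbf{I}-\Pi_j)\boldsymbol{z}_j .
\]
Since $(\mathbf{Q}^{(j)})^\top(\mathbf{I}-\Pi_j)=0$, the second term equals
\[
(\mathbf{Q}^{(i)}-\mathbf{Q}^{(j)})^\top(\mathbf{I}-\Pi_j)\boldsymbol{z}_j .
\]
Its norm is at most $\|\mathbf{Q}^{(i)}-\mathbf{Q}^{(j)}\|_F\, M$, using that $\mathbf{I}-\Pi_j$ is a contraction. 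The first term has norm at most $\|\boldsymbol{z}_i-\boldsymbol{z}_j\|$.

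\textbf{Step 3.} We combine the two terms with $(a+b)^2\le 2a^2+2b^2$, which gives
\[
\|\boldsymbol{x}_i - \mathbf{P}^{(i,j)}\boldsymbol{x}_j\|^2 \le 2\|\boldsymbol{z}_i-\boldsymbol{z}_j\|^2 + 2M^2\|\mathbf{Q}^{(i)}-\mathbf{Q}^{(j)}\|_F^2 .
\]
The coefficients here are $2$ and $2M^2$, but the target has $4M^2+1$ on both terms. So we need slack: $2M^2\le 4M^2+1$ always holds, but $2\le 4M^2+1$ needs $M\ge 1/2$. That is not guaranteed.

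\textbf{Main obstacle.} The hard step is matching the stated constant uniformly in $M$, in particular when $M<1/2$. The first thing we try is a weighted Young inequality, $(a+b)^2\le(1+\epsilon)a^2+(1+1/\epsilon)b^2$, with $a=\|\boldsymbol{z}_i-\boldsymbol{z}_j\|$ and $b=M\|\mathbf{Q}^{(i)}-\mathbf{Q}^{(j)}\|_F$. Taking $\epsilon=4M^2$ gives coefficient $1+4M^2$ on $\|\boldsymbol{z}_i-\boldsymbol{z}_j\|^2$. The coefficient on the $\mathbf{Q}$ term becomes
\[
M^2\Bigl(1+\tfrac{1}{4M^2}\Bigr) = M^2+\tfrac14 \le 4M^2+1 .
\]
Both coefficients are then dominated by $4M^2+1$ for every $M$. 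Summing over edges and dividing by $|\mathcal{E}|$ yields the theorem.

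If the paper's convention for $\boldsymbol{x}_i$ or for $\mathcal{E}_{\mathrm{Dir}}[\mathcal{Q}]$ differs, for instance if it uses $\|\mathbf{I}_r-\mathbf{P}^{(i,j)}\|$, the same decomposition goes through. In that case we would use $\|\mathbf{Q}^{(i)}-\mathbf{Q}^{(j)}\|_F^2 = 2\,\mathrm{Tr}(\mathbf{I}_r-\mathbf{P}^{(i,j)})$, which holds because the Frobenius cross term is $2\,\mathrm{Tr}(\mathbf{P}^{(i,j)})$. This conversion would introduce at most the factor of $4$ appearing in the bound.
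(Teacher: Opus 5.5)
Your proof is correct. Your conventions ($\boldsymbol{x}_i=(\mathbf{Q}^{(i)})^\top\boldsymbol{z}_i$, with Frobenius and Euclidean edge energies) are exactly the ones fixed in the paper's notation table, so your closing hedge is unnecessary. Your route differs from the paper's in the decomposition.

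The paper writes $\boldsymbol{x}_i-\mathbf{P}^{(i,j)}\boldsymbol{x}_j=(\mathbf{Q}^{(i)})^\top(\mathbf{Q}^{(i)}\boldsymbol{x}_i-\mathbf{Q}^{(j)}\boldsymbol{x}_j)$ and telescopes twice:
\begin{itemize}
\item Inserting $\mathbf{Q}^{(j)}\boldsymbol{x}_i$ gives $\|\mathbf{Q}^{(i)}-\mathbf{Q}^{(j)}\|\,\|\boldsymbol{x}_i\|+\|\boldsymbol{x}_i-\boldsymbol{x}_j\|$.
\item Inserting $(\mathbf{Q}^{(j)})^\top\boldsymbol{z}_i$ then bounds $\|\boldsymbol{x}_i-\boldsymbol{x}_j\|$.
\end{itemize}
The total is $2M\|\mathbf{Q}^{(i)}-\mathbf{Q}^{(j)}\|+\|\boldsymbol{z}_i-\boldsymbol{z}_j\|$. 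Squaring via Cauchy--Schwarz with weights $(2M,1)$ then produces exactly $4M^2+1$.

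Your single split, driven by $(\mathbf{Q}^{(j)})^\top(\mathbf{I}-\Pi_j)=0$, uses one triangle inequality. It yields the sharper per-edge bound $\|\boldsymbol{z}_i-\boldsymbol{z}_j\|+M\|\mathbf{Q}^{(i)}-\mathbf{Q}^{(j)}\|_F$. It also makes visible that the component of $\boldsymbol{z}_j$ normal to fiber $j$ enters only through the frame difference.

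As a consequence, your ``main obstacle'' is an artifact of squaring with $(a+b)^2\le 2a^2+2b^2$. Plain Cauchy--Schwarz, $(a+Mc)^2\le(1+M^2)(a^2+c^2)$, gives the constant $1+M^2$, which is strictly better than the stated $4M^2+1$. Cauchy--Schwarz also avoids the degenerate choice $\epsilon=4M^2=0$ when $M=0$. That case is harmless, since then every $\boldsymbol{x}_i=0$, but your weighted Young step does not cover it as written.

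One side remark. The hedge's claim that an energy built from $\|\mathbf{I}_r-\mathbf{P}^{(i,j)}\|$ would cost at most a factor of $4$ is not justified. Take frames with $\mathbf{P}^{(i,j)}=(1-t)\mathbf{I}_r$, which is realizable when $d\ge 2r$. Then $\mathrm{Tr}(\mathbf{I}_r-\mathbf{P}^{(i,j)})=rt$ while $\|\mathbf{I}_r-\mathbf{P}^{(i,j)}\|_F^2=rt^2$. So no fixed constant converts $\|\mathbf{Q}^{(i)}-\mathbf{Q}^{(j)}\|_F^2=2\,\mathrm{Tr}(\mathbf{I}_r-\mathbf{P}^{(i,j)})$ into the squared-Frobenius version.
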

\begin{theorem}[\textbf{Monotonous Flattening, Appendix~\ref{sec:proof.gated flatten}.}]
    Given a graph $\mathcal{G}=(\mathcal{V}, \mathcal{E})$ and the set of local trivializations $\mathcal{Q}^{k}$ over $\mathcal{G}$ in the $k$-th layer of gated  flattening with $\gamma \in (0,1)$, the layer-wise Dirichlet energy is nonincreasing,
      $\mathcal{E}_{\mathrm{Dir}}[\mathcal{Q}^{k}] \leq \mathcal{E}_{\mathrm{Dir}}[\mathcal{Q}^{k-1}]$, 
and the final local trivializations satisfies the following inequality, 
    \begin{equation}
        \mathcal{E}_{\mathrm{Dir}}[\mathcal{Q}^{k}] \leq (1 - C\cdot \gamma)^K \mathcal{E}_{\mathrm{Dir}}[\mathcal{Q}^{0}],
        \vspace{-0.05in}
    \end{equation}
    where $C = 1-\lambda^2$, and $\lambda$ is the maximal singular value of matrix $\mathbf{G}=[g_{ij}]$.
    \label{theorem. gated flatten}
\end{theorem}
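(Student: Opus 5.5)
We read the flattening update as a linear diffusion step on the stacked coordinates and track the Dirichlet energy through the quadratic form of a graph Laplacian. Write $\mathcal{E}_{\mathrm{Dir}}[\mathcal{Q}^k] = \frac{1}{|\mathcal{E}|}\sum_{(i,j)\in\mathcal{E}} \|\mathbf{Q}^{(i),k}-\mathbf{Q}^{(j),k}\|_F^2$. Stack the vectorized coordinates into $\mathbf{X}^k\in\mathbb{R}^{N\times dr}$, with row $i$ equal to $\mathrm{Vec}(\mathbf{Q}^{(i),k})$. Then the energy is $\frac{1}{|\mathcal{E}|}\mathrm{Tr}\big((\mathbf{X}^k)^\top \mathbf{L}\mathbf{X}^k\big)$ for the combinatorial Laplacian $\mathbf{L}$.

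Before the QR step, the update reads $\hat{\mathbf{X}}^k = \mathbf{A}\mathbf{X}^{k-1}$ with $\mathbf{A} = (1-\gamma)\mathbf{I} + \gamma \mathbf{G}$. Here $\mathbf{G}=[g_{ij}]$ is row-stochastic, with $g_{ij}=0$ off the edges. We treat $\mathbf{G}$ as fixed within a layer and, for the spectral argument, symmetric. Without symmetry we would instead work with the singular value $\lambda$ directly, which is what the statement uses.

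The argument has three steps.

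\textbf{Step 1: linear contraction.} Split $\mathbf{X}^{k-1}$ into its consensus component, which lies in the span of $\mathbf{1}$ and has zero energy because $\mathbf{L}\mathbf{1}=0$ and $\mathbf{G}\mathbf{1}=\mathbf{1}$, and its non-consensus component. On the complement of $\mathbf{1}$, the operator norm of $\mathbf{G}$ is at most $\lambda<1$, reading the statement's $\lambda$ as the largest singular value on that complement. Hence $\|\mathbf{A}\|\le (1-\gamma)+\gamma\lambda$ there. If $\mathbf{G}$ commutes with $\mathbf{L}$ (exactly when $\mathbf{G}=\mathbf{I}-\mathbf{L}/\deg$ for regular graphs; otherwise we compare in the $\mathbf{L}$-seminorm), we get
\begin{equation*}
\mathcal{E}_{\mathrm{Dir}}[\hat{\mathcal{Q}}^k]\le \big(1-\gamma(1-\lambda)\big)^2\,\mathcal{E}_{\mathrm{Dir}}[\mathcal{Q}^{k-1}].
\end{equation*}
This factor is at most $1-\gamma(1-\lambda^2)=1-C\gamma$. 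One way to see it: Jensen on the convex combination gives $\|\mathbf{A}v\|^2\le(1-\gamma)\|v\|^2+\gamma\|\mathbf{G}v\|^2\le(1-\gamma+\gamma\lambda^2)\|v\|^2$. This convexity route is the one I would use, since it yields the constant $C=1-\lambda^2$ directly.

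\textbf{Step 2: the QR re-orthonormalization.} This step must not increase the energy; I expect it to be the main obstacle, because QR is nonlinear and not obviously Lipschitz with constant $1$. I would argue that the row averages of orthonormal frames have singular values in $[\sigma_{\min},1]$. Then use the perturbation bound for the polar/QR factor, $\|\mathrm{QR}(\mathbf{A})-\mathrm{QR}(\mathbf{B})\|_F\le c\|\mathbf{A}-\mathbf{B}\|_F/\sigma_{\min}$. If the constant cannot be made $\le 1$, I would fall back on measuring the energy on the projectors $\mathbf{Q}\mathbf{Q}^\top$. These are unchanged by the choice of basis, and projection onto the Grassmannian is nonexpansive near the manifold when neighboring frames are close. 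Either way, a factor that can be absorbed into $C$ suffices.

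\textbf{Step 3: iteration.} Monotonicity follows from Steps 1 and 2, since $1-C\gamma\le 1$. Iterating over $K$ layers gives $\mathcal{E}_{\mathrm{Dir}}[\mathcal{Q}^K]\le(1-C\gamma)^K\mathcal{E}_{\mathrm{Dir}}[\mathcal{Q}^0]$. Because $\mathbf{G}$ may change from layer to layer, we take $\lambda$ to be the maximum over all layers.
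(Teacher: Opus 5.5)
\textbf{There is a genuine gap in each of Steps 1 and 2, and the statement fails in general.}

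\textbf{Step 1.} This step follows the same template as the paper's proof: Jensen on the momentum combination, then a contraction $\mathcal{E}_{\mathrm{Dir}}[\mathbf{G}\mathbf{X}]\le\lambda^2\,\mathcal{E}_{\mathrm{Dir}}[\mathbf{X}]$. It breaks at that contraction.

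Your Jensen line bounds the Euclidean norm $\|\mathbf{A}v\|$. The energy, however, is $\|\mathbf{B}v\|^2$ with $\mathbf{B}$ the oriented incidence matrix. What you actually need is $\|\mathbf{B}\mathbf{G}v\|\le\lambda\|\mathbf{B}v\|$, i.e.\ $\mathbf{G}^\top\mathbf{L}\mathbf{G}\preceq\lambda^2\mathbf{L}$. No singular-value information about $\mathbf{G}$, on $\mathbf{1}^\perp$ or otherwise, gives this unless $\mathbf{G}$ commutes with $\mathbf{L}$, and the softmax gates generically do not. Your phrase ``otherwise we compare in the $\mathbf{L}$-seminorm'' is exactly the missing step, and it is false.

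Here is a counterexample. Take the path $1$--$2$--$3$--$4$ with $r=1$ and unit vectors at angles $0,\,0.1,\,1,\,1.1$.
\begin{itemize}
\item With $i\notin\mathcal{N}_i$ and $\tau\to0$, the gates select the nearest neighbour. So $\mathbf{G}$ tends to the symmetric permutation swapping $1\leftrightarrow2$ and $3\leftrightarrow4$.
\item This matrix is row-stochastic and edge-supported, with all singular values equal to $1$. Hence $C=0$ under either reading of $\lambda$, and the theorem predicts no increase.
\item Yet at $\gamma=\tfrac12$ the edge sum $\sum\|\mathbf{Q}^{(i)}-\mathbf{Q}^{(j)}\|^2$ rises from about $0.78$ to about $0.92$, both before and after normalization.
\item For the scalar signal $(0,0.1,1,1.1)$ the edge sum becomes $0.83+0.28\gamma+0.12\gamma^2>0.83$ for every $\gamma\in(0,1)$.
\end{itemize}

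The paper's proof has the same hole. It assumes this spectral bound and then ``justifies'' it via $\|\mathbf{B}\mathbf{G}\mathbf{q}\|_F\le\|\mathbf{G}\|_2\|\mathbf{B}\mathbf{q}\|_F$, which relies on the same illegitimate commutation.

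You are right that the literal $\lambda=\sigma_{\max}(\mathbf{G})$ cannot be intended: $\mathbf{G}\mathbf{1}=\mathbf{1}$ forces $\lambda\ge1$ and hence $C\le0$. But restricting to $\mathbf{1}^\perp$ does not repair this, because that subspace is not even $\mathbf{G}$-invariant unless $\mathbf{G}$ is doubly stochastic.

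What can be proved is a conditional version. Define $\lambda$ as the least constant with $\mathbf{G}^\top\mathbf{L}\mathbf{G}\preceq\lambda^2\mathbf{L}$. Then Jensen in the seminorm, $\|\mathbf{B}\mathbf{A}v\|^2\le(1-\gamma)\|\mathbf{B}v\|^2+\gamma\|\mathbf{B}\mathbf{G}v\|^2$, gives the factor $1-\gamma(1-\lambda^2)$ immediately.

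\textbf{Step 2.} This step does not close either, and the paper offers no help here. Its proof silently identifies $\mathbf{Q}^{(i),k+1}$ with the unnormalized convex combination and never treats QR.

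Re-orthonormalization can increase the energy. For $r=1$ it is the map $a\mapsto a/\|a\|$, which stretches differences between short vectors. Convex combinations of orthonormal frames have all singular values at most $1$, so any perturbation constant of the form $c/\sigma_{\min}$ is at least $1$.

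A factor $\kappa>1$ cannot be ``absorbed into $C$'' without changing the stated constant $1-\lambda^2$, and possibly losing monotonicity. Switching to the projectors $\mathbf{Q}\mathbf{Q}^\top$ proves a statement about a different energy than $\frac{1}{|\mathcal{E}|}\sum\|\mathbf{Q}^{(i)}-\mathbf{Q}^{(j)}\|_F^2$.

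So your plan, and the paper's argument once its invalid justification is removed, supports only the following:
\begin{itemize}
\item the bound for the pre-QR linear iteration, under an explicit seminorm-contraction hypothesis; or
\item the bound for the post-QR iterates, only if you add a separate assumption that QR is nonexpansive on the relevant frames.
\end{itemize}
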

It claims that \textsc{Gauge} monotonously lowers the Dirichlet energy over the vector bundle.
In other words, it progressively flattens neighboring fibers that are geometrically compatible, and learns invariant substructures through their connection to geometric flatness (Theorem~\ref{thm:intrinsic flatness}).

% This result guarantees that, with sufficient layers, our model produces local trivializations that are increasingly aligned with the low-energy regime required for characteristic structure emergence.

% The ultimate goal of our pretraining model is to learn the ability that can recognize characteristic structures by minimizing the local structural prediction error defined in Eq. (\ref{eq. local predictive error}). To this end, we introduce the \textit{Characteristic Structure Predictive Loss}:
\subsection{A New Pretraining Loss: Dirichlet Loss}
In \textsc{Gauge}, we present a new pretraining loss, Dirichlet loss, accompanied by several theoretical guarantees. 
The Dirichlet loss is designed to reveal the intrinsic graph geometry.
%, flattening the connected region with compatible fibers.
In addition to representation learning, Dirichlet loss simultaneously characterizes the neural vector bundle, capturing behavioral invariance for  more accurate knowledge transfer.
%To enable self-supervised learning, we adopt a predictive loss over global representations, and the Dirichlet loss for each graph is formulated as follows,
For each graph, the Dirichlet loss  is formulated in a predictive  fashion as follows,
\begin{align}
    \resizebox{0.89\hsize}{!}{$
    \mathcal{L}(\mathcal{G})= \sum\limits_{i=1}^{N} \left\| \left(\mathbf{Q}^{(i)}\right)^\top\hat{\boldsymbol{z}}^{(0)}_i - \frac{1}{|\mathcal{N}_i|}\sum\limits_{j\in\mathcal{N}_i}\left(\mathbf{Q}^{(j)}\right)^\top \boldsymbol{z}^L_j\right\|_2^2,
    \label{eq. loss}
        $}
\end{align}
where $\hat{\boldsymbol{z}}^{(0)}_i = \mathrm{StopGrad}\left(\boldsymbol{z}^{(0)}_i\right)$.
The overall training procedure is summarized in Algorithm \ref{alg. training}.
This loss will converge to a constant, and vanish over the region—invariant substructures—where the representation of each node can be inferred from its neighbors.
% Since $E$ is not global flat in general situations, the objective $\mathcal{L}(\mathcal{G})$ will not converge to $0$ but to a constant. The training process are given in Algorithm~\ref{alg. training}.

\begin{algorithm}[t]
\caption{Training Procedure of \textsc{Gauge}.}
\label{alg. training}
\begin{algorithmic}[1]
\REQUIRE Graph $\mathcal{G} = (\mathcal{V}, \mathcal{E})$, initial embeddings $\{\boldsymbol{z}_i^{(0)}\}$, model parameters $\mathbf{\Theta}$.
\FOR{each training step}
    \FOR{each network layer $l=1$ to $L$}
        \STATE Compute local coordinates $\{\mathbf{Q}^{(i),l}\}$ via Eqs. \ref{eq. multi-path}-\ref{eq. qr}.
        \STATE Refine neural vector bundle via a $2$-layer gated flattening via Eqs. \ref{eq. consist metric}-\ref{eq. flatten qr}.
        \STATE Compute node representations $\{\boldsymbol{z}_i^l\}$ via Eq. \ref{eq. update embedding}.
    \ENDFOR
    \STATE Stop gradient of $\{\boldsymbol{z}_i^{(0)}\}$ by $\hat{\boldsymbol{z}}_i^{(0)} = \mathrm{StopGrad}(\boldsymbol{z}_i^{(0)})$.
    \STATE Compute loss $\mathcal{L}(\mathcal{G})$ in Eq. (\ref{eq. loss}). 
    \STATE Update $\mathbf{\Theta}$ via backpropagation.
\ENDFOR
\end{algorithmic}
\end{algorithm}

% \begin{algorithm}[htb]
% \caption{Decoding Characteristic Structures}
% \label{alg. decode}
% \begin{algorithmic}[1]
% \REQUIRE Graph $\mathcal{G} = (\mathcal{V}, \mathcal{E})$, node embeddings $\mathbf{Z} \in \mathbb{R}^{|\mathcal{V}| \times d}$, pretrained model $\Phi$, small thresholds $\varepsilon > \tau > 0$.
% \FOR{each $v_i \in \mathcal{V}$}
%     \STATE Compute $e(v_i) = \left\| z_i - \Phi\left( \{z_j\}_{j \in \mathcal{N}_i} \right) \right\|_2$
% \ENDFOR
% \STATE Let $V_{\leq\varepsilon}=\{v_i|e(v_i)\leq \varepsilon\}$.
% \STATE Find all connected components $\{\mathcal{T}_1, \dots, \mathcal{T}_K\}$ from $V_{\leq\varepsilon}$.
% \STATE Initialize $\mathcal{C} \gets \emptyset$.

% \FOR{each component $\mathcal{T}_k$}
%     \STATE Compute average error: $\chi(\mathcal{T}_k) \gets \frac{1}{|\mathcal{T}_k|} \sum_{v \in \mathcal{T}_k} e(v)$
%     \IF{$\chi(\mathcal{T}_k) \leq \tau$}
%         \STATE $\mathcal{C} \gets \mathcal{C} \cup \{\mathcal{T}_k\}$
%     \ENDIF
% \ENDFOR
% \STATE Return $\mathcal{C}$.
% \end{algorithmic}
% \end{algorithm}

% Algorithm~\ref{alg. decode} identifies characteristic structures by first selecting nodes with local prediction error at most $\varepsilon$, then extracting their connected components. A component is accepted as a characteristic structure only if its average error is no greater than the stricter threshold $\tau$ (with $\varepsilon > \tau$). This ensures that the resulting structures are composed of reliably predictable nodes and exhibit strong internal consistency.
\paragraph{Recovering Invariant Substructures}
As a byproduct, we obtain a measure describing how a substructure differs from behavior invariance, implying the transfer effort during model adaptation to downstream task.
\begin{definition}[\textbf{$\tau$-Invariance}]
\label{def. characteristic structure}
 Given a graph $\mathcal{G}=(\mathcal{V},\mathcal{E})$ with node representation $\mathbf{Z}\in \mathbb{R}^{|\mathcal{V}| \times d}$ generated by an encoder $\Phi$ and a connected substructure $\mathcal{T}$, for each node $v_i \in \mathcal{T}$, its representation predictive error is derived as
$ e(v_i)=\left\| \left(\mathbf{Q}^{(i)}\right)^\top(\boldsymbol{z}_i) - \frac{1}{|\mathcal{N}_i|} \sum_{j\in\mathcal{N}_i}\left(\mathbf{Q}^{(j)}\right)^\top(\boldsymbol{z}_j) \right\|_2$.
For a threshold $\tau > 0$, we say the substructure $\mathcal{T}$ is $\tau$-invariance under $\Phi$, or $\mathcal{T}$ is $\tau$-invariant if $\chi(\mathcal{T})=\frac{1}{|\mathcal{T}|}\sum_{v\in\mathcal{T}}e(v) \leq \tau$.
\end{definition}
Accordingly, we can recover the invariant substructures, specifying the behavior mechanism learned from which substructures during pretraining are applicable to unseen graphs. The details are provided in Appendix~\ref{app:algorithms}. 

% \paragraph{Gradient Analysis}
% Here, we examine the behavioral invariance over substructure $\mathcal{T}$ through gradient analysis.
% \begin{theorem}[\textbf{Vanishing Inward Gradient, Appendix ?}]
%     Given a graph $\mathcal{G}=(\mathcal{V},\mathcal{E})$ with node embedding $\mathbf{Z}\in \mathbb{R}^{|\mathcal{V}| \times d}$ encoded by a graph encoder $\Phi$ and $\mathcal{L}$ is a loss function. If $\Phi$ is predictable over $\mathcal{T} \subseteq \mathcal{V}$, the gradient $\mathrm{grad}_{\boldsymbol{z_i}^{(l)}}\mathcal{L}$ at $l$-th layer propagates to other nodes in $\mathcal{V}$ whenever $v_i \in \mathcal{T}$. Conversely, gradients from nodes in $\mathcal{V} - \mathcal{T}$ do not influence any node within $\mathcal{T}$.
% \end{theorem}
% It claims that the nodes in $\mathcal{T}$ are not affected by gradients from outside the structure or, in other words, $\mathcal{T}$ is invariant to its structural neighborhood.

\paragraph{Geometry of $\tau$-Invariance}The definition above identifies substructures where local representations are mutually predictable.
Remarkably, it is sufficient to imply geometric flatness—quasi-identity parallel transport—within this substructure, as formalized below.
\begin{theorem}[\textbf{Flatness Induced by $\tau$-Invariance, Appendix~\ref{sec:proof.flatness_from_char}}]
\label{thm:flatness_from_char}
Let $\mathcal{G} = (\mathcal{V}, \mathcal{E})$ be a graph, and node embeddings are $\mathbf{Z}=\{\boldsymbol{z}_i\}_{i \in \mathcal{V}}$.
Suppose $\mathcal{T} \subseteq \mathcal{V}$ is a $\tau$-invariant connected subgraph.
Assume that there exists $\delta > 0$ such that $\|\mathbf{z}_i - \mathbf{z}_j\|_2 \leq \delta$ for all $(i,j) \in \mathcal{E}(\mathcal{T})$, and that $\|\mathbf{z}_i\|_2 \geq c > 0$.
Then, for every edge $(i,j) \in \mathcal{E}(\mathcal{T})$, the pseudo parallel transport map satisfies
\[
\left\| \mathbf{P}_{(i,j)} - \mathbf{I}_r \right\|_F 
= \left\| (\mathbf{Q}^{(i)})^\top \mathbf{Q}^{(j)} - \mathbf{I}_r \right\|_F 
\leq C (\tau + \delta),
\]
where $C > 0$ is a constant depending only on $c$ and $r$.
\end{theorem}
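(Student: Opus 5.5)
The plan is to reduce the statement to a bound on $\|\mathbf{Q}^{(i)}-\mathbf{Q}^{(j)}\|_F$ and then control that difference by two mechanisms: the $\tau$-invariance of $\mathcal{T}$ and the $\delta$-closeness of neighbouring embeddings. I expect the argument to go through only with one additional conditioning assumption, which I state explicitly below.

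\emph{Step 1 (reduction).} Since $(\mathbf{Q}^{(i)})^\top\mathbf{Q}^{(i)}=\mathbf{I}_r$, we can write
\begin{equation*}
\mathbf{P}^{(i,j)}-\mathbf{I}_r=(\mathbf{Q}^{(i)})^\top\bigl(\mathbf{Q}^{(j)}-\mathbf{Q}^{(i)}\bigr).
\end{equation*}
Multiplying by a matrix with orthonormal columns does not increase the Frobenius norm, so $\|\mathbf{P}^{(i,j)}-\mathbf{I}_r\|_F\le\|\mathbf{Q}^{(j)}-\mathbf{Q}^{(i)}\|_F$. As a consistency check, the trace identity $\|\mathbf{Q}^{(i)}-\mathbf{Q}^{(j)}\|_F^2=2r-2\,\mathrm{Tr}\,\mathbf{P}^{(i,j)}$ links this quantity to the gate in Eq.~(\ref{eq. consist metric}). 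It therefore suffices to show $\|\mathbf{Q}^{(i)}-\mathbf{Q}^{(j)}\|_F\le C(\tau+\delta)$ for every edge of $\mathcal{T}$.

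\emph{Step 2 (bound via the construction of $\mathbf{Q}$).} By Eq.~(\ref{eq. multi-path}), each column $\hat{\boldsymbol q}^{(h)}_i$ is $\boldsymbol z_i$ minus a convex combination of neighbouring head-projections. Call the stacked matrix $\hat{\mathbf{A}}_i=[\hat{\boldsymbol q}^{(1)}_i,\dots,\hat{\boldsymbol q}^{(r)}_i]$, so that $\mathbf{Q}^{(i)}=\mathrm{QR}(\hat{\mathbf{A}}_i)$.
\begin{itemize}
\item Using $\|\boldsymbol z_i-\boldsymbol z_j\|\le\delta$ on edges of $\mathcal{T}$ (and Lipschitzness of the projections and attention weights), I would show $\|\hat{\mathbf{A}}_i-\hat{\mathbf{A}}_j\|_F\le c_1\sqrt r\,\delta$.
\item I would then invoke the standard perturbation bound for the thin QR factor, $\|\mathrm{QR}(\mathbf{A})-\mathrm{QR}(\mathbf{B})\|_F\le c_2\,\|\mathbf{A}-\mathbf{B}\|_F/\sigma_{\min}(\mathbf{A})$, valid when the perturbation is small relative to $\sigma_{\min}$.
\end{itemize}
Here the hypothesis $\|\boldsymbol z_i\|\ge c$ enters. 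Every residual contains the common term $\boldsymbol z_i$, so I would assume, or argue from non-degeneracy of the heads, that $\sigma_{\min}(\hat{\mathbf{A}}_i)\gtrsim c$. This yields $\|\mathbf{Q}^{(i)}-\mathbf{Q}^{(j)}\|_F\le C(c,r)\,\delta$. The gated flattening of Eq.~(\ref{eq. flatten qr}) is a convex averaging followed by QR, so the same perturbation bound should propagate through it with an $r$-dependent constant.

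\emph{Step 3 (the $\tau$ contribution).} The decomposition
\begin{equation*}
(\mathbf{Q}^{(i)})^\top\boldsymbol z_i-(\mathbf{Q}^{(j)})^\top\boldsymbol z_j=(\mathbf{Q}^{(i)}-\mathbf{Q}^{(j)})^\top\boldsymbol z_i+(\mathbf{Q}^{(j)})^\top(\boldsymbol z_i-\boldsymbol z_j)
\end{equation*}
combined with $e(v_i)\le|\mathcal{T}|\,\chi(\mathcal{T})\le|\mathcal{T}|\tau$ gives the following. On each edge the projection of $\mathbf{Q}^{(i)}-\mathbf{Q}^{(j)}$ onto the direction $\boldsymbol z_i/\|\boldsymbol z_i\|$ is at most $(|\mathcal{T}|\tau+\delta)/c$. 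This is where $c$ is used a second time. The fiber-wise residual is then handled by the QR bound of Step 2. Adding the two pieces gives $C(\tau+\delta)$.

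\emph{Main obstacle.} The difficulty is Step 2/3 in the directions of the fibers orthogonal to $\boldsymbol z_i$. The prediction error $e(v_i)$ only controls $\mathbf{Q}$ along the single direction $\boldsymbol z_i$, and the frame $\mathbf{Q}$ is a gauge choice. Without a lower bound on $\sigma_{\min}$ of the pre-QR matrices, a rotation within the fiber is invisible to $e(v_i)$. So the constant cannot truly depend only on $c$ and $r$ without such a conditioning hypothesis.

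I would first try to derive the conditioning from the residual structure, namely that all columns share $\boldsymbol z_i$ and the heads live in disjoint subspaces. Failing that, I would state it as a standing non-degeneracy assumption and absorb $|\mathcal{T}|$ and the conditioning into $C$.
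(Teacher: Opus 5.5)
\textbf{Verdict: there is a genuine gap.} Your diagnosis of the main obstruction is correct, and it is decisive. As stated, the theorem assumes only orthonormal frames, and under that assumption it is false. So neither your sketch nor the paper's argument can establish it without new hypotheses.

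Here is a counterexample. Take $r=2$, $d=3$, and let $\mathcal{V}=\mathcal{T}=\{i,j\}$ be a single edge. Set $\boldsymbol{z}_i=\boldsymbol{z}_j=c\,\boldsymbol{e}_1$, $\mathbf{Q}^{(i)}=[\boldsymbol{e}_1,\boldsymbol{e}_2]$ and $\mathbf{Q}^{(j)}=[\boldsymbol{e}_1,\boldsymbol{e}_3]$. Both local representations equal $(c,0)^\top$, so $e(v_i)=e(v_j)=0$ and the hypotheses hold for every $\tau,\delta>0$. Yet $\mathbf{P}^{(i,j)}=\mathrm{diag}(1,0)$ and $\|\mathbf{P}^{(i,j)}-\mathbf{I}_2\|_F=1$. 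Note that the fiber itself tilts here in a direction orthogonal to $\boldsymbol{z}_i$; it is not only the gauge within a fixed fiber that is invisible.

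The paper's proof is your Steps 1 and 3 without Step 2. It derives $\|(\mathbf{Q}^{(i)}-\mathbf{Q}^{(j)})^\top\boldsymbol{z}_i\|_2\le 2\tau+\delta$, divides by $c$, asserts that ``generically'' the whole operator norm must then be small, and applies the trace identity. That is exactly the jump from one direction to the whole frame that you flagged. In the example, $\boldsymbol{z}_i$ actually lies in the kernel of $(\mathbf{Q}^{(i)}-\mathbf{Q}^{(j)})^\top$.

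One controlled direction per edge cannot determine an $r$-frame. What would suffice is control along $r$ well-conditioned directions inside the fiber. If $\mathbf{U}=\mathbf{Q}^{(i)}\mathbf{S}$ with $\mathbf{S}$ invertible, then $(\mathbf{Q}^{(j)}-\mathbf{Q}^{(i)})^\top\mathbf{U}=(\mathbf{P}^{(i,j)}-\mathbf{I}_r)^\top\mathbf{S}$. Hence $\|\mathbf{P}^{(i,j)}-\mathbf{I}_r\|_F\le\|(\mathbf{Q}^{(j)}-\mathbf{Q}^{(i)})^\top\mathbf{U}\|_F/\sigma_{\min}(\mathbf{S})$, and the constant must then depend on $\sigma_{\min}(\mathbf{S})$.

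Your repair still has gaps beyond the conditioning assumption you concede.

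\emph{Step 3.} You need $\|\boldsymbol{x}_i-\boldsymbol{x}_j\|\lesssim|\mathcal{T}|\tau$ for $\boldsymbol{x}_i=(\mathbf{Q}^{(i)})^\top\boldsymbol{z}_i$. But $e(v_i)$ compares $\boldsymbol{x}_i$ with the mean over $\mathcal{N}_i$, the neighborhood in $\mathcal{G}$, not with the particular neighbor $\boldsymbol{x}_j$; and $e(v_j)$ uses the different set $\mathcal{N}_j$. For instance, take $\mathcal{T}=\{i,j\}$ with $i\sim k$, $j\sim m$ and $k,m\notin\mathcal{T}$. Choosing $\boldsymbol{x}_k=2\boldsymbol{x}_i-\boldsymbol{x}_j$ and $\boldsymbol{x}_m=2\boldsymbol{x}_j-\boldsymbol{x}_i$ gives $\chi(\mathcal{T})=0$ while $\|\boldsymbol{x}_i-\boldsymbol{x}_j\|$ is arbitrary. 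The paper makes the same slip when it bounds $\|\boldsymbol{x}_j-\frac{1}{|\mathcal{N}_i|}\sum_{k\in\mathcal{N}_i}\boldsymbol{x}_k\|$ by $\tau$. You did correctly catch the separate average-versus-maximum issue, $e(v_i)\le|\mathcal{T}|\tau$, which the paper also misses. A bound of this kind needs something like $\mathcal{T}$ being closed under neighborhoods, and even then the constant involves the degrees and the spectral gap of $\mathcal{T}$.

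\emph{Step 2.} The difference $\hat{\boldsymbol q}^{(h)}_i-\hat{\boldsymbol q}^{(h)}_j$ contains attention averages over the different sets $\mathcal{N}_i$ and $\mathcal{N}_j$. These may include vertices outside $\mathcal{T}$, where no $\delta$-bound holds. The gated flattening reaches further hops, and its convex combination of frames can be rank-deficient before the QR. All Lipschitz constants involved depend on the learned weights, not on $(c,r)$.

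\emph{Conditioning.} The hypothesis $\|\boldsymbol{z}_i\|\ge c$ cannot supply it. Write $\hat{\mathbf{A}}_i=\boldsymbol{z}_i\mathbf{1}^\top-\mathbf{B}_i$, with $\mathbf{B}_i$ the head aggregates. For $r\ge 2$, any unit $\boldsymbol{v}\perp\mathbf{1}$ gives $\hat{\mathbf{A}}_i\boldsymbol{v}=-\mathbf{B}_i\boldsymbol{v}$. So $\sigma_{\min}(\hat{\mathbf{A}}_i)$ is bounded above by a quantity that sees only differences between heads, not $c$.

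\emph{Role of $\tau$.} If Step 2 did go through, it alone would give $\|\mathbf{Q}^{(i)}-\mathbf{Q}^{(j)}\|_F\le C\delta$. Then $\tau$-invariance would play no role, and nothing would remain to be ``added'' in Step 3. You would be proving a smoothness-induced statement under much stronger hypotheses, not flatness induced by $\tau$-invariance.
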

In particular, as $\tau, \delta \to 0$, the bundle structure over $\mathcal{T}$ becomes geometrically flat.
\paragraph{Connection to Graph Contrastive Loss}
We show that a special case of the proposed Dirichlet loss well approximates graph contrastive learning loss, such as the popular InfoNCE~\cite{aron2018infonce}.  
Specifically, when local coordinates are aligned (\(\mathbf{P}^{(i,j)} \approx \mathbf{I}\)), minimizing Dirichlet loss pulls together geometrically compatible neighbors—mimicking the “positive” alignment in contrastive learning—while implicitly pushing apart incompatible pairs through high reconstruction error, without requiring explicit data augmentation.
It claims that Dirichlet loss can be regarded as a geometric generalization of contrastive loss in light of intrinsic geometry.  
% =========================================================
% Table 1: Main Results
% =========================================================
\begin{table*}[t]
    \centering
    \caption{\textbf{Main Results on Cross-Domain Transfer.} We report \textbf{Mean Accuracy (\%)} $\pm$ std over 10 independent runs. Models are pre-trained on source datasets and fine-tuned on target graphs with $k \in \{1, 5\}$ labeled samples (Few-Shot)}.
    \label{tab:main_results}
    \resizebox{\textwidth}{!}{
    \begin{tabular}{l|cccccccc}
        \toprule
        \multirow{3}{*}{\textbf{Model}} & \multicolumn{2}{c}{\textbf{PubMed}} & \multicolumn{2}{c}{\textbf{Facebook}} & \multicolumn{2}{c}{\textbf{Roman-empire}} & \multicolumn{2}{c}{\textbf{Photo}} \\
        \cmidrule(lr){2-3} \cmidrule(lr){4-5} \cmidrule(lr){6-7} \cmidrule(lr){8-9}
         & 1-shot & 5-shot & 1-shot & 5-shot & 1-shot & 5-shot & 1-shot & 5-shot \\
        \midrule
        GCN~\citep{kipf2017semi}
        & 40.91{\scriptsize\phantom{0}$\pm$\phantom{0}6.71} & 51.34{\scriptsize\phantom{0}$\pm$\phantom{0}6.46}
        & 44.43{\scriptsize\phantom{0}$\pm$\phantom{0}7.22} & 55.25{\scriptsize\phantom{0}$\pm$\phantom{0}4.41}
        & 11.29{\scriptsize\phantom{0}$\pm$\phantom{0}2.07} & 18.19{\scriptsize\phantom{0}$\pm$\phantom{0}1.38}
        & 43.03{\scriptsize\phantom{0}$\pm$\phantom{0}8.10} & 58.34{\scriptsize\phantom{0}$\pm$\phantom{0}2.20} \\
        GraphSAGE~\citep{hamilton2017inductive}
        & 41.36{\scriptsize\phantom{0}$\pm$\phantom{0}9.78} & 51.76{\scriptsize\phantom{0}$\pm$\phantom{0}6.83}
        & 47.23{\scriptsize\phantom{0}$\pm$\phantom{0}9.05} & \underline{59.93}{\scriptsize\phantom{0}$\pm$\phantom{0}4.27}
        & 11.34{\scriptsize\phantom{0}$\pm$\phantom{0}4.32} & 18.24{\scriptsize\phantom{0}$\pm$\phantom{0}2.50}
        & 42.53{\scriptsize\phantom{0}$\pm$\phantom{0}6.67} & 54.55{\scriptsize\phantom{0}$\pm$\phantom{0}3.26} \\
        GAT~\citep{velickovic2018graph}
        & 39.86{\scriptsize\phantom{0}$\pm$10.65} & 50.37{\scriptsize\phantom{0}$\pm$\phantom{0}6.30}
        & 46.26{\scriptsize\phantom{0}$\pm$\phantom{0}7.19} & 59.82{\scriptsize\phantom{0}$\pm$\phantom{0}5.56}
        & 11.71{\scriptsize\phantom{0}$\pm$\phantom{0}2.47} & 18.20{\scriptsize\phantom{0}$\pm$\phantom{0}1.51}
        & 41.74{\scriptsize\phantom{0}$\pm$\phantom{0}9.16} & 56.96{\scriptsize\phantom{0}$\pm$\phantom{0}2.88} \\
        \midrule
        DGI~\citep{velickovic2019deep}
        & 41.64{\scriptsize\phantom{0}$\pm$\phantom{0}7.72} & 58.20{\scriptsize\phantom{0}$\pm$\phantom{0}2.71}
        & 42.57{\scriptsize\phantom{0}$\pm$\phantom{0}1.81} & 55.66{\scriptsize\phantom{0}$\pm$\phantom{0}2.74}
        & \phantom{0}9.32{\scriptsize\phantom{0}$\pm$\phantom{0}1.99} & 13.74{\scriptsize\phantom{0}$\pm$\phantom{0}1.17}
        & 42.36{\scriptsize\phantom{0}$\pm$\phantom{0}4.78} & 63.20{\scriptsize\phantom{0}$\pm$\phantom{0}2.68} \\
        GraphMAE~\citep{hou2022graphmae}
        & 43.63{\scriptsize\phantom{0}$\pm$\phantom{0}5.81} & 56.54{\scriptsize\phantom{0}$\pm$\phantom{0}4.85}
        & 43.35{\scriptsize\phantom{0}$\pm$\phantom{0}4.98} & 58.93{\scriptsize\phantom{0}$\pm$\phantom{0}4.60}
        & 16.57{\scriptsize\phantom{0}$\pm$\phantom{0}3.23} & 17.00{\scriptsize\phantom{0}$\pm$\phantom{0}1.35}
        & 44.32{\scriptsize\phantom{0}$\pm$\phantom{0}9.74} & 62.97{\scriptsize\phantom{0}$\pm$\phantom{0}4.38} \\
        GRACE~\citep{zhu2020deep}
        & 41.04{\scriptsize\phantom{0}$\pm$\phantom{0}6.51} & 59.66{\scriptsize\phantom{0}$\pm$\phantom{0}6.49}
        & \underline{49.79}{\scriptsize\phantom{0}$\pm$\phantom{0}7.82} & 56.26{\scriptsize\phantom{0}$\pm$\phantom{0}5.77}
        & \phantom{0}7.71{\scriptsize\phantom{0}$\pm$\phantom{0}3.66} & 13.84{\scriptsize\phantom{0}$\pm$\phantom{0}5.53}
        & 47.26{\scriptsize\phantom{0}$\pm$\phantom{0}6.00} & 66.06{\scriptsize\phantom{0}$\pm$\phantom{0}4.33} \\
        \midrule
        GFT~\citep{wang2024gft}
        & 52.39{\scriptsize\phantom{0}$\pm$\phantom{0}9.07} & 63.11{\scriptsize\phantom{0}$\pm$\phantom{0}4.72}
        & 45.79{\scriptsize\phantom{0}$\pm$\phantom{0}3.02} & 46.33{\scriptsize\phantom{0}$\pm$\phantom{0}2.36}
        & 18.63{\scriptsize\phantom{0}$\pm$\phantom{0}4.80} & \underline{26.60}{\scriptsize\phantom{0}$\pm$\phantom{0}2.75}
        & 51.83{\scriptsize\phantom{0}$\pm$\phantom{0}5.75} & 76.53{\scriptsize\phantom{0}$\pm$\phantom{0}2.96} \\
        RAGraph~\citep{jiang2024ragraph}
        & 51.39{\scriptsize\phantom{0}$\pm$\phantom{0}6.93} & 65.71{\scriptsize\phantom{0}$\pm$\phantom{0}3.45}
        & 46.01{\scriptsize\phantom{0}$\pm$\phantom{0}0.93} & 48.88{\scriptsize\phantom{0}$\pm$\phantom{0}1.54}
        & 10.51{\scriptsize\phantom{0}$\pm$\phantom{0}2.33} & 20.99{\scriptsize\phantom{0}$\pm$\phantom{0}2.17}
        & 50.62{\scriptsize\phantom{0}$\pm$11.25} & 70.28{\scriptsize\phantom{0}$\pm$\phantom{0}6.20} \\
        SAMGPT~\citep{yu2025samgpt}
        & 43.51{\scriptsize\phantom{0}$\pm$\phantom{0}8.43} & 68.81{\scriptsize\phantom{0}$\pm$\phantom{0}8.98}
        & 47.88{\scriptsize\phantom{0}$\pm$10.23} & 56.38{\scriptsize\phantom{0}$\pm$\phantom{0}7.69}
        & 10.39{\scriptsize\phantom{0}$\pm$\phantom{0}6.15} & 25.76{\scriptsize\phantom{0}$\pm$\phantom{0}6.20}
        & 49.03{\scriptsize\phantom{0}$\pm$10.87} & 71.52{\scriptsize\phantom{0}$\pm$\phantom{0}7.26} \\
        GCOPE~\citep{zhao2024all}
        & 44.78{\scriptsize\phantom{0}$\pm$\phantom{0}2.82} & 67.66{\scriptsize\phantom{0}$\pm$\phantom{0}1.13}
        & 45.80{\scriptsize\phantom{0}$\pm$\phantom{0}1.78} & 55.94{\scriptsize\phantom{0}$\pm$\phantom{0}1.41}
        & 12.91{\scriptsize\phantom{0}$\pm$\phantom{0}0.50} & 16.39{\scriptsize\phantom{0}$\pm$\phantom{0}0.61}
        & 60.36{\scriptsize\phantom{0}$\pm$\phantom{0}3.28} & 68.68{\scriptsize\phantom{0}$\pm$\phantom{0}1.18} \\
        GraphAny~\citep{zhao2025fullyinductive}
        & 54.18{\scriptsize\phantom{0}$\pm$\phantom{0}2.21} & \underline{70.19}{\scriptsize\phantom{0}$\pm$\phantom{0}2.20}
        & 45.46{\scriptsize\phantom{0}$\pm$\phantom{0}3.78} & 45.48{\scriptsize\phantom{0}$\pm$\phantom{0}3.78}
        & \underline{18.71}{\scriptsize\phantom{0}$\pm$\phantom{0}1.14} & \textbf{26.72}{\scriptsize\phantom{0}$\pm$\phantom{0}1.14}
        & 60.46{\scriptsize\phantom{0}$\pm$\phantom{0}0.84} & \underline{78.45}{\scriptsize\phantom{0}$\pm$\phantom{0}0.84} \\
        MDGFM~\citep{wang2025multidomain}
        & 45.44{\scriptsize\phantom{0}$\pm$\phantom{0}7.12} & 63.62{\scriptsize\phantom{0}$\pm$\phantom{0}8.19}
        & 48.08{\scriptsize\phantom{0}$\pm$10.25} & 50.48{\scriptsize\phantom{0}$\pm$\phantom{0}8.74} 
        & 14.16{\scriptsize\phantom{0}$\pm$\phantom{0}5.81} & 23.66{\scriptsize\phantom{0}$\pm$\phantom{0}6.17}
        & 51.50{\scriptsize\phantom{0}$\pm$11.30} & 65.16{\scriptsize\phantom{0}$\pm$11.52} \\
        RiemannGFM~\citep{sun2025riemanngfm}
        & \underline{56.82}{\scriptsize\phantom{0}$\pm$\phantom{0}9.00} & 64.03{\scriptsize\phantom{0}$\pm$\phantom{0}4.60}
        & 44.08{\scriptsize\phantom{0}$\pm$\phantom{0}6.69} & 57.88{\scriptsize\phantom{0}$\pm$\phantom{0}4.17}
        & 13.08{\scriptsize\phantom{0}$\pm$\phantom{0}1.64} & 24.63{\scriptsize\phantom{0}$\pm$\phantom{0}1.17}
        & \underline{64.35}{\scriptsize\phantom{0}$\pm$\phantom{0}8.10} & 78.41{\scriptsize\phantom{0}$\pm$\phantom{0}1.78} \\
        GET~\citep{zhao2025graphgpt}
        & 42.37{\scriptsize\phantom{0}$\pm$\phantom{0}5.06} & 64.55{\scriptsize\phantom{0}$\pm$\phantom{0}3.97}
        & 44.26{\scriptsize\phantom{0}$\pm$\phantom{0}2.92} & 54.06{\scriptsize\phantom{0}$\pm$\phantom{0}3.57}
        & \phantom{0}9.36{\scriptsize\phantom{0}$\pm$\phantom{0}0.40} & 15.38{\scriptsize\phantom{0}$\pm$\phantom{0}0.96}
        & 53.54{\scriptsize\phantom{0}$\pm$\phantom{0}3.49} & 72.44{\scriptsize\phantom{0}$\pm$\phantom{0}1.06} \\
        $G^2$PM~\citep{wang2025generative}
        & 43.84{\scriptsize\phantom{0}$\pm$\phantom{0}8.75} & 65.63{\scriptsize\phantom{0}$\pm$\phantom{0}4.70}
        & 45.13{\scriptsize\phantom{0}$\pm$\phantom{0}5.92} & 54.68{\scriptsize\phantom{0}$\pm$\phantom{0}4.13}
        & 10.92{\scriptsize\phantom{0}$\pm$\phantom{0}2.84} & 16.81{\scriptsize\phantom{0}$\pm$\phantom{0}1.25}
        & 54.46{\scriptsize\phantom{0}$\pm$\phantom{0}5.59} & 72.19{\scriptsize\phantom{0}$\pm$\phantom{0}4.34} \\
        UniPrompt~\citep{huang2025one}
        & 40.34{\scriptsize\phantom{0}$\pm$\phantom{0}7.36} & 63.64{\scriptsize\phantom{0}$\pm$\phantom{0}4.97}
        & 42.60{\scriptsize\phantom{0}$\pm$\phantom{0}4.70} & 44.55{\scriptsize\phantom{0}$\pm$\phantom{0}3.22}
        & \phantom{0}7.20{\scriptsize\phantom{0}$\pm$\phantom{0}1.94} & 17.52{\scriptsize\phantom{0}$\pm$\phantom{0}2.18}
        & 55.12{\scriptsize\phantom{0}$\pm$\phantom{0}3.82} & 70.32{\scriptsize\phantom{0}$\pm$\phantom{0}4.12} \\
        \midrule
        \textbf{\textsc{Gauge}} (Ours)
        & \textbf{61.26}{\scriptsize\phantom{0}$\pm$\phantom{0}5.43} & \textbf{71.63}{\scriptsize\phantom{0}$\pm$\phantom{0}3.89}
        & \textbf{51.61}{\scriptsize\phantom{0}$\pm$\phantom{0}9.53} & \textbf{60.04}{\scriptsize\phantom{0}$\pm$\phantom{0}1.39}
        & \textbf{18.78} {\scriptsize\phantom{0}$\pm$\phantom{0}4.88} & 26.43{\scriptsize\phantom{0}$\pm$\phantom{0}1.47}
        & \textbf{64.72}{\scriptsize\phantom{0}$\pm$\phantom{0}4.53} & \textbf{81.33}{\scriptsize\phantom{0}$\pm$\phantom{0}2.93} \\
        \bottomrule
    \end{tabular}
    }
\end{table*}
\paragraph{On Structural Transferability}
We discuss the transferability during fine-tuning when we adapt the pretrained model to downstream tasks.
Following the convention~\cite{sun2023all,wang2024gft}, the total fine-tuning loss takes the following form, 
\begin{equation}
    \mathcal{L}_{\text{ft}} = \mathcal{L}_{\mathrm{task}} + \lambda \cdot \mathcal{L}(\mathcal{G}),
    \label{eq:total_finetune_loss}
\end{equation}
where $\lambda > 0$ is a hyperparameter, and $\mathcal{L}_{\mathrm{task}}$ denotes the task-specific loss. 
In this case, \textsc{Gauge} performs the downstream task with the pretrained neural vector bundle, preserving behaviorally invariant substructures. 
We have the following theorem demonstrating that the deviation over invariant substructures is constrained.
\begin{theorem}[\textbf{Parameter Stability, Appendix~\ref{sec:proof.regularization_stability}.}]
\label{thm:regularization_stability}
Let $\theta$ be the parameters of the fine-tuned model, and let $\mathcal{T} \subseteq \mathcal{V}$ be a $\tau$-characteristic structure identified during pretraining. Suppose the fine-tuning process minimizes $\mathcal{L}_{\text{ft}}$ with $\lambda > 0$. Then, for any node $v_i \in \mathcal{T}$, the gradient update satisfies:
\[
\left\| \mathrm{grad}_\theta \mathcal{L}_{\text{ft}} \right\|_2 \leq \left\| \mathrm{grad}_\theta \mathcal{L}_{\text{task}} \right\|_2 + \lambda \cdot C \cdot e(v_i),
\]
where $C > 0$ is a constant depending on the Lipschitz continuity of $\Phi$, and $e(v_i)$ is the $i$-th term in Eq. (\ref{eq. loss}).
\end{theorem}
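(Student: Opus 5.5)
The plan is to read the statement as a per-node bound: the Dirichlet loss is split into node-wise terms, $\mathcal{L}(\mathcal{G})=\sum_i \ell_i(\theta)$, and for $v_i\in\mathcal{T}$ we look only at the contribution of $\ell_i$ to the fine-tuning gradient. Writing $\boldsymbol{r}_i(\theta)=(\mathbf{Q}^{(i)})^\top\hat{\boldsymbol{z}}^{(0)}_i-\frac{1}{|\mathcal{N}_i|}\sum_{j\in\mathcal{N}_i}(\mathbf{Q}^{(j)})^\top\boldsymbol{z}^L_j$, we have $\ell_i=\|\boldsymbol{r}_i\|_2^2$. By linearity of the gradient, $\mathrm{grad}_\theta\mathcal{L}_{\text{ft}}=\mathrm{grad}_\theta\mathcal{L}_{\text{task}}+\lambda\,\mathrm{grad}_\theta\ell_i$ for the local term, and the triangle inequality gives
\[
\|\mathrm{grad}_\theta\mathcal{L}_{\text{ft}}\|_2\le\|\mathrm{grad}_\theta\mathcal{L}_{\text{task}}\|_2+\lambda\|\mathrm{grad}_\theta\ell_i\|_2 .
\]
So it suffices to show $\|\mathrm{grad}_\theta\ell_i\|_2\le C\,e(v_i)$.

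For that bound, use the chain rule: $\mathrm{grad}_\theta\ell_i=2\,J_i(\theta)^\top\boldsymbol{r}_i$, where $J_i=\partial\boldsymbol{r}_i/\partial\theta$. Hence $\|\mathrm{grad}_\theta\ell_i\|_2\le 2\|J_i\|_{\mathrm{op}}\|\boldsymbol{r}_i\|_2$. The factor $\|\boldsymbol{r}_i\|_2$ is exactly the $i$-th residual of Eq.~(\ref{eq. loss}), which is what the statement calls $e(v_i)$. (Definition~\ref{def. characteristic structure} uses $\boldsymbol{z}_i$ and $\boldsymbol{z}_j$ in place of $\hat{\boldsymbol{z}}^{(0)}_i$ and $\boldsymbol{z}^L_j$; the theorem explicitly refers to the loss term, so we use that.) Because the residual enters as the unsquared norm, this matches the linear dependence on $e(v_i)$ in the claim.

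The Jacobian is controlled using the Lipschitz assumption. $\boldsymbol{r}_i$ depends on $\theta$ only through $\boldsymbol{z}^L_j$ and the orthonormal frames $\mathbf{Q}^{(j)}$; the StopGrad term $\hat{\boldsymbol{z}}^{(0)}_i$ contributes nothing. By the product rule,
\[
\|J_i\|_{\mathrm{op}}\le\max_j\Big(\|\partial_\theta\mathbf{Q}^{(j)}\|\,\|\boldsymbol{z}^L_j\|+\|\mathbf{Q}^{(i)}\|\,\|\partial_\theta\hat{\boldsymbol{z}}\|+\|\mathbf{Q}^{(j)}\|\,\|\partial_\theta\boldsymbol{z}^L_j\|\Big).
\]
Here $\|\mathbf{Q}\|_{\mathrm{op}}=1$ by orthonormality, $\|\partial_\theta\boldsymbol{z}^L_j\|\le L_\Phi$ by Lipschitz continuity of $\Phi$ in its parameters, and $\|\boldsymbol{z}^L_j\|\le M$ as assumed in Theorem~\ref{theorem. flatness}. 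This gives $C=2(L_Q M+L_\Phi)$.

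The main obstacle is bounding $\partial_\theta\mathbf{Q}^{(j)}$, since QR decomposition is only locally Lipschitz, with a constant that blows up as the residual matrix $[\hat{\boldsymbol{q}}^{(1)}_j,\dots,\hat{\boldsymbol{q}}^{(r)}_j]$ approaches rank deficiency. The first thing to try is to assume a uniform lower bound on its smallest singular value. Standard perturbation bounds for the Q factor then give $L_Q\lesssim L_\Phi/\sigma_{\min}$, and this is absorbed into the constant ``depending on the Lipschitz continuity of $\Phi$''. The $\tau$-invariance of $\mathcal{T}$ is not needed for the inequality itself. It only serves to interpret the bound: averaged over $\mathcal{T}$ the extra gradient is at most $\lambda C\tau$, so parameter drift over invariant substructures is controlled.
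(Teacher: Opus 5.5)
Your mechanics match the paper's. Both arguments use the triangle inequality on the gradient of $\mathcal{L}_{\text{ft}}$, then the chain rule to factor out the residual norm. Your $\mathrm{grad}_\theta\|\boldsymbol{r}_i\|_2^2=2J_i^\top\boldsymbol{r}_i$ is the paper's $2e(v_k)\,\mathrm{grad}_\theta e(v_k)$, slightly cleaner because it never differentiates a norm at zero. Both then bound the Jacobian using Lipschitzness of $\Phi$, $\|\mathbf{Q}\|_2=1$ and bounded embeddings.

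The genuine divergence is the localization to one node. The paper keeps the whole regularizer and gets $\|\mathrm{grad}_\theta\mathcal{L}(\mathcal{G})\|_2\le 4M\sum_k e(v_k)$. It then replaces $\sum_k e(v_k)$ by $C\,e(v_i)$, arguing that the ``dominant contribution near $v_i$'' is its own term. That step is not a valid inequality. You restrict to $\ell_i$ from the outset, so every step you write is rigorous.

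\textbf{You are proving a different statement.} Present it as such, not as a reading of the given one. The identity $\mathrm{grad}_\theta\mathcal{L}_{\text{ft}}=\mathrm{grad}_\theta\mathcal{L}_{\text{task}}+\lambda\,\mathrm{grad}_\theta\ell_i$ is false for $\mathcal{L}_{\text{ft}}$ as defined in Eq.~(\ref{eq:total_finetune_loss}), whose regularizer is $\sum_k\ell_k$.

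No argument can save the literal version. Suppose $\mathrm{grad}_\theta\mathcal{L}_{\text{task}}=0$ and some $v_i\in\mathcal{T}$ has $e(v_i)=0$; this is allowed, since $\tau$-invariance only bounds the average. Suppose also that residuals at other nodes make $\mathrm{grad}_\theta\mathcal{L}(\mathcal{G})\neq 0$. Then the left side equals $\lambda\|\mathrm{grad}_\theta\mathcal{L}(\mathcal{G})\|_2>0$, while the right side is $0$.

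What your per-node bound honestly gives globally is $\|\mathrm{grad}_\theta\mathcal{L}_{\text{ft}}\|_2\le\|\mathrm{grad}_\theta\mathcal{L}_{\text{task}}\|_2+\lambda C\sum_k e(v_k)$. The part coming from $\mathcal{T}$ is at most $\lambda C|\mathcal{T}|\tau$. That is the content of your closing remark, and it is the defensible form of the theorem.

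Two smaller points:
\begin{enumerate}
\item Your Jacobian bound drops the term $(\partial_\theta\mathbf{Q}^{(i)})^\top\hat{\boldsymbol{z}}^{(0)}_i$. StopGrad removes $\partial_\theta\hat{\boldsymbol{z}}^{(0)}_i$, but not the derivative of the frame at $v_i$. So you also need $\|\hat{\boldsymbol{z}}^{(0)}_i\|_2\le M$, and the constant becomes $C=2(2L_QM+L_\Phi)$. This is harmless.
\item Your lower bound on $\sigma_{\min}$ before each QR is an extra hypothesis that the paper makes silently. The paper just asserts $\|\nabla_\theta\mathbf{Q}^{(k)}\|_F\le L_\Phi$, as if $\mathbf{Q}$ were a direct output of $\Phi$. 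For $L_Q$ to be uniform, the hypothesis must hold at every QR, including the re-orthonormalizations inside the gated flattening.
\end{enumerate}
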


\section{Experiments}
We compare \textsc{Gauge} \footnote{\textbf{Codes are available at: }\url{https://github.com/RiemannGraph/GAUGE}} with $16$ strong baselines in cross-domain transfer learning, examine its expressiveness in challenging tasks (zero-shot link prediction and graph isomorphism), and visualize transferable structures for further intuitions.

\subsection{Experimental Setups}
\paragraph{Datasets \& Baselines} 
Pretraining datasets cover diverse domains including academic (\texttt{ogbn-arxiv}~\cite{hu2020opengraphbench}), social (\texttt{Reddit}~\cite{hamilton2017inductive}, \texttt{Questions}~\cite{platonov2023a}), e-commercial graphs (\texttt{Computers}~\cite{shchur2019pitfalls}, \texttt{Amazon-Ratings}~\cite{platonov2023a}). 
Target graphs are \texttt{PubMed}~\cite{yang2016revisiting} (biomedical), \texttt{FacebookPagePage}~\cite{rozemberczki2021multiscale} (social), \texttt{Roman-empire}~\cite{platonov2023a}(heterophilic), and \texttt{Photo}~\cite{shchur2019pitfalls} (co-purchase). 
We compare with 16 strong baselines:
(1) \textbf{Supervised GNNs}: GCN, GraphSAGE, and GAT. 
(2) \textbf{Self-Supervised GNNs}: DGI, GRACE, and GraphMAE.
(3) \textbf{Graph Foundation Models (GFMs)}: GFT, RAGraph, SAMGPT, GCOPE, GraphAny, MDGFM, RiemannGFM, GET, $G^2$PM, and UniPrompt.
Datasets and baselines are detailed in Appendix~\ref{app:datasets} and ~\ref{app:baselines}, respectively.

\paragraph{Evaluation Protocols} 
GFMs are pretrained exclusively on target graphs, and then finetuned on the unseen datasets, where we adopt a few-shot setting with $k$ labeled samples per class ($k \in \{0, 1, 5\}$). 
For evaluation metrics, we employ Area Under Curve (AUC) and Average Precision (AP) for link prediction,  and AUC for node classification.
We report the mean performance over 10 independent runs.

\paragraph{Implementation Notes} 
In \textsc{Gauge}, dimensions of node representation and local coordinate are set to $512$ and $16$, respectively. 
We adopt \texttt{GELU} activation with a dropout rate of $0.1$. 
All experiments are conducted on 2$\times$ NVIDIA GeForce RTX 5090 GPUs and an Intel Core i9-14900K CPU.
Further details are collected in Appendix~\ref{sec:implementation}.
% =========================================================
% Table 2: Scaled to 90% (Everything scaled down)
% =========================================================
\begin{table*}[t]
    \centering
    \captionsetup{width=1\textwidth}
    % \small
    \caption{\textbf{Zero-Shot Link Prediction.} We report \textbf{AUC and AP scores (\%)} $\pm$ std over 10 runs. Models are pre-trained on source datasets and directly inferred on target graphs without fine-tuning.}
    \label{tab:link_pred}
    \small 
    \renewcommand{\arraystretch}{1.2} 
    \resizebox{1\textwidth}{!}{
        \begin{tabular*}{\textwidth}{@{\extracolsep{\fill}}l|cc|cc|cc|cc}
            \toprule
            \multirow{2}{*}{\textbf{Model}} & \multicolumn{2}{c|}{\textbf{PubMed}} & \multicolumn{2}{c|}{\textbf{Facebook}} & \multicolumn{2}{c|}{\textbf{Roman-empire}} & \multicolumn{2}{c}{\textbf{Photo}} \\
            \cmidrule(lr){2-3} \cmidrule(lr){4-5} \cmidrule(lr){6-7} \cmidrule(lr){8-9}
             & AUC & AP & AUC & AP & AUC & AP & AUC & AP \\
            \midrule
            GFT
            & 55.83{\scriptsize$\pm$0.55} & 56.57{\scriptsize$\pm$0.57}
            & 72.94{\scriptsize$\pm$0.25} & 74.93{\scriptsize$\pm$0.20}
            & 54.91{\scriptsize$\pm$0.43} & 55.11{\scriptsize$\pm$0.48}
            & 54.13{\scriptsize$\pm$0.27} & 54.88{\scriptsize$\pm$0.23} \\
            RAGraph
            & 58.29{\scriptsize$\pm$0.32} & 55.41{\scriptsize$\pm$0.41}
            & \underline{87.41}{\scriptsize$\pm$0.11} & 83.83{\scriptsize$\pm$0.19}
            & 49.88{\scriptsize$\pm$0.62} & 52.92{\scriptsize$\pm$0.62}
            & 74.06{\scriptsize$\pm$0.27} & 72.25{\scriptsize$\pm$0.28} \\
            SAMGPT
            & 48.91{\scriptsize$\pm$0.80} & 55.30{\scriptsize$\pm$0.87}
            & 84.61{\scriptsize$\pm$0.12} & \underline{85.55}{\scriptsize$\pm$0.11}
            & 59.53{\scriptsize$\pm$0.55} & 62.51{\scriptsize$\pm$0.33}
            & \underline{92.26}{\scriptsize$\pm$0.16} & \underline{90.84}{\scriptsize$\pm$0.28} \\
            GCOPE
            & \underline{60.34}{\scriptsize$\pm$0.43} & \textbf{61.91}{\scriptsize$\pm$0.25}
            & 80.37{\scriptsize$\pm$0.20} & 82.22{\scriptsize$\pm$0.15}
            & 58.19{\scriptsize$\pm$1.00} & 57.12{\scriptsize$\pm$0.94}
            & 80.14{\scriptsize$\pm$0.41} & 81.00{\scriptsize$\pm$0.42} \\
            MDGFM
            & 56.60{\scriptsize$\pm$0.33} & 56.70{\scriptsize$\pm$0.37}
            & 71.14{\scriptsize$\pm$0.18} & 63.56{\scriptsize$\pm$0.14}
            & \underline{63.78}{\scriptsize$\pm$0.72} & 62.56{\scriptsize$\pm$1.01}
            & 73.09{\scriptsize$\pm$0.28} & 65.35{\scriptsize$\pm$0.23} \\
            RiemannGFM
            & 53.37{\scriptsize$\pm$1.33} & 50.91{\scriptsize$\pm$1.53}
            & 76.65{\scriptsize$\pm$1.04} & 72.62{\scriptsize$\pm$0.84}
            & 62.59{\scriptsize$\pm$0.63} & \underline{64.73}{\scriptsize$\pm$0.76}
            & 66.93{\scriptsize$\pm$0.91} & 62.38{\scriptsize$\pm$1.48} \\
            GET
            & 48.82{\scriptsize$\pm$1.90} & 48.80{\scriptsize$\pm$1.75}
            & 50.80{\scriptsize$\pm$0.33} & 51.97{\scriptsize$\pm$0.33}
            & 63.70{\scriptsize$\pm$0.56} & 64.26{\scriptsize$\pm$0.51}
            & 53.06{\scriptsize$\pm$0.37} & 53.42{\scriptsize$\pm$0.36} \\
            $G^2$PM
            & 53.08{\scriptsize$\pm$0.15} & 54.62{\scriptsize$\pm$0.18}
            & 71.45{\scriptsize$\pm$0.11} & 72.18{\scriptsize$\pm$0.12}
            & 60.66{\scriptsize$\pm$0.54} & 56.83{\scriptsize$\pm$0.65}
            & 71.61{\scriptsize$\pm$0.27} & 69.61{\scriptsize$\pm$0.45} \\
            UniPrompt
            & 56.32{\scriptsize$\pm$0.16} & 53.71{\scriptsize$\pm$0.08}
            & 77.54{\scriptsize$\pm$0.28} & 72.36{\scriptsize$\pm$0.44}
            & 54.16{\scriptsize$\pm$0.54} & 54.34{\scriptsize$\pm$0.43}
            & 68.64{\scriptsize$\pm$0.33} & 63.17{\scriptsize$\pm$0.27} \\
            \midrule
            \textbf{\textsc{Gauge}}
            & \textbf{64.03}{\scriptsize$\pm$1.37} & \underline{61.40}{\scriptsize$\pm$1.03}
            & \textbf{93.88}{\scriptsize$\pm$0.05} & \textbf{90.73}{\scriptsize$\pm$0.24}
            & \textbf{66.22}{\scriptsize$\pm$1.26} & \textbf{67.21}{\scriptsize$\pm$0.58}
            & \textbf{93.04}{\scriptsize$\pm$2.04} & \textbf{91.04}{\scriptsize$\pm$1.76} \\
            \bottomrule
        \end{tabular*}
    }
\end{table*}
\vspace{-0.2cm}
\subsection{Results and Discussion}
\paragraph{Cross-Domain Transfer Learning} 
% Table~\ref{tab:main_results} presents the comparison with baselines. 
% Unlike conventional and self-supervised GNNs trained from scratch, GFMs utilize a pretraining-finetuning paradigm. 
% Specifically, compared to RiemannGFM which enforces representations in a \textit{predefined} product space, \textsc{Gauge}flexibly learns the \textit{intrinsic} manifold geometry directly from data. 
% This geometric adaptability yields superior transferability: \textsc{Gauge}achieves state-of-the-art performance, particularly dominating in \textbf{5-shot} adaptation across homophilic datasets (PubMed, Facebook, Photo), verifying the effectiveness of intrinsic geometry for knowledge transfer.
The empirical results are summarized in Table~\ref{tab:main_results}. Among the baselines, conventional and self-supervised GNNs are trained and evaluated directly on the target graphs, whereas GFMs follow a pretraining-finetuning scheme.
GFT, RAGraph, and RiemannGFM focus on discrete structures during message passing or over the graph itself; in contrast, \textsc{Gauge} emphasizes behavioral invariance within a continuous geometric space.
Although both RiemannGFM and \textsc{Gauge} work with Riemannian manifolds, they differ fundamentally: RiemannGFM constrains representations to a predefined product space, whereas \textsc{Gauge} learns the intrinsic geometry of the underlying manifold, unknown in prior, directly from node representations.
As shown in Table~\ref{tab:main_results}, \textsc{Gauge} consistently outperforms all baselines, except in one case (5-shot on \texttt{Roman}).
This validates the efficacy of incorporating intrinsic geometry for graph knowledge transfer.
\paragraph{Visualizing Transferable Structures} 
% We visualize the transferable structures learned \textsc{Gauge}by on a toy example, a connected subgraph of $1000$ nodes sampled from \texttt{Computer}. 
% Specifically, we first leverage the pretrained \textsc{Gauge}to directly generate node representations via a forward pass, and then recover the transferable structure based on representations using Algorithm X.
% Another strength of \textsc{Gauge}is the ability of identifying the knowledge from which substructures learned during pretraining is applicable to target graphs.
% It implies by the byproduct of $\tau-$invariance, measuring the transfer effort, and little adaptation is need over the region with low $\tau$ value.
We visualize the transferable structures learned by \textsc{Gauge} using a toy example, a connected subgraph of 1000 nodes sampled from \texttt{Computer}. 
Specifically, we leverage the pretrained \textsc{Gauge} to generate node representations via a forward pass and subsequently recover corresponding invariant substructures based on these representations using Algorithm \ref{alg. decode}. 
Another strength of \textsc{Gauge} is its ability to identify which substructures, learned during pretraining, are applicable to target graphs. This is implied by its $\tau-$invariance—the regions with low $\tau$ values require minimal adaptation, indicating high transferability.
\vspace{-0.2cm}
\paragraph{Zero-Shot Link Prediction} 
% The goal is to infer missing links of unseen graphs solely based on the learned knowledge, and thus the pretrained modes are directly applied to  without any parameter updates.
% Note that both conventional  and self-supervised GNNs requires training on target graphs, and thus we only compare with other pretrianable models.
% We summarize the results on $4$ datasets in Table~\ref{tab:link_pred}, where the proposed \textsc{Gauge}consistently achieves the best performance except one case of AC on \texttt{Facebook}.
% It demonstrates that learning intrinsic geometry the structural regularities for zero-shot link prediction.
The objective is to infer missing links in unseen graphs solely using the knowledge acquired during pretraining, with no parameter updates at test time. 
Note that conventional and self-supervised GNNs require training on the target graphs, 
and thus we compare only against GFMs. 
The results on $4$ datasets are summarized in Table~\ref{tab:link_pred}, where \textsc{Gauge} consistently attains the best results with the only exception of the AC on \texttt{Facebook}. This demonstrates that learning intrinsic geometry effectively captures transferable structural regularities, enabling strong zero-shot link prediction.
\vspace{-0.2cm}
\paragraph{Graph Isomorphism} 
% It tests whether a model is capable of distinguishing  permutation invariant over a graph, requiring the understanding on structural regularities. 
% Despite the fundamental importance, graph isomorphism has rarely been used as a downstream task in the literature given its difficulty. 
% Here, benchmark datasets of strongly regular graphs (\texttt{CSL}) and molecules (\texttt{MUTAG}, \texttt{ZINC12K})  are included. 
% We compare with conventional GNNs (GCN, GraphSAGE, GIN) and a pretrainable SAMGPT.
% Conventional GNNs conduct supervised learning, while SAMGPT and our \textsc{Gauge}undergo pretraining-finetuning. 
% As shown in Table~\ref{tab:graph_cls_multi}, \textsc{Gauge}achieves the best ACC on \texttt{CSL} and the lowest MAE on \texttt{ZINC12K}, and remains competitive in AUC on \texttt{MUTAG}.
%It suggests that the intrinsic geometry also benefits solving graph isomorphism. 
This task evaluates whether a model can discern permutation invariance in graph structures, requiring a deep understanding of structural regularities. Despite its significance, graph isomorphism has seldom been used as a downstream task due to its inherent difficulty.
Here, we include benchmark datasets comprising strongly regular graphs (\texttt{CSL}) and molecular graphs (\texttt{MUTAG}, \texttt{ZINC12K}), and compare against conventional GNNs (GCN, GraphSAGE, GIN) and the pretrainable SAMGPT. While conventional GNNs are trained in a fully supervised manner, both SAMGPT and \textsc{Gauge} follow a pretrain-finetune scheme. As shown in Table~\ref{tab:graph_cls_multi}, \textsc{Gauge} achieves the highest accuracy and lowest error. These results indicate that modeling intrinsic geometry is also beneficial for solving graph isomorphism tasks.

% =========================================================
% Figure: Visualization on Computers (raw vs highlighted structures)
% =========================================================
\begin{figure}[t]
    \centering
    \begin{subfigure}[t]{0.48\linewidth}
        \centering
        \includegraphics[width=\linewidth]{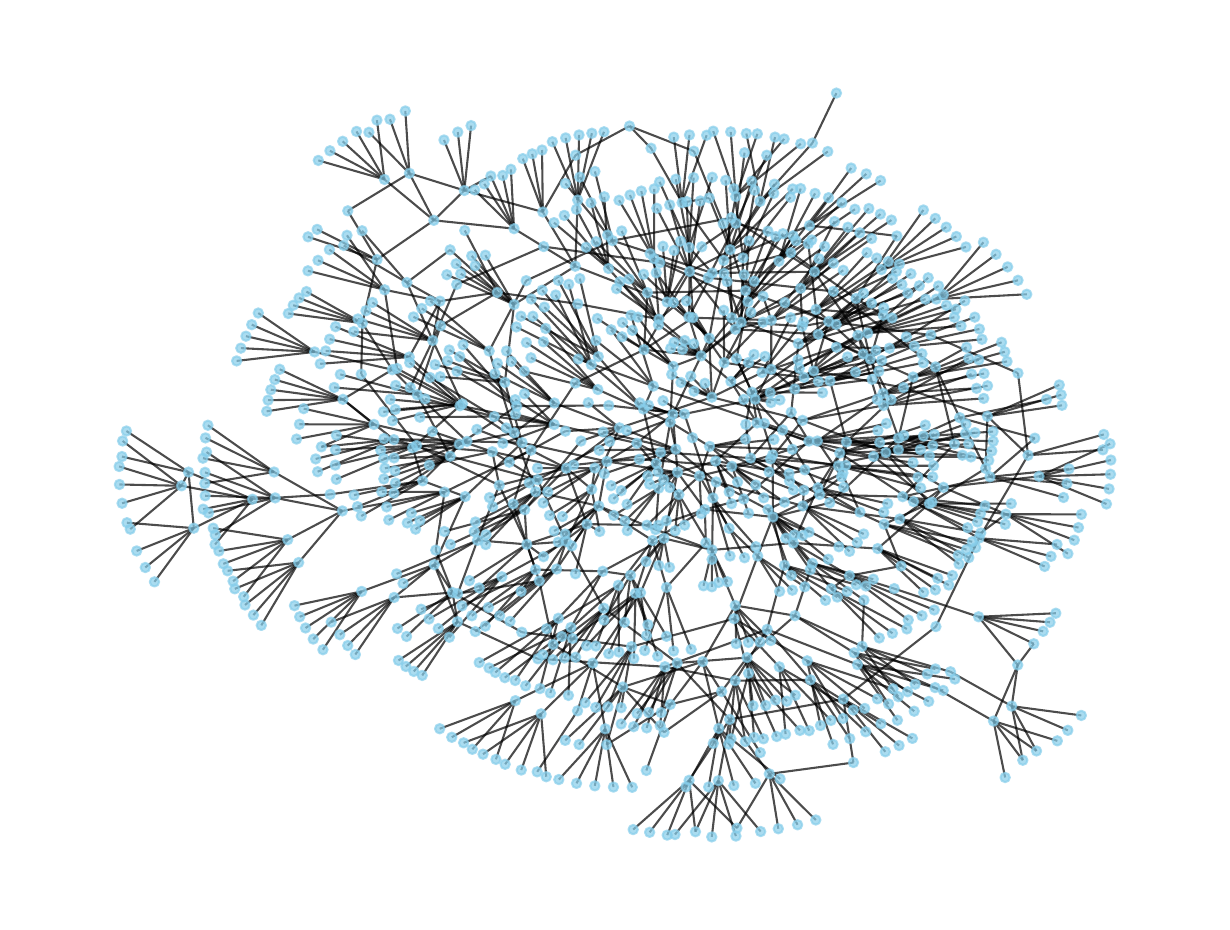}
        \caption{Sampled subgraph}
        \label{fig:vis_computers_raw}
    \end{subfigure}
    \hfill
    \begin{subfigure}[t]{0.48\linewidth}
        \centering
        \includegraphics[width=\linewidth]{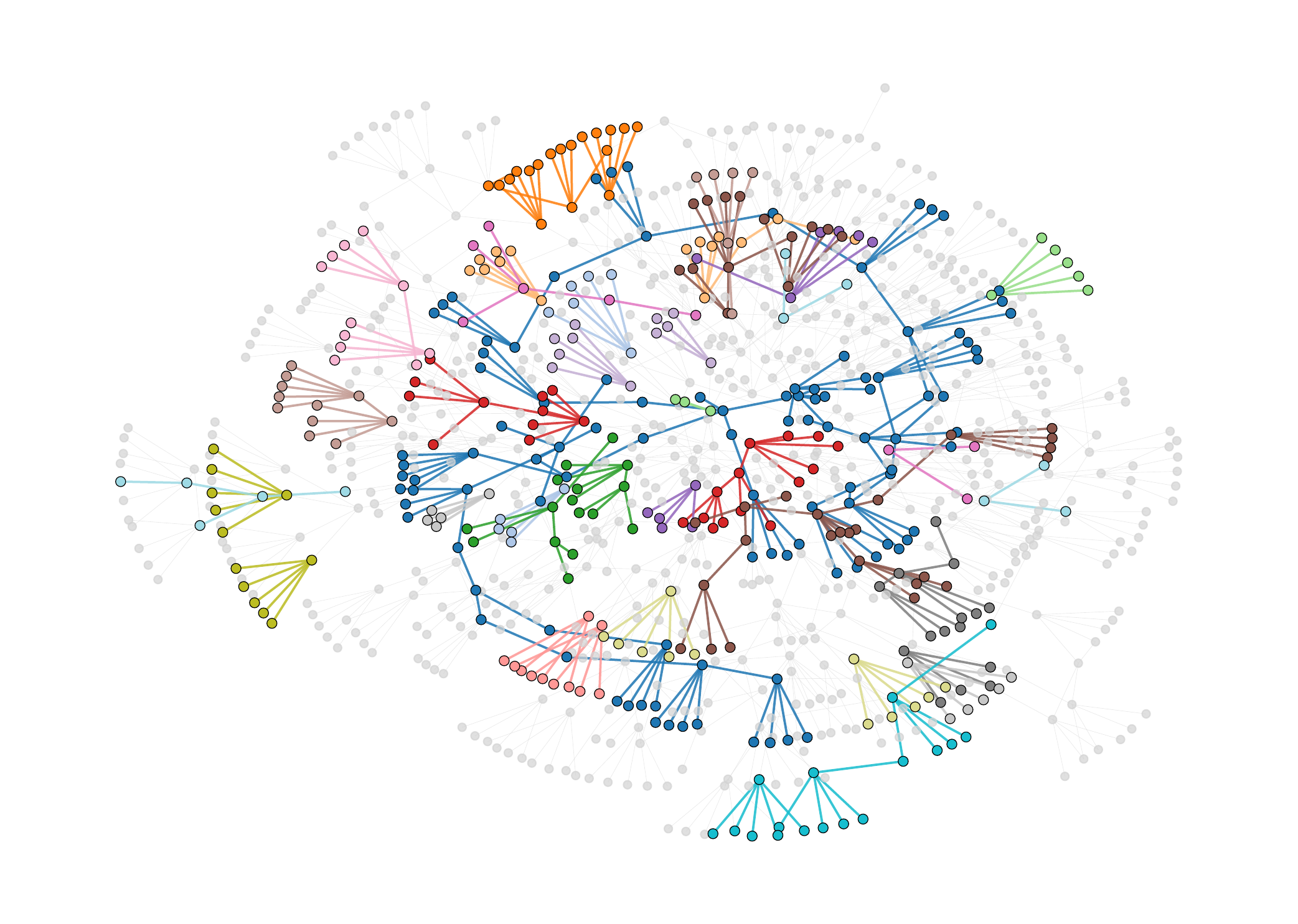}
        \caption{Invariant structures}
        \label{fig:vis_computers_cs}
    \end{subfigure}
       \vspace{-0.03in}
    \caption{\textbf{Visualization on \texttt{Computers}.} Different invariant substructures are marked with distinct colors.}
    \label{fig:vis_computers}
\end{figure}

% =========================================================
% Table 3: Graph Classification
% =========================================================
\begin{table}[t]
    \centering
    \caption{\textbf{Graph Isomorphism Classification Results.} 
    We use ACC (\%) for \texttt{CSL}, AUC (\%) for \texttt{MUTAG}, and MAE for \texttt{ZINC12K}. }
        \vspace{-0.03in}
    \label{tab:graph_cls_multi}
    \small
    \renewcommand{\arraystretch}{1.15}
    % \resizebox{0.9\linewidth}{!}{
    \begin{tabular}{l|ccc}
        \toprule
        \textbf{Model} & \textbf{CSL}$\uparrow$ & \textbf{MUTAG}$\uparrow$ & \textbf{ZINC12K}$\downarrow$ \\
        \midrule
        GCN       
        & 26.00{\scriptsize\phantom{0}$\pm$\phantom{0}9.79}  
        & 86.26{\scriptsize\phantom{0}$\pm$\phantom{0}9.20} 
        & 0.321{\scriptsize\phantom{0}$\pm$0.009} \\
        GraphSAGE 
        & 21.67{\scriptsize\phantom{0}$\pm$\phantom{0}5.50}  
        & \underline{86.48}{\scriptsize\phantom{0}$\pm$10.77}
        & 0.588{\scriptsize\phantom{0}$\pm$0.006} \\
        GIN       
        & 29.67{\scriptsize\phantom{0}$\pm$\phantom{0}8.23}  
        & 82.75{\scriptsize\phantom{0}$\pm$\phantom{0}9.54} 
        & \underline{0.163}{\scriptsize\phantom{0}$\pm$0.004} \\
        GAT       
        & 22.33{\scriptsize\phantom{0}$\pm$\phantom{0}7.54}  
        & 84.29{\scriptsize\phantom{0}$\pm$\phantom{0}9.54} 
        & 0.384{\scriptsize\phantom{0}$\pm$0.007} \\
        \midrule
        SAMGPT    
        & \underline{64.17}{\scriptsize\phantom{0}$\pm$15.24}
        & 60.23{\scriptsize\phantom{0}$\pm$13.07}
        & 0.382{\scriptsize\phantom{0}$\pm$0.025} \\
        \midrule
        \textbf{\textsc{Gauge}} 
        & \textbf{92.56}{\scriptsize\phantom{0}$\pm$\phantom{0}4.37} 
        & \textbf{88.59}{\scriptsize\phantom{0}$\pm$\phantom{0}5.75} 
        & \textbf{0.157}{\scriptsize\phantom{0}$\pm$0.005} \\
        \bottomrule
    \end{tabular}
    % }
\end{table}

% =========================================================
% Figure: Case Study (4 topologies, each with raw + highlighted structure)
% =========================================================
\begin{figure*}[t]
    \centering
    % ---- Row 1: raw graphs ----
    \begin{subfigure}[t]{0.22\textwidth}
        \centering
        \includegraphics[width=\linewidth]{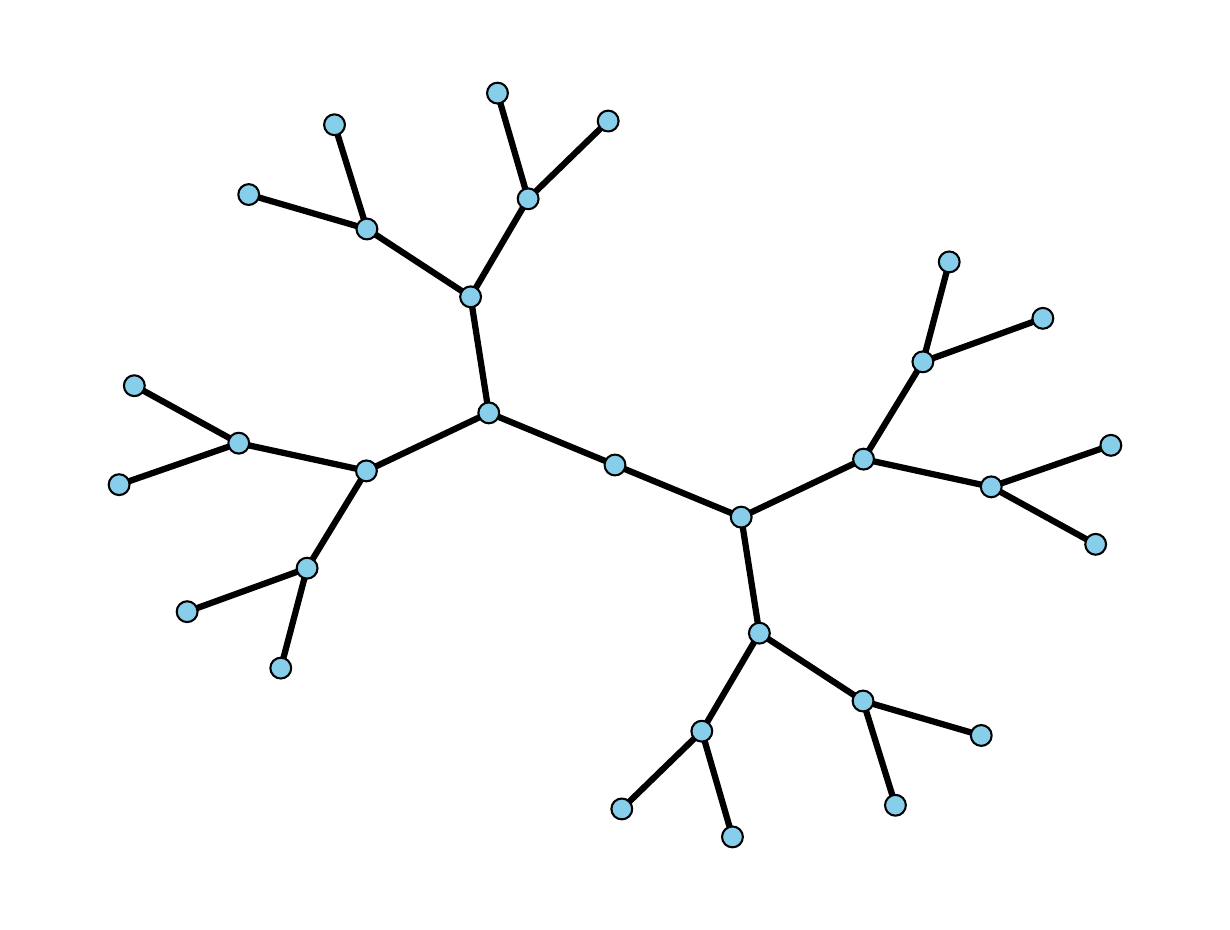}
        \label{fig:case_tree_raw}
    \end{subfigure}\hfill
    \begin{subfigure}[t]{0.22\textwidth}
        \centering
        \includegraphics[width=\linewidth]{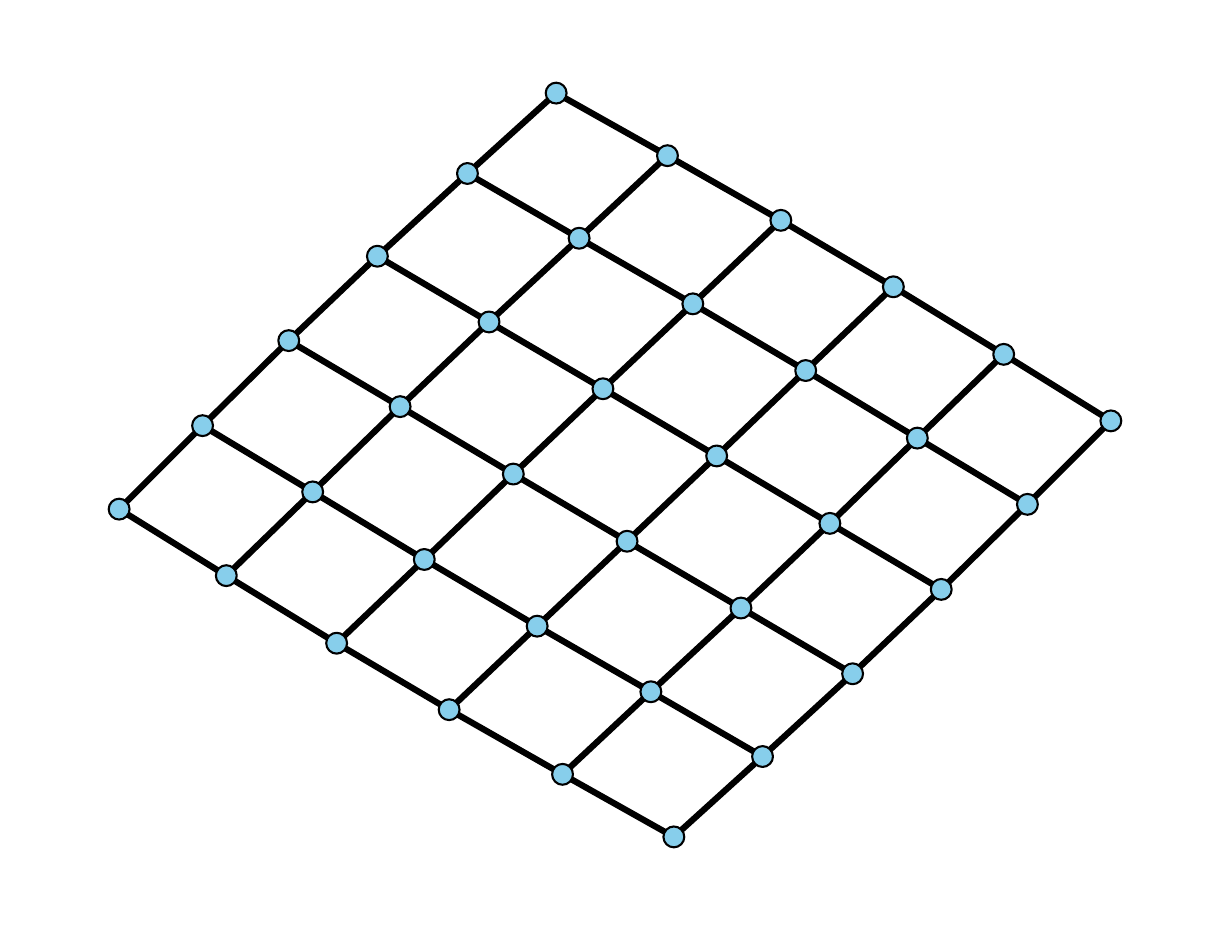}
        \label{fig:case_grid_raw}
    \end{subfigure}\hfill
    \begin{subfigure}[t]{0.22\textwidth}
    \centering
    \includegraphics[width=\linewidth]{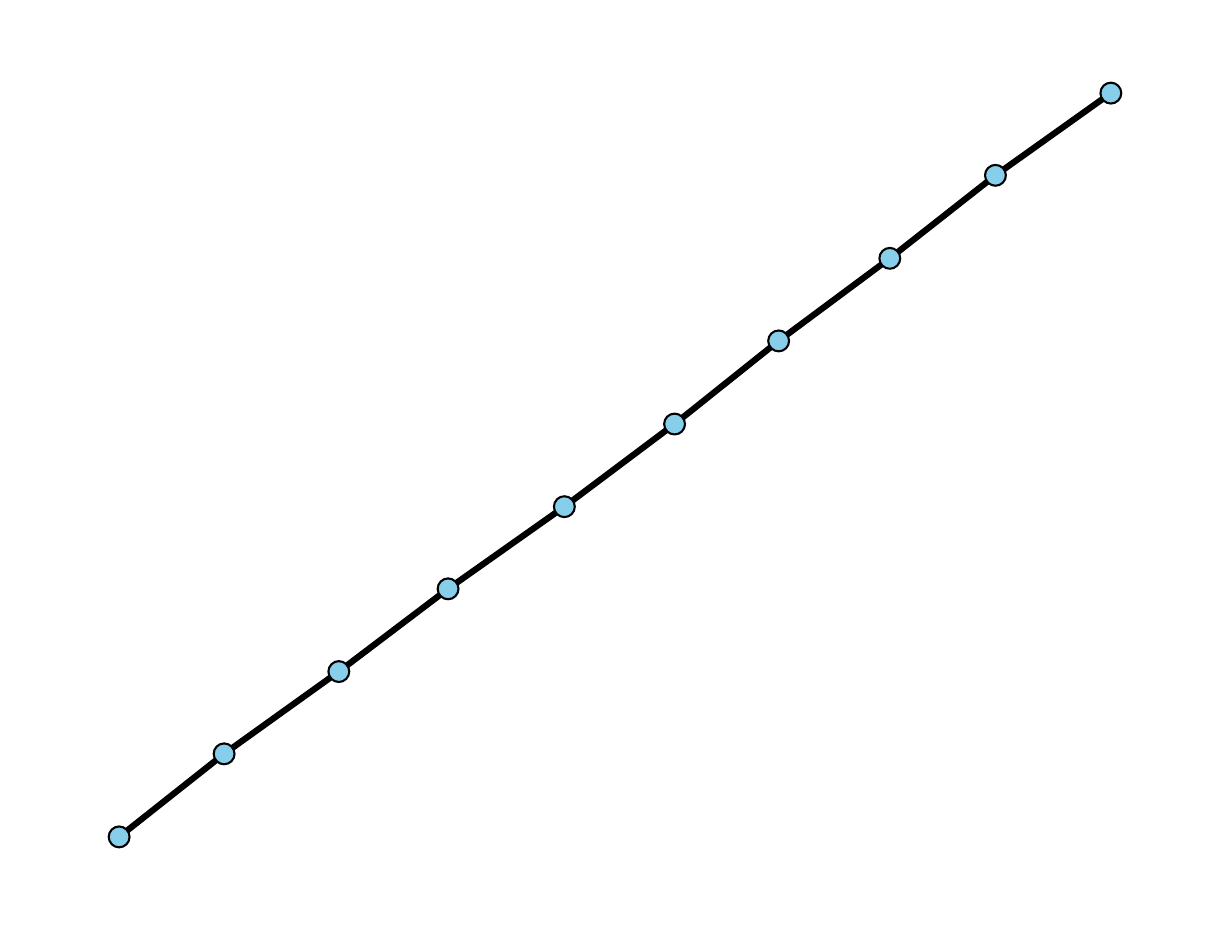}
    \label{fig:case_path_raw}
    \end{subfigure}\hfill
    \begin{subfigure}[t]{0.22\textwidth}
    \centering
    \includegraphics[width=\linewidth]{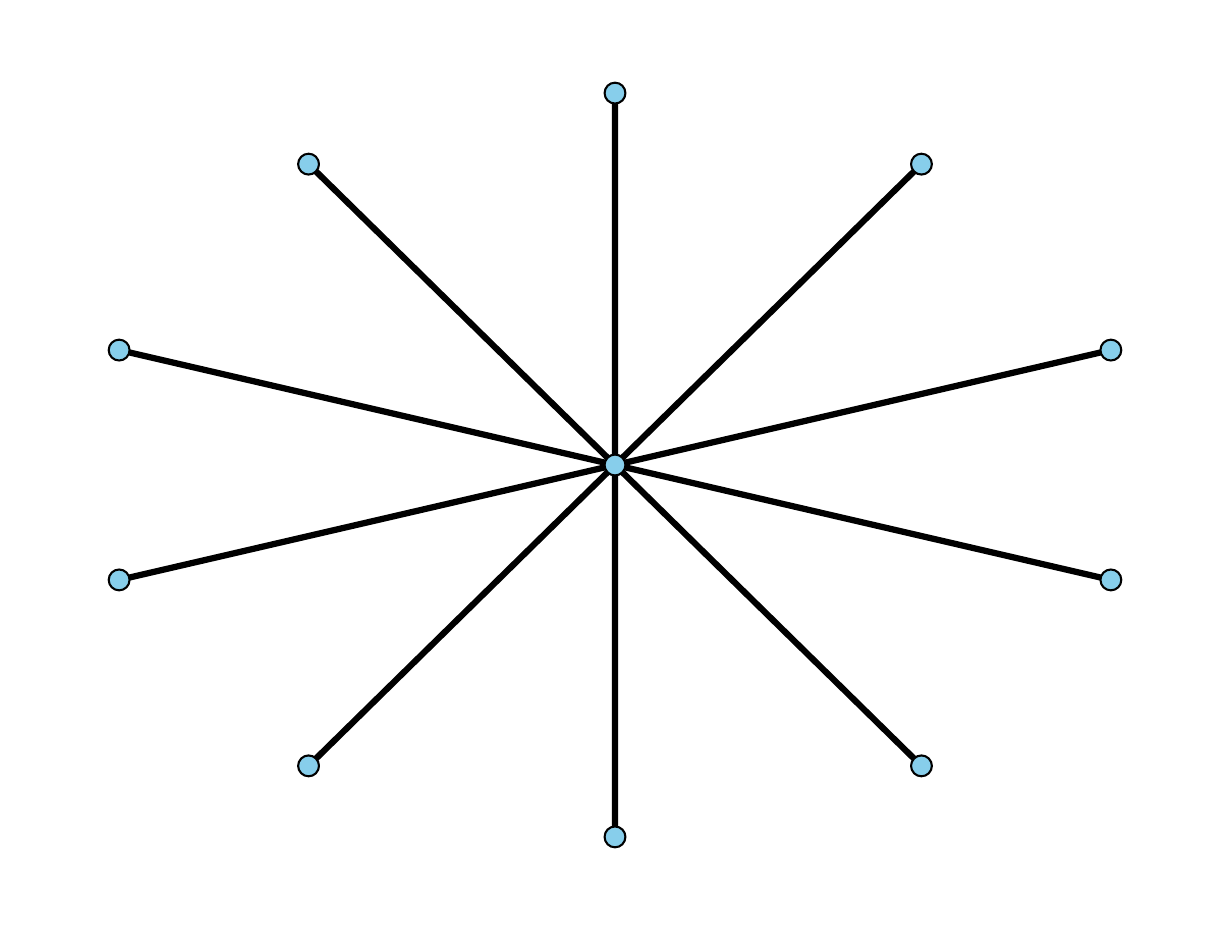}
    \label{fig:case_star_cs}
    \end{subfigure}

    % ---- Row 2: transferable structures ----
    \begin{subfigure}[t]{0.22\textwidth}
        \centering
        \includegraphics[width=\linewidth]{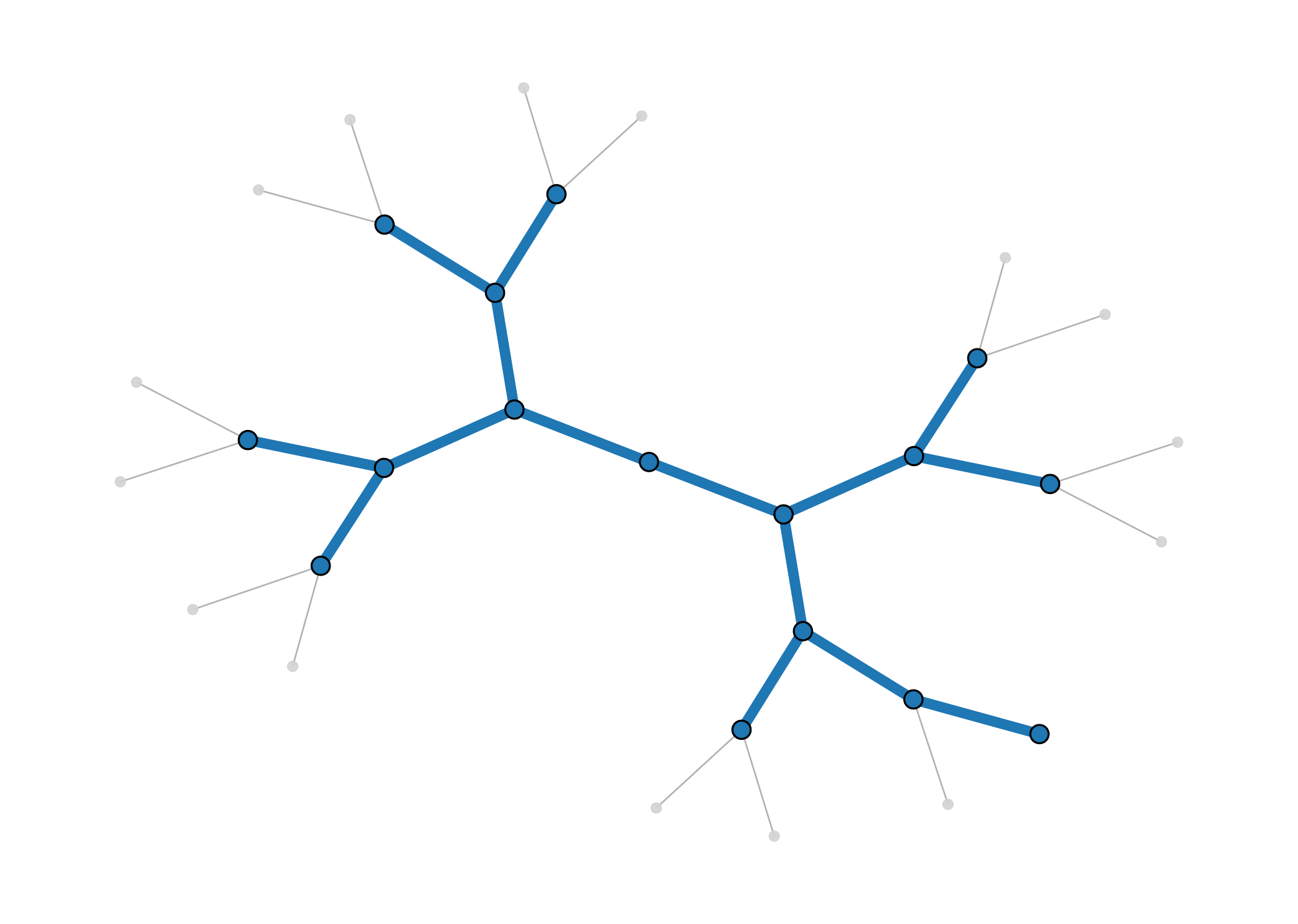}
        \caption{Binary Tree.}
        \label{fig:case_tree_cs}
    \end{subfigure}\hfill
    \begin{subfigure}[t]{0.22\textwidth}
        \centering
        \includegraphics[width=\linewidth]{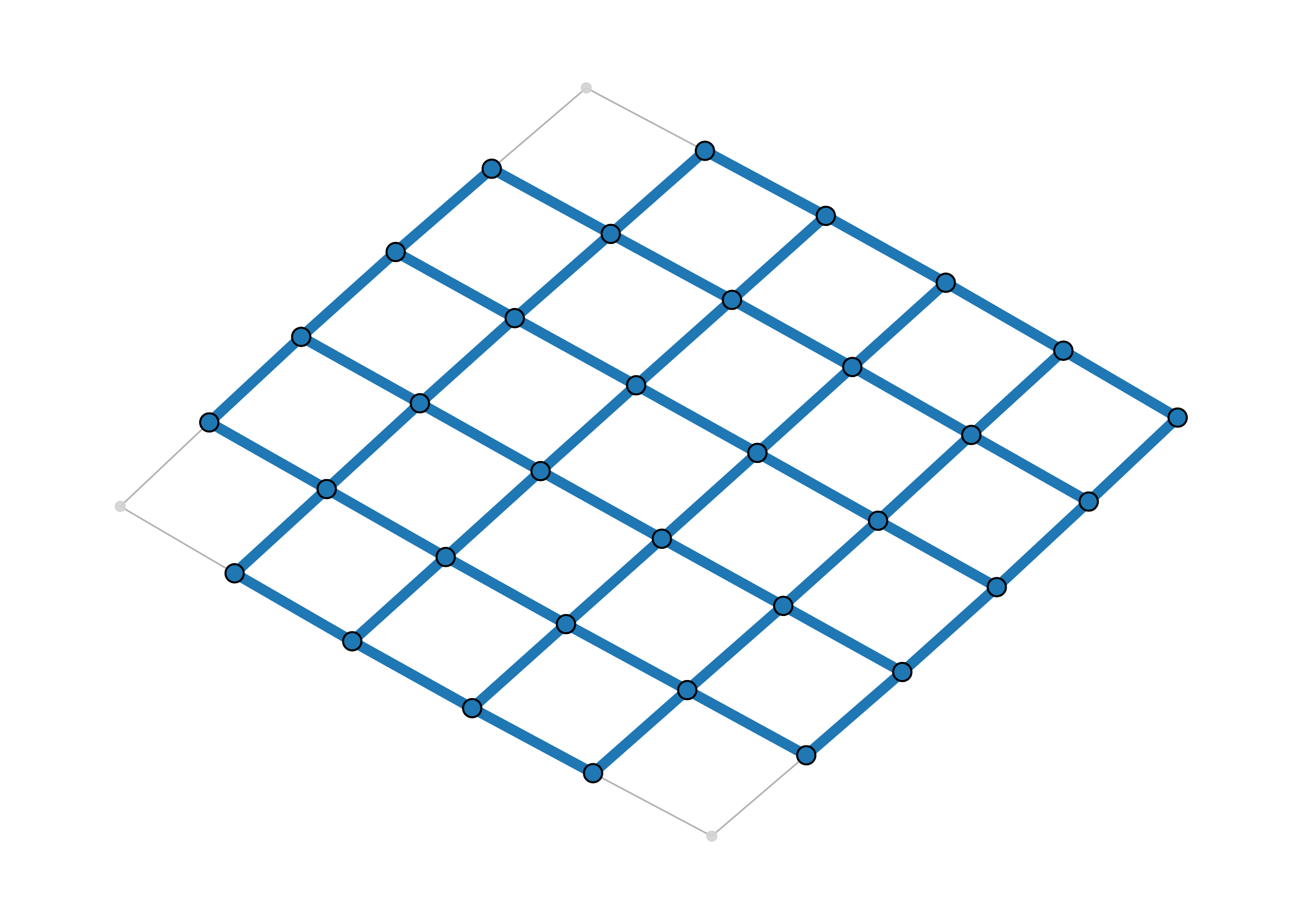}
        \caption{Grid.}
        \label{fig:case_grid_cs}
    \end{subfigure}\hfill
    \begin{subfigure}[t]{0.22\textwidth}
    \centering
    \includegraphics[width=\linewidth]{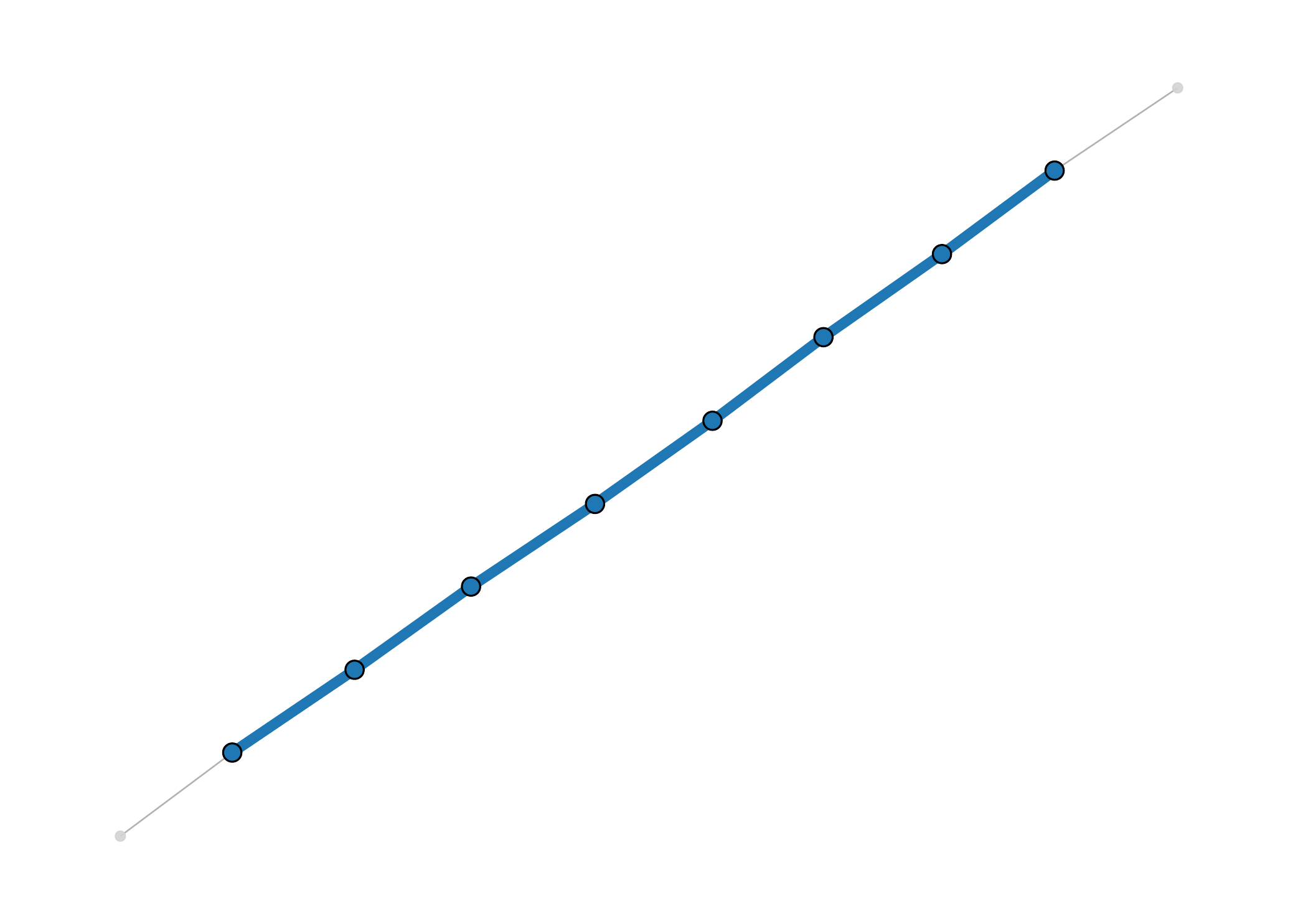}
    \caption{Path.}
    \label{fig:case_path_cs}
    \end{subfigure}\hfill
    \begin{subfigure}[t]{0.22\textwidth}
    \centering
    \includegraphics[width=\linewidth]{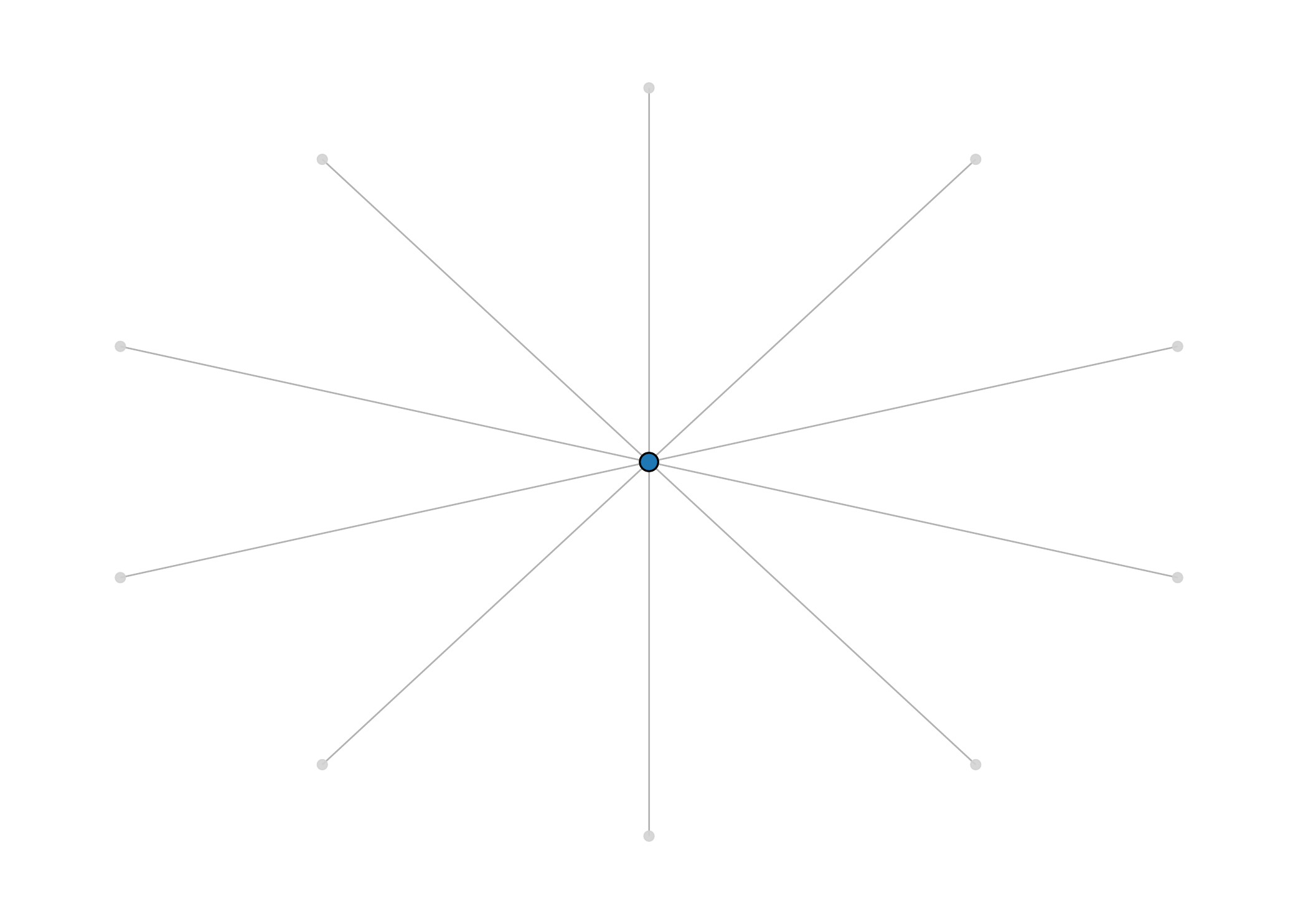}
    \caption{Star.}
    \label{fig:case_star_cs}
    \end{subfigure}
    \caption{Case study on typical typologies. Identified invariant substructures are highlighted.}
    \label{fig:case_study}
    \vspace{-0.15in}
\end{figure*}
\paragraph{Case Study} 
We conduct a case study to evaluate the structural knowledge acquired by \textsc{Gauge} during pretraining, with the designed decoding method in Algorithm~\ref{alg. decode}. The details of the decoding process are given in Appendix~\ref{app:algorithms}.
We show some typical topologies (e.g., binary trees, grids, path and star).
In Fig.~\ref{fig:case_study}, we highlight the invariant structures identified by pretrained \textsc{Gauge}, and they align closely with the actual structural patterns in the target graphs.
This demonstrates that \textsc{Gauge} has acquired the behavioral mechanisms of diverse topologies, even for previously unseen ones. 
% Also, it generalizes to path and star graphs, and the visualizations are provided in Appendix~\ref{sec:more visual}.
\vspace{-0.2cm}
\begin{algorithm}[H]
\caption{Decoding Invariant Structures}
\label{alg. decode}
\begin{algorithmic}[1]
\REQUIRE Graph $\mathcal{G} = (\mathcal{V}, \mathcal{E})$, node embeddings $\mathbf{Z} \in \mathbb{R}^{|\mathcal{V}| \times d}$ encoded by pretrained model $\Phi$, small thresholds $\varepsilon > \tau > 0$.
\FOR{each $v_i \in \mathcal{V}$}
    \STATE Compute $e(v_i)=\left\| \left(\mathbf{Q}^{(i)}\right)^\top(\boldsymbol{z}_i) - \frac{1}{|\mathcal{N}_i|} \sum_{j\in\mathcal{N}_i}\left(\mathbf{Q}^{(j)}\right)^\top(\boldsymbol{z}_j) \right\|_2.$
\ENDFOR
\STATE Let $V_{\leq\varepsilon}=\{v_i|e(v_i)\leq \varepsilon\}$.
\STATE Find all connected components $\{\mathcal{T}_1, \dots, \mathcal{T}_K\}$ from $V_{\leq\varepsilon}$.
\STATE Initialize $\mathcal{C} \gets \emptyset$.

\FOR{each component $\mathcal{T}_k$}
    \STATE Compute average error: $\chi(\mathcal{T}_k) \gets \frac{1}{|\mathcal{T}_k|} \sum_{v \in \mathcal{T}_k} e(v)$
    \IF{$\chi(\mathcal{T}_k) \leq \tau$}
        \STATE $\mathcal{C} \gets \mathcal{C} \cup \{\mathcal{T}_k\}$
    \ENDIF
\ENDFOR
\STATE Return $\mathcal{C}$.
\end{algorithmic}
\end{algorithm}

\vspace{-0.7cm}
\section{Conclusion}
This paper moves beyond common substructures and, from a functional behavior perspective,  provides a principled understanding within Riemannian geometry—the transferable structure is related to intrinsic geometry.
We propose a neural vector bundle for graph intrinsic geometry learning, characterizing an abstract manifold with local coordinates. 
Building on this, we design \textsc{Gauge}, a pretraining architecture accompanied by a Dirichlet loss, facilitating precise knowledge transfer and transfer effort measurement. 
Beyond its state-of-the-art performance, \textsc{Gauge} empirically demonstrates its superiority in challenging tasks, including zero-shot link prediction and graph isomorphism.

\vspace{-0.2cm}
\section*{Impact Statement}

Our work bridges two previously distinct research areas—graph knowledge transfer and intrinsic geometry—thereby enriching the geometric toolkit available in machine learning and establishing a new theoretical cornerstone for the development of graph foundation model. 
\vspace{-0.15in}
\begin{itemize}
\item In theory, we introduce the concept of vector bundles into learning on graphs, 
stemming from the solid mathematics pioneered by Élie Joseph Cartan and Shiing-Shen Chern, 
and formulate its effective deep learning framework to characterize intrinsic graph geometry. 
This opens new avenues for advanced graph learning. 
\item In practice, given the ubiquity of graph-structured data in crucial real-world applications, 
a profound understanding of graph knowledge transfer can reduce the need for extensive model retraining, 
thereby conserving computational power and lowering carbon emissions. 
\end{itemize}
\vspace{-0.15in}
This paper does not involve crowdsourcing nor research with human subjects, and none of negative societal impacts we feel must be specifically highlighted here.

\section*{Acknowledgment}
This work is supported in part by NSFC under grants 62550138 and 62202164. Philip S. Yu is supported in part by NSF under grants III-2106758 and POSE-2346158.

\bibliography{icml2026}
\bibliographystyle{icml2026}

%%%%%%%%%%%%%%%%%%%%%%%%%%%%%%%%%%%%%%%%%%%%%%%%%%%%%%%%%%%%%%%%%%%%%%%%%%%%%%%
%%%%%%%%%%%%%%%%%%%%%%%%%%%%%%%%%%%%%%%%%%%%%%%%%%%%%%%%%%%%%%%%%%%%%%%%%%%%%%%
% APPENDIX
%%%%%%%%%%%%%%%%%%%%%%%%%%%%%%%%%%%%%%%%%%%%%%%%%%%%%%%%%%%%%%%%%%%%%%%%%%%%%%%
%%%%%%%%%%%%%%%%%%%%%%%%%%%%%%%%%%%%%%%%%%%%%%%%%%%%%%%%%%%%%%%%%%%%%%%%%%%%%%%
\newpage
\appendix
\onecolumn
% 生成附录目录
\etocdepthtag.toc{atoc}        
\etocsettagdepth{mtoc}{none}   
\etocsettagdepth{atoc}{subsection} % 显示深度：显示到 subsection 

\setcounter{tocdepth}{3}

% 目录头样式：居中、粗体、小型大写字母 (APPENDIX: TABLE OF CONTENTS)
\etocsettocstyle{%
    \begin{center}
        \Large \bfseries \scshape Appendix: Table of Contents
    \end{center}
    \vspace{0.5cm} % 标题和列表之间的距离
}{}%

\etocsetstyle{section}
    {}                  % 组开始
    {}                  % 每一项的前缀
    {%                  % 每一项的内容
       
        \noindent
        \large
        \textbf{\etocnumber\hspace{0.8em}\etocname}% 编号+名称 (加粗)
        \nobreak
        \leaders\hbox{\normalsize $\cdot$}\hfill    % 居中的点线
        \textbf{\etocpage}%                          页码 (加粗)
        \par
    }
    {}                  % 组结束
\etocsetstyle{subsection}
    {}                  % 组开始
    {}                  % 前缀
    {%                  % 内容
        \noindent
        \hspace{2.5em}  % 左侧缩进
        \etocnumber\hspace{0.8em}\etocname % 编号+名称 (普通字体)
        \nobreak
        \leaders\hbox{\normalsize $\cdot$}\hfill    % 点线
        \etocpage                           % 页码
        \par
    }
    {}                  % 结束

\tableofcontents % 生成目录
\newpage
%!TEX root = ./main.tex
\section{Notations}
\label{subsec:notation_table}

We summarize the key notations used throughout the paper in Table~\ref{tab:notations}. The symbols are grouped by conceptual categories for clarity.

\begin{table}[h]
\centering
\small
\begin{tabular}{@{}ll@{}}
\toprule
\textbf{Symbol} & \textbf{Description} \\
\midrule
\multicolumn{2}{l}{\textit{Graph Structure and Representations}} \\
$G = (V, E, X)$ & Input graph with node set $V$, edge set $E$, and node features $X \in \mathbb{R}^{N \times D}$ \\
$N = |V|$ & Number of nodes \\
$d$ & Dimension of hidden representations \\
$z_i \in \mathbb{R}^d$ & Global embedding of node $v_i$ \\
$z_i^{(l)}$ & Embedding of node $v_i$ at layer $l$ \\
$N_i$ & Set of neighbors of node $v_i$ \\
$T \subseteq V$ & A connected substructure (subgraph) of $G$ \\
$T_k$ & The $k$-th identified invariant structure \\
\midrule
\multicolumn{2}{l}{\textit{Neural Vector Bundle Geometry}} \\
$\mathcal{M}, g$ & Abstract base manifold and its Riemannian metric \\
$\pi: \mathcal{E} \to \mathcal{M}$ & Projection map of the vector bundle \\
$\mathcal{E}_{v_i}$ & Fiber (vector space of rank $r$) attached to node $v_i$ \\
$r$ & Rank (dimension) of each fiber \\
$\varphi_i: \mathcal{E}_{v_i} \to \{v_i\} \times \mathbb{R}^r$ & Local trivialization at node $v_i$ \\
$Q(i) \in \mathbb{R}^{d \times r}$ & Orthogonal matrix representing $\varphi_i$, satisfying $Q(i)^\top Q(i) = I_r$ \\
$P_{(i,j)} = Q(i)^\top Q(j) \in \mathbb{R}^{r \times r}$ & Edge-wise parallel transport from $v_j$ to $v_i$ \\
$x_i = Q(i)^\top z_i \in \mathbb{R}^r$ & Local coordinate (fiber representation) of $z_i$ \\
\midrule
\multicolumn{2}{l}{\textit{Energy, Error, and Transferability}} \\
$\mathcal{E}_{\mathrm{Dir}}[\mathcal{E}]$ & Bundle Dirichlet energy: $\frac{1}{|E|} \sum_{(i,j)\in E} \|x_i - P_{(i,j)} x_j\|^2$ \\
$\mathcal{E}_{\mathrm{Dir}}[Q]$ & Dirichlet energy of local trivializations: $\frac{1}{|E|} \sum_{(i,j)\in E} \|Q(i) - Q(j)\|_F^2$ \\
$\mathcal{E}_{\mathrm{Dir}}[Z]$ & Dirichlet energy of global embeddings: $\frac{1}{|E|} \sum_{(i,j)\in E} \|z_i - z_j\|^2$ \\
$e(v_i)$ & Local predictive error at node $v_i$: $\left\| Q(i)^\top z_i - \frac{1}{|N_i|} \sum_{j \in N_i} Q(j)^\top z_j \right\|_2$ \\
$\chi(T)$ & Average predictive error over substructure $T$: $\frac{1}{|T|} \sum_{v \in T} e(v)$ \\
$\tau, \varepsilon$ & Thresholds for defining $\tau$-invariant structures ($\varepsilon > \tau > 0$) \\
\midrule
\multicolumn{2}{l}{\textit{Model Components and Optimization}} \\
$\Phi$ & Graph encoder model \\
$\hat{q}^{(h),l}_i$ & Residual vector from head $h$ at layer $l$ \\
$\alpha^{(h),l}_{ij}$ & Attention weight in multi-path projection \\
$g_{ij}$ & Consistency-based gating weight for trivialization aggregation \\
$\gamma \in (0,1)$ & Momentum coefficient in gated flattening \\
$\mathcal{L}(G)$ & Characteristic Structure Predictive Loss \\
$\lambda$ & Regularization strength during fine-tuning \\
$\tilde{z}_i^{(l-1)} = Q(i)^{(l-1)} Q(i)^{(l-1)\top} z_i^{(l-1)}$ & Projected embedding onto fiber subspace \\
\bottomrule
\end{tabular}
\vspace{0.1in}
\caption{Summary of key notations used in the paper.}
\label{tab:notations}
\end{table}

\vspace{-0.05in}
\section{Background}
\label{subsec:background}

\paragraph{Riemannian Manifolds.}
A \emph{smooth manifold} $\mathcal{M}$ of dimension $n$ is a topological space that is locally homeomorphic to $\mathbb{R}^n$, equipped with a maximal smooth atlas. A \emph{Riemannian metric} $g$ on $\mathcal{M}$ is a smoothly varying inner product $g_p : T_p\mathcal{M} \times T_p\mathcal{M} \to \mathbb{R}$ on each tangent space $T_p\mathcal{M}$. The pair $(\mathcal{M}, g)$ is called a \emph{Riemannian manifold}. This structure enables intrinsic notions of length, angle, geodesics, and curvature, forming the foundation for geometric analysis on continuous domains.

\paragraph{Vector Bundles.}
A \emph{real vector bundle} of rank $r$ over a base manifold $\mathcal{M}$ is a smooth surjective map $\pi: \mathcal{E} \to \mathcal{M}$ such that:
\begin{enumerate}
    \item For each $p \in \mathcal{M}$, the fiber $\mathcal{E}_p = \pi^{-1}(p)$ is a real vector space of dimension $r$;
    \item For every $p \in \mathcal{M}$, there exists an open neighborhood $U \subset \mathcal{M}$ and a diffeomorphism $\varphi_U: \pi^{-1}(U) \to U \times \mathbb{R}^r$ (called a \emph{local trivialization}) such that $\varphi_U$ restricts to a linear isomorphism $\mathcal{E}_q \to \{q\} \times \mathbb{R}^r$ for all $q \in U$.
\end{enumerate}
The collection $\{(\varphi_U, U)\}$ defines the bundle’s local coordinate system. A key example is the tangent bundle $T\mathcal{M}$, where each fiber is $T_p\mathcal{M}$.

To relate fibers at different points, one introduces a \emph{connection}—a rule for parallel transport of vectors along curves. In local coordinates, a connection is represented by a $\mathfrak{gl}(r)$-valued 1-form $\omega$, and its curvature measures the failure of parallel transport to be path-independent. A connection is \emph{flat} if its curvature vanishes identically, which implies that parallel transport around any contractible loop is the identity (i.e., trivial holonomy).

\paragraph{Discrete Differential Geometry.}
Discrete differential geometry (DDG) studies analogues of smooth geometric structures on discrete domains such as simplicial complexes or graphs. In this setting, a graph $G = (V, E)$ is viewed as a 1-dimensional simplicial complex approximating an underlying manifold. Key correspondences include:
\begin{itemize}
    \item Smooth functions $\leftrightarrow$ vertex-wise signals $f: V \to \mathbb{R}^r$;
    \item Tangent vectors $\leftrightarrow$ edge differences $f(j) - f(i)$ for $(i,j) \in E$;
    \item Exterior derivative $\leftrightarrow$ graph gradient operator;
    \item Laplace–Beltrami operator $\leftrightarrow$ graph Laplacian.
\end{itemize}
While higher-order structures (e.g., curvature) are subtle in pure 1-complexes, they become meaningful when enriched with 2-simplices (triangles) or via spectral or combinatorial definitions.

\paragraph{Discrete Vector Bundles and Connections.}
A \emph{discrete vector bundle} over a graph $G = (V, E)$ assigns to each node $v_i \in V$ a vector space $\mathcal{E}_{v_i} \cong \mathbb{R}^r$ (the fiber). A \emph{discrete connection} is specified by assigning to each oriented edge $(v_i, v_j) \in E$ a linear isomorphism $P_{(i,j)}: \mathcal{E}_{v_j} \to \mathcal{E}_{v_i}$, interpreted as parallel transport. The collection $\{P_{(i,j)}\}$ must satisfy $P_{(j,i)} = P_{(i,j)}^{-1}$ for undirected graphs.

For any closed walk $\gamma = (v_0, v_1, \dots, v_k = v_0)$, the composition
\[
\mathrm{Hol}(\gamma) = P_{(v_0,v_1)} P_{(v_1,v_2)} \cdots P_{(v_{k-1},v_0)}
\]
is called the \emph{holonomy} of the connection along $\gamma$. The connection is said to be \emph{flat} if $\mathrm{Hol}(\gamma) = \mathrm{id}$ for all contractible loops. On a graph without 2-simplices, all loops are non-contractible by default, so flatness is vacuously satisfied. However, when the graph is augmented with 2-cells (e.g., triangles treated as faces), flatness imposes nontrivial consistency conditions: for every triangle $(i,j,k)$, one must have
\[
P_{(i,k)} = P_{(i,j)} P_{(j,k)}.
\]
This condition ensures that local frames can be globally synchronized—a property central to the notion of geometric regularity in discrete settings.

These concepts provide the mathematical scaffolding for modeling structured representations on graphs through the lens of fiber bundles and intrinsic geometry.
\paragraph{Geodesics.}
In Riemannian geometry, a \emph{geodesic} is a smooth curve $\gamma: I \to \mathcal{M}$ (where $I \subset \mathbb{R}$ is an interval) that locally minimizes distance between points on the manifold $\mathcal{M}$. Equivalently, $\gamma$ is a geodesic if its velocity vector is parallel-transported along itself, i.e.,
\[
\nabla_{\dot{\gamma}(t)} \dot{\gamma}(t) = 0 \quad \text{for all } t \in I,
\]
where $\nabla$ denotes the Levi-Civita connection induced by the Riemannian metric $g$. This condition implies that geodesics have zero intrinsic acceleration and generalize the notion of straight lines in Euclidean space.

Given a point $p \in \mathcal{M}$ and a tangent vector $v \in T_p\mathcal{M}$, there exists a unique geodesic $\gamma_v$ such that $\gamma_v(0) = p$ and $\dot{\gamma}_v(0) = v$, at least for small time intervals. The map $\exp_p: T_p\mathcal{M} \supset U \to \mathcal{M}$ defined by $\exp_p(v) = \gamma_v(1)$ is called the \emph{exponential map}, which provides a local diffeomorphism from a neighborhood of the origin in $T_p\mathcal{M}$ to a neighborhood of $p$ in $\mathcal{M}$.

While geodesics are fundamental in continuous manifolds, their direct analogue on general graphs is not canonical due to the absence of a smooth structure. Nevertheless, shortest paths or random walks are often used as discrete proxies for geodesic behavior in graph representation learning, particularly when interpreting graphs as metric spaces.

\section{Proofs}
\label{subsec:proofs}
\subsection{Proof of Theorem~\ref{thm:intrinsic flatness}}
\label{sec:proof.intrinsic flatness}
\textbf{Theorem~\ref{thm:intrinsic flatness} }(\textbf{Invariance \& Intrinsic Flatness~\cite{berwickevans2023discretevectorbundle}})
\emph{
       A discrete connection of $E$ is flat if and only if holonomy around every $2$-simplex is the identity. If there is no $2$-simplex, the flat condition is automatically satisfied.
}
\begin{proof}
    Recall that a discrete connection is said to be \emph{flat} precisely when parallel transport between two vertices depends only on the simple homotopy class of the path joining them. Simple homotopies are generated by elementary moves that replace a two-edge subpath $v' \to v \to v''$ with the direct edge $v' \to v''$, or vice versa, whenever $\{v',v,v''\}$ spans a $2$-simplex in $X$.

    Suppose first that the holonomy around every $2$-simplex is trivial. Then for any such triangle $[ijk]$, we have $U_{ik} = U_{ij}U_{jk}$, which means that the two ways of transporting from $i$ to $k$—either directly or via $j$—agree. Consequently, parallel transport is invariant under each elementary simple homotopy, and hence under arbitrary simple homotopies. This is exactly the definition of flatness.

    Conversely, assume the connection is flat. Consider any $2$-simplex $[ijk]$. The two paths $i \to k$ and $i \to j \to k$ are simply homotopic (they bound the same triangle), so flatness forces their associated parallel transports to coincide: $U_{ik} = U_{ij}U_{jk}$. Rearranging gives $U_{ki}U_{jk}U_{ij} = \mathrm{id}_{E_i}$, i.e., the holonomy around the loop $i \to j \to k \to i$ is the identity.

    Finally, if $X$ has no $2$-simplices, there are no nontrivial elementary simple homotopies at all. Every path is only simply homotopic to itself, so the requirement that parallel transport be constant on simple homotopy classes is automatically satisfied. Equivalently, since curvature is defined on $2$-simplices and there are none, the curvature vanishes identically, which again implies flatness.
\end{proof}
\subsection{Proof of Theorem~\ref{theorem. flatness}.}
\label{sec:proof.flatness}
\textbf{Theorem~\ref{theorem. flatness} }(\textbf{Local Invariance Measure.})
\emph{
Denote by $\mathcal{Q}(v_i) = \mathbf{Q}^{(i)}$ the local trivializations at each node $v_i$, and let $\mathbf{Z} = \{\boldsymbol{z}_i\}_{i \in \mathcal{V}}$ be node embeddings satisfying $\max_{i} \|\boldsymbol{z}_i\| \leq M$. Then, the following \textbf{upper bound} holds:
    \begin{equation}
            \mathcal{E}_{\mathrm{Dir}}[E] \leq (4M^2 + 1)(\mathcal{E}_{\mathrm{Dir}}[\mathcal{Q}] + \mathcal{E}_{\mathrm{Dir}}[\mathbf{Z}]),
    \end{equation}
    where $\mathcal{E}_{\mathrm{Dir}}[\mathcal{Q}]$ and $\mathcal{E}_{\mathrm{Dir}}[\mathbf{Z}]$ denote the Dirichlet energies associated with the local trivializations and the node embeddings, respectively.
}
\begin{proof}
We analyze the bundle Dirichlet energy term by term. By Definition~\ref{def. parallel transport}, the parallel transport matrix is
\[
    \mathbf{P}^{(i, j)}=(\mathbf{Q}^{(i)})^\top \mathbf{Q}^{(j)}.
\] 
For edge $(i,j)\in \mathcal{E}$, we have
\begin{align}
    &\left\| \boldsymbol{x}_i - \mathbf{P}^{(i,j)}\boldsymbol{x}_j \right\| = \left\| \boldsymbol{x}_i - (\mathbf{Q}^{(i)})^\top \mathbf{Q}^{(j)}\boldsymbol{x}_j \right\| \nonumber \\
    &=\left\| (\mathbf{Q}^{(i)})^\top(\mathbf{Q}^{(i)}\boldsymbol{x}_i -  \mathbf{Q}^{(j)}\boldsymbol{x}_j) \right\| \nonumber \\
    &\leq \left\|\mathbf{Q}^{(i)}\right\| \left\| \mathbf{Q}^{(i)}\boldsymbol{x}_i -  \mathbf{Q}^{(j)}\boldsymbol{x}_j \right\| \nonumber \\
    &= \left\|\mathbf{Q}^{(i)}\right\| \left\| \mathbf{Q}^{(i)}\boldsymbol{x}_i - \mathbf{Q}^{(j)}\boldsymbol{x}_i + \mathbf{Q}^{(j)}\boldsymbol{x}_i - \mathbf{Q}^{(j)}\boldsymbol{x}_j \right\| \nonumber \\
    &\leq \left\| \mathbf{Q}^{(i)}\boldsymbol{x}_i - \mathbf{Q}^{(j)}\boldsymbol{x}_i \right\| + \left\| \mathbf{Q}^{(j)}\boldsymbol{x}_i - \mathbf{Q}^{(j)}\boldsymbol{x}_j \right\| \nonumber \\
    &\leq  \left\| \mathbf{Q}^{(i)} - \mathbf{Q}^{(j)} \right\| \| \boldsymbol{x}_i \| + \left\|\boldsymbol{x}_i - \boldsymbol{x}_j\right\|.
\end{align}
For $\left\|\boldsymbol{x}_i - \boldsymbol{x}_j\right\|$, we represent it by $\boldsymbol{z}_i$.
\begin{align}
    &\left\|\boldsymbol{x}_i - \boldsymbol{x}_j\right\|=\left\| (\mathbf{Q}^{(i)})^\top\boldsymbol{z}_i -  (\mathbf{Q}^{(j)})^\top\boldsymbol{z}_j \right\| \nonumber \\
    &=\left\| (\mathbf{Q}^{(i)})^\top\boldsymbol{z}_i - (\mathbf{Q}^{(j)})^\top\boldsymbol{z}_i + (\mathbf{Q}^{(j)})^\top\boldsymbol{z}_i - (\mathbf{Q}^{(j)})^\top\boldsymbol{z}_j \right\| \nonumber \\
    &\leq \left\| (\mathbf{Q}^{(i)})^\top\boldsymbol{z}_i -  (\mathbf{Q}^{(j)})^\top\boldsymbol{z}_i \right\| + \left\| (\mathbf{Q}^{(j)})^\top\boldsymbol{z}_i -  (\mathbf{Q}^{(j)})^\top\boldsymbol{z}_j \right\| \nonumber \\
    &\leq  \left\| \mathbf{Q}^{(i)} - \mathbf{Q}^{(j)} \right\| \| \boldsymbol{z}_i \| + \left\|\boldsymbol{z}_i - \boldsymbol{z}_j\right\|.
\end{align}
Substituting into last inequality, we obtain
\begin{align}
    \left\| \boldsymbol{x}_i - \mathbf{P}^{(i,j)}\boldsymbol{x}_j \right\| &\leq \left\| \mathbf{Q}^{(i)} - \mathbf{Q}^{(j)} \right\| \| \boldsymbol{x}_i \| + \left\| \mathbf{Q}^{(i)} - \mathbf{Q}^{(j)} \right\| \| \boldsymbol{z}_i \| + \left\|\boldsymbol{z}_i - \boldsymbol{z}_j\right\| \nonumber \\
    &= \left\| \mathbf{Q}^{(i)} - \mathbf{Q}^{(j)} \right\| (\| \boldsymbol{x}_i \| + \| \boldsymbol{z}_i \|) + \left\|\boldsymbol{z}_i - \boldsymbol{z}_j\right\| \nonumber \\
    &= \left\| \mathbf{Q}^{(i)} - \mathbf{Q}^{(j)} \right\| (\| (\mathbf{Q}^{(i)})^\top\boldsymbol{z}_i \| + \| \boldsymbol{z}_i \|) + \left\|\boldsymbol{z}_i - \boldsymbol{z}_j\right\| \nonumber \\
    &\leq 2\left\| \mathbf{Q}^{(i)} - \mathbf{Q}^{(j)} \right\| \| \boldsymbol{z}_i \| + \left\|\boldsymbol{z}_i - \boldsymbol{z}_j\right\|.
\end{align}
Taking square at the two sides, we have
\begin{align}
    \left\| \boldsymbol{x}_i - \mathbf{P}^{(i,j)}\boldsymbol{x}_j \right\|^2 &\leq (4\| \boldsymbol{z}_i \|^2 + 1)(\left\| \mathbf{Q}^{(i)} - \mathbf{Q}^{(j)} \right\|^2  + \left\|\boldsymbol{z}_i - \boldsymbol{z}_j\right\|^2)
\end{align}
Then, we sum every edge $(i,j)\in \mathcal{E}$ up and compute the average, since $\|\boldsymbol{z}_i \| \leq M$,
\begin{align}
    \frac{1}{|\mathcal{E}|}\sum_{(i,j)\in\mathcal{E}}\left\| \boldsymbol{x}_i - \mathbf{P}^{(i,j)}\boldsymbol{x}_j \right\|^2 \leq (4M^2 + 1)\left(\frac{1}{|\mathcal{E}|}\sum_{(i,j)\in\mathcal{E}}\left\| \mathbf{Q}^{(i)} - \mathbf{Q}^{(j)} \right\|^2 + \frac{1}{|\mathcal{E}|}\sum_{(i,j)\in\mathcal{E}}\left\|\boldsymbol{z}_i - \boldsymbol{z}_j\right\|^2\right),
\end{align} 
which means
\begin{align}
    \mathcal{E}_\mathrm{Dir}[E] \leq (4M^2 + 1)(\mathcal{E}_{\mathrm{Dir}}[\mathcal{Q}] + \mathcal{E}_{\mathrm{Dir}}[\mathbf{Z}]) . \nonumber
\end{align}
\end{proof}
\subsection{Proof of Theorem~\ref{theorem. gated flatten}.}
\label{sec:proof.gated flatten}
\textbf{Theorem~\ref{theorem. gated flatten} }(\textbf{Monotonous Flattening.})
\emph{
    Given a graph $\mathcal{G}=(\mathcal{V}, \mathcal{E})$, let $\mathcal{Q}^{k}$ be the set of local trivializations over $\mathcal{G}$ in the $k$-th layer of gated energy-oriented flatten, with momentum coefficient $\gamma \in (0,1)$.    
    Then, the layer-wise Dirichlet energy is non-increasing:
    \[
        \mathcal{E}_{\mathrm{Dir}}[\mathcal{Q}^{k}] \leq \mathcal{E}_{\mathrm{Dir}}[\mathcal{Q}^{k-1}].
    \]
    Moreover, the final local trivializations satisfies
    \[
        \mathcal{E}_{\mathrm{Dir}}[\mathcal{Q}^{k}] \leq (1 - C\cdot \gamma)^K \mathcal{E}_{\mathrm{Dir}}[\mathcal{Q}^{0}],
    \]
    where $C = 1-\lambda^2$, $\lambda$ is the maximal singular value of matrix $\mathbf{G}=[g_{ij}]$.    
}

\emph{
    Then, the layer-wise Dirichlet energy is non-increasing:
    \[
        \mathcal{E}_{\mathrm{Dir}}[\mathcal{Q}^{k}] \leq \mathcal{E}_{\mathrm{Dir}}[\mathcal{Q}^{k-1}].
    \]
    Moreover, the final local trivializations satisfies
    \[
        \mathcal{E}_{\mathrm{Dir}}[\mathcal{Q}^{k}] \leq (1 - C\cdot \gamma)^K \mathcal{E}_{\mathrm{Dir}}[\mathcal{Q}^{0}],
    \]
    where $C = 1-\lambda^2$, $\lambda$ is the maximal singular value of matrix $\mathbf{G}=[g_{ij}]$.
}
\begin{proof}
    The Dirichlet energy of local trivializations is
    \begin{align}
        \mathcal{E}_{\mathrm{Dir}}[\mathcal{Q}^{k}]=\frac{1}{|\mathcal{E}|}\sum_{(i,j)\in\mathcal{E}}\left\| \mathbf{Q}^{(i),k} - \mathbf{Q}^{(j),k} \right\|^2.
    \end{align}
    For edge $(i, j) \in \mathcal{E}$ and $(k+1)$-th layer, by Jensen's inequality, 
    \begin{align}
        \left\| \mathbf{Q}^{(i),k+1} - \mathbf{Q}^{(j),k+1} \right\|^2
        &=\left\|(1 - \gamma)\mathbf{Q}^{(i),k} + \gamma \tilde{\mathbf{Q}}^{(i),k} -(1 - \gamma)\mathbf{Q}^{(j),k} - \gamma \tilde{\mathbf{Q}}^{(j),k} \right\|^2 \nonumber \\
        &\leq (1 - \gamma) \left\|\mathbf{Q}^{(i),k} -\mathbf{Q}^{(j),k} \right\|^2+ \gamma \left\| \tilde{\mathbf{Q}}^{(i),k}  - \tilde{\mathbf{Q}}^{(j),k} \right\|^2.
        \label{eq:4.2-1}
    \end{align}
    Now, we analyze the term $\left\| \tilde{\mathbf{Q}}^{(i),k}  - \tilde{\mathbf{Q}}^{(j),k} \right\|^2$.
    Since $\tilde{\mathbf{Q}}^{(i),k} = \sum_{m \in \mathcal{N}_i}g_{im}\mathbf{Q}^{(m),k}$, and $\sum_{m\in\mathcal{N}_i}g_{im}=1$, apply Jensen's inequality again, we have
    \begin{align}
        \left\| \tilde{\mathbf{Q}}^{(i),k}  - \tilde{\mathbf{Q}}^{(j),k} \right\|^2
        &=\left\|\sum_{m \in \mathcal{N}_i}g_{im}\mathbf{Q}^{(m),k}  - \sum_{n \in \mathcal{N}_j}g_{jn}\mathbf{Q}^{(n),k} \right\|^2 \nonumber \\
        &=\left\|\sum_{m \in \mathcal{N}_i}(\sum_{n \in \mathcal{N}_j}g_{jn})g_{im}\mathbf{Q}^{(m),k}  - \sum_{n \in \mathcal{N}_j}g_{jn}(\sum_{m \in \mathcal{N}_i}g_{im})\mathbf{Q}^{(n),k} \right\|^2 \nonumber \\
        &=\left\|\sum_{m \in \mathcal{N}_i}\sum_{n \in \mathcal{N}_j}g_{jn}g_{im}(\mathbf{Q}^{(m),k}  - \mathbf{Q}^{(n),k}) \right\|^2 \nonumber \\
        &=\sum_{m \in \mathcal{N}_i}\sum_{n \in \mathcal{N}_j}g_{jn}g_{im}\left\|\mathbf{Q}^{(m),k}  - \mathbf{Q}^{(n),k}\right\|^2
    \end{align}
    Substituting into Eq. (\ref{eq:4.2-1}), we obtain
    \begin{align}
        \left\| \mathbf{Q}^{(i),k+1} - \mathbf{Q}^{(j),k+1} \right\|^2&\leq (1 - \gamma) \left\|\mathbf{Q}^{(i),k} -\mathbf{Q}^{(j),k} \right\|^2 + \gamma \sum_{m \in \mathcal{N}_i}\sum_{n \in \mathcal{N}_j}g_{jn}g_{im}\left\|\mathbf{Q}^{(m),k}  - \mathbf{Q}^{(n),k}\right\|^2 \nonumber.
    \end{align}
    Then, summing over all edges $(i, j)\in \mathcal{E}$, and dividing by $|\mathcal{E}|$,
    \begin{align}
        \mathcal{E}_{\mathrm{Dir}}[\mathcal{Q}^{k+1}] \leq (1-\gamma) \mathcal{E}_{\mathrm{Dir}}[\mathcal{Q}^{k}] + \frac{\gamma}{|\mathcal{E}|}\sum_{(i,j)\in\mathcal{E}} \sum_{m \in \mathcal{N}_i}\sum_{n \in \mathcal{N}_j}g_{jn}g_{im}\left\|\mathbf{Q}^{(m),k}  - \mathbf{Q}^{(n),k}\right\|^2.
    \end{align}

    Generally, the graph convolution operation acts as a low-pass filter, which smooths the graph signals and reduces the high-frequency components. Mathematically, this implies a contraction in the Dirichlet energy. We assume a spectral bound $\lambda$ for the propagation matrix $\mathbf{G}=[g_{ij}]$ such that:
    \begin{align}
    \frac{\gamma}{|\mathcal{E}|}\sum_{(i,j)\in\mathcal{E}} \sum_{m \in \mathcal{N}_i}\sum_{n \in \mathcal{N}_j}g_{jn}g_{im}\left\|\mathbf{Q}^{(m),k}  - \mathbf{Q}^{(n),k}\right\|^2 \leq \gamma\lambda^2 \mathcal{E}_{\mathrm{Dir}}[\mathcal{Q}^{k}],
    \end{align}
    where $\lambda$ represents the maximal singular value of the propagation operator relevant to the graph Laplacian. For standard normalized adjacency matrices, $\lambda \le 1$.

    We now justify the spectral bound. Define the vectorized signal $\mathbf{q}^k \in \mathbb{R}^{|\mathcal{V}| \times d}$ by stacking the local trivializations $\mathbf{Q}^{(i),k} \in \mathbb{R}^{d}$ as rows. The Dirichlet energy can be written in matrix form as
\begin{align}
    \mathcal{E}_{\mathrm{Dir}}[\mathcal{Q}^{k}] = \frac{1}{|\mathcal{E}|} \operatorname{tr}\!\left( (\mathbf{q}^k)^\top \mathbf{L} \mathbf{q}^k \right),
\end{align}
where $\mathbf{L} = \mathbf{D} - \mathbf{A}$ is the unnormalized graph Laplacian, with adjacency matrix $\mathbf{A} = [a_{ij}]$ satisfying $a_{ij} = 1$ if $(i,j) \in \mathcal{E}$ and $0$ otherwise, and $\mathbf{D}$ the diagonal degree matrix.

Consider the smoothed signal $\tilde{\mathbf{q}}^k = \mathbf{G} \mathbf{q}^k$, where $\mathbf{G} = [g_{ij}]$ is assumed to be a row-stochastic propagation matrix (i.e., $\mathbf{G} \mathbf{1} = \mathbf{1}$). Its Dirichlet energy satisfies
\begin{align}
    \mathcal{E}_{\mathrm{Dir}}[\tilde{\mathcal{Q}}^{k}] 
    &= \frac{1}{|\mathcal{E}|} \sum_{(i,j) \in \mathcal{E}} \left\| (\mathbf{G} \mathbf{q}^k)_i - (\mathbf{G} \mathbf{q}^k)_j \right\|^2 \nonumber \\
    &\leq \frac{1}{|\mathcal{E}|} \sum_{(i,j) \in \mathcal{V} \times \mathcal{V}} a_{ij} \left\| (\mathbf{G} \mathbf{q}^k)_i - (\mathbf{G} \mathbf{q}^k)_j \right\|^2 \nonumber \\
    &= \frac{1}{|\mathcal{E}|} \operatorname{tr}\!\left( (\mathbf{G} \mathbf{q}^k)^\top \mathbf{L} (\mathbf{G} \mathbf{q}^k) \right).
\end{align}
Using the identity $\mathbf{L} = \mathbf{B}^\top \mathbf{B}$, where $\mathbf{B} \in \mathbb{R}^{|\mathcal{E}| \times |\mathcal{V}|}$ is the oriented incidence matrix, we have
\begin{align}
    \mathcal{E}_{\mathrm{Dir}}[\tilde{\mathcal{Q}}^{k}] 
    &= \frac{1}{|\mathcal{E}|} \| \mathbf{B} \mathbf{G} \mathbf{q}^k \|_F^2 \nonumber \\
    &\leq \frac{1}{|\mathcal{E}|} \| \mathbf{B} \|_2^2 \cdot \| \mathbf{G} \|_2^2 \cdot \| \mathbf{q}^k \|_F^2.
\end{align}
However, a tighter bound relative to the original Dirichlet energy is obtained by observing that for any matrix $\mathbf{M}$,
\[
    \operatorname{tr}\!\left( (\mathbf{M} \mathbf{q}^k)^\top \mathbf{L} (\mathbf{M} \mathbf{q}^k) \right) 
    \leq \sigma_{\max}^2(\mathbf{M}) \cdot \operatorname{tr}\!\left( (\mathbf{q}^k)^\top \mathbf{L} \mathbf{q}^k \right),
\]
provided $\mathbf{M}$ commutes appropriately with the Laplacian or, more generally, when the quadratic form is contractive under $\mathbf{M}$. In our setting, since $\mathbf{G}$ is applied node-wise and the edge set is fixed, we directly compare the double-sum expression:
\begin{align}
    &\frac{1}{|\mathcal{E}|}\sum_{(i,j)\in\mathcal{E}} \sum_{m \in \mathcal{N}_i}\sum_{n \in \mathcal{N}_j}g_{jn}g_{im}\left\|\mathbf{Q}^{(m),k} - \mathbf{Q}^{(n),k}\right\|^2 \nonumber \\
    &\quad= \frac{1}{|\mathcal{E}|} \sum_{(m,n) \in \mathcal{V} \times \mathcal{V}} \left( \sum_{\substack{(i,j) \in \mathcal{E} \\ m \in \mathcal{N}_i,\, n \in \mathcal{N}_j}} g_{im} g_{jn} \right) \left\|\mathbf{Q}^{(m),k} - \mathbf{Q}^{(n),k}\right\|^2 \nonumber \\
    &\quad\leq \frac{1}{|\mathcal{E}|} \sum_{(m,n) \in \mathcal{V} \times \mathcal{V}} w_{mn} \left\|\mathbf{Q}^{(m),k} - \mathbf{Q}^{(n),k}\right\|^2,
\end{align}
where $w_{mn} = \sum_{i: m \in \mathcal{N}_i} \sum_{j: n \in \mathcal{N}_j} a_{ij} g_{im} g_{jn}$. Note that the matrix $\mathbf{W} = [w_{mn}]$ can be written as $\mathbf{W} = \mathbf{G}^\top \mathbf{A} \mathbf{G}$. Since $\mathbf{A} \preceq \mathbf{D} \preceq d_{\max} \mathbf{I}$, but more importantly, the quadratic form induced by $\mathbf{W}$ satisfies
\[
    \sum_{m,n} w_{mn} \left\|\mathbf{Q}^{(m),k} - \mathbf{Q}^{(n),k}\right\|^2 
    = 2 \operatorname{tr}\!\left( (\mathbf{q}^k)^\top (\operatorname{diag}(\mathbf{W}\mathbf{1}) - \mathbf{W}) \mathbf{q}^k \right)
    \leq 2 \lambda^2 \operatorname{tr}\!\left( (\mathbf{q}^k)^\top \mathbf{L} \mathbf{q}^k \right),
\]
because $\operatorname{diag}(\mathbf{W}\mathbf{1}) - \mathbf{W} \preceq \lambda^2 \mathbf{L}$ in the positive semidefinite order, which follows from the fact that $\mathbf{G}^\top \mathbf{L} \mathbf{G} \preceq \lambda^2 \mathbf{L}$ when $\lambda = \sigma_{\max}(\mathbf{G})$ and $\mathbf{G}$ is symmetric or normal. Even without symmetry, using the submultiplicativity of the spectral norm and the identity $\| \mathbf{B} \mathbf{G} \|_2 \leq \| \mathbf{B} \|_2 \| \mathbf{G} \|_2$, we obtain
\begin{align}
    \frac{1}{|\mathcal{E}|} \| \mathbf{B} \mathbf{G} \mathbf{q}^k \|_F^2 
    \leq \frac{1}{|\mathcal{E}|} \| \mathbf{G} \|_2^2 \| \mathbf{B} \mathbf{q}^k \|_F^2 
    = \lambda^2 \mathcal{E}_{\mathrm{Dir}}[\mathcal{Q}^{k}],
\end{align}
since $\| \mathbf{B} \mathbf{q}^k \|_F^2 = |\mathcal{E}| \cdot \mathcal{E}_{\mathrm{Dir}}[\mathcal{Q}^{k}]$. Therefore,
\begin{align}
    \frac{1}{|\mathcal{E}|}\sum_{(i,j)\in\mathcal{E}} \sum_{m \in \mathcal{N}_i}\sum_{n \in \mathcal{N}_j}g_{jn}g_{im}\left\|\mathbf{Q}^{(m),k} - \mathbf{Q}^{(n),k}\right\|^2 
    \leq \lambda^2 \mathcal{E}_{\mathrm{Dir}}[\mathcal{Q}^{k}],
\end{align}
as required.
    
    Substituting this back into the inequality:
    \begin{align}
        \mathcal{E}_{\mathrm{Dir}}[\mathcal{Q}^{k+1}] &\leq (1-\gamma) \mathcal{E}_{\mathrm{Dir}}[\mathcal{Q}^{k}] + \gamma \mathcal{E}_{\mathrm{Dir}}[\tilde{\mathcal{Q}}^{k}] \nonumber \\
    &\leq (1-\gamma) \mathcal{E}_{\mathrm{Dir}}[\mathcal{Q}^{k}] + \gamma \lambda^2 \mathcal{E}_{\mathrm{Dir}}[\mathcal{Q}^{k}] \nonumber \\ 
    &= \left(1 - \gamma (1 - \lambda^2)\right) \mathcal{E}_{\mathrm{Dir}}[\mathcal{Q}^{k}].
    \end{align}
    Let $C=1-\lambda^2$, we complete the proof.
\end{proof}
\subsection{Proof of Theorem~\ref{thm:flatness_from_char}}
\label{sec:proof.flatness_from_char}
\textbf{Theorem~\ref{thm:flatness_from_char} }(\textbf{Flatness Induced by $\tau$-Invariance})
\emph{
    Let $\mathcal{G} = (\mathcal{V}, \mathcal{E})$ be a graph equipped with node embedding $\mathbf{Z} = \{\boldsymbol{z}_i\}_{i \in \mathcal{V}}$.
    Suppose $\mathcal{T} \subseteq \mathcal{V}$ is a $\tau$-invariant connected subgraph,
    Assume that there exists $\delta > 0$ such that $\|\mathbf{z}_i - \mathbf{z}_j\|_2 \leq \delta$ for all $(i,j) \in \mathcal{E}(\mathcal{T})$, and that $\|\mathbf{z}_i\|_2 \geq c > 0$.
    Then, for every edge $(i,j) \in \mathcal{E}(\mathcal{T})$, the pseudo parallel transport map satisfies
    \[
    \left\| \mathbf{P}_{(i,j)} - \mathbf{I}_r \right\|_F 
    = \left\| (\mathbf{Q}^{(i)})^\top \mathbf{Q}^{(j)} - \mathbf{I}_r \right\|_F 
    \leq C (\tau + \delta),
    \]
    where $C > 0$ is a constant depending only on $c$ and $r$.
}
\begin{proof}
The key observation is that the predictive error compares vectors living in \emph{different} fibers—$(\mathbf{Q}^{(i)})^\top \mathbf{z}_i$ resides in the fiber of $v_i$, while $(\mathbf{Q}^{(j)})^\top \mathbf{z}_j$ lives in that of $v_j$. For their average to be close to the center node’s representation, these coordinate systems must be nearly aligned.

Fix an edge $(i,j) \in \mathcal{E}(\mathcal{T})$. By the triangle inequality and the assumption $\chi(\mathcal{T}) \leq \tau$, we have
\[
\left\| (\mathbf{Q}^{(i)})^\top \mathbf{z}_i - (\mathbf{Q}^{(j)})^\top \mathbf{z}_j \right\|_2 
\leq 
\left\| (\mathbf{Q}^{(i)})^\top \mathbf{z}_i - \frac{1}{|\mathcal{N}_i|} \sum_{k \in \mathcal{N}_i} (\mathbf{Q}^{(k)})^\top \mathbf{z}_k \right\|_2
+
\left\| (\mathbf{Q}^{(j)})^\top \mathbf{z}_j - \frac{1}{|\mathcal{N}_i|} \sum_{k \in \mathcal{N}_i} (\mathbf{Q}^{(k)})^\top \mathbf{z}_k \right\|_2
\leq 2\tau.
\]
Now decompose the left-hand side:
\begin{align*}
\left\| (\mathbf{Q}^{(i)})^\top \mathbf{z}_i - (\mathbf{Q}^{(j)})^\top \mathbf{z}_j \right\|_2
&= \left\| (\mathbf{Q}^{(i)})^\top \mathbf{z}_i - (\mathbf{Q}^{(j)})^\top \mathbf{z}_i + (\mathbf{Q}^{(j)})^\top (\mathbf{z}_i - \mathbf{z}_j) \right\|_2 \\
&\geq \left\| (\mathbf{Q}^{(i)} - \mathbf{Q}^{(j)})^\top \mathbf{z}_i \right\|_2 - \left\| \mathbf{z}_i - \mathbf{z}_j \right\|_2 \\
&\geq \left\| (\mathbf{Q}^{(i)} - \mathbf{Q}^{(j)})^\top \mathbf{z}_i \right\|_2 - \delta.
\end{align*}
Combining the two inequalities yields
\[
\left\| (\mathbf{Q}^{(i)} - \mathbf{Q}^{(j)})^\top \mathbf{z}_i \right\|_2 \leq 2\tau + \delta.
\]
Since $\|\mathbf{z}_i\|_2 \geq c$, we can normalize to obtain
\[
\left\| (\mathbf{Q}^{(i)} - \mathbf{Q}^{(j)})^\top \frac{\mathbf{z}_i}{\|\mathbf{z}_i\|_2} \right\|_2 \leq \frac{2\tau + \delta}{c}.
\]
The vector $\mathbf{u}_i = \mathbf{z}_i / \|\mathbf{z}_i\|_2$ lies on the unit sphere. If it has a nontrivial component outside the null space of $(\mathbf{Q}^{(i)} - \mathbf{Q}^{(j)})^\top$ (which holds generically), then the operator norm of $(\mathbf{Q}^{(i)} - \mathbf{Q}^{(j)})^\top$—and hence the Frobenius norm of $\mathbf{Q}^{(i)} - \mathbf{Q}^{(j)}$—must be small.

More precisely, using the identity
\[
\| \mathbf{Q}^{(i)} - \mathbf{Q}^{(j)} \|_F^2 
= 2r - 2 \operatorname{Tr}\big( (\mathbf{Q}^{(i)})^\top \mathbf{Q}^{(j)} \big)
= 2 \operatorname{Tr}\big( \mathbf{I}_r - \mathbf{P}_{(i,j)} \big),
\]
and noting that $\| \mathbf{I}_r - \mathbf{P}_{(i,j)} \|_F \leq \| \mathbf{Q}^{(i)} - \mathbf{Q}^{(j)} \|_F$, we conclude that
\[
\| \mathbf{P}_{(i,j)} - \mathbf{I}_r \|_F \leq C (\tau + \delta)
\]
for some constant $C$ depending on $c$ and $r$. This completes the proof.
\end{proof}
\subsection{Proof of Theorem~\ref{thm:regularization_stability}}
\label{sec:proof.regularization_stability}
\textbf{Theorem~\ref{thm:regularization_stability} }(\textbf{Parameter Stability}.)
\emph{
    Let $\theta$ be the parameters of the fine-tuned model, and let $\mathcal{T} \subseteq \mathcal{V}$ be a $\tau$-invariant structure identified during pretraining. Suppose the fine-tuning process minimizes $\mathcal{L}_{\text{ft}}$ with $\lambda > 0$. Then, for any node $v_i \in \mathcal{T}$, the gradient update satisfies:
    \[
    \left\| \mathrm{grad}_\theta \mathcal{L}_{\text{ft}} \right\|_2 \leq \left\| \mathrm{grad}_\theta \mathcal{L}_{\text{task}} \right\|_2 + \lambda \cdot C \cdot e(v_i),
    \]
    where $C > 0$ is a constant depending on the Lipschitz continuity of $\Phi$, and $e(v_i)$ is the $i$-th term in Eq. (\ref{eq. loss}).
}
\begin{proof}
We begin by applying the triangle inequality to the gradient of $\mathcal{L}_{\mathrm{ft}}$:
\[
\left\| \nabla_\theta \mathcal{L}_{\mathrm{ft}} \right\|_2 
= \left\| \nabla_\theta \mathcal{L}_{\mathrm{task}} + \lambda \nabla_\theta \mathcal{L}_{\mathrm{pre}} \right\|_2
\leq \left\| \nabla_\theta \mathcal{L}_{\mathrm{task}} \right\|_2 + \lambda \left\| \nabla_\theta \mathcal{L}_{\mathrm{pre}} \right\|_2.
\label{eq.1}
\]

It remains to bound $\left\| \nabla_\theta \mathcal{L}_{\mathrm{pre}} \right\|_2$. Recall that
\[
\mathcal{L}_{\mathrm{pre}} = \sum_{k=1}^{N} e(v_k)^2,
\quad \text{where} \quad
e(v_k) = \left\| (\mathbf{Q}^{(k)})^\top \hat{\mathbf{z}}_k^{(0)} - \frac{1}{|\mathcal{N}_k|} \sum_{\ell \in \mathcal{N}_k} (\mathbf{Q}^{(\ell)})^\top \mathbf{z}_\ell^{(L)} \right\|_2.
\]
By the chain rule,
\[
\nabla_\theta \mathcal{L}_{\mathrm{pre}} 
= \sum_{k=1}^{N} 2 e(v_k) \cdot \nabla_\theta e(v_k).
\label{eq.2}
\]

Now consider $\nabla_\theta e(v_k)$. Since $e(v_k) = \| \mathbf{a}_k - \mathbf{b}_k \|_2$ with
\[
\mathbf{a}_k = (\mathbf{Q}^{(k)})^\top \hat{\mathbf{z}}_k^{(0)}, \quad
\mathbf{b}_k = \frac{1}{|\mathcal{N}_k|} \sum_{\ell \in \mathcal{N}_k} (\mathbf{Q}^{(\ell)})^\top \mathbf{z}_\ell^{(L)},
\]
and noting that $\hat{\mathbf{z}}_k^{(0)} = \mathrm{StopGrad}(\mathbf{z}_k^{(0)})$ is treated as a constant during backpropagation, we have
\[
\nabla_\theta \mathbf{a}_k = \nabla_\theta \big( (\mathbf{Q}^{(k)})^\top \hat{\mathbf{z}}_k^{(0)} \big)
= \big( \nabla_\theta \mathbf{Q}^{(k)} \big)^\top \hat{\mathbf{z}}_k^{(0)},
\]
\[
\nabla_\theta \mathbf{b}_k = \frac{1}{|\mathcal{N}_k|} \sum_{\ell \in \mathcal{N}_k} \Big[ 
\big( \nabla_\theta \mathbf{Q}^{(\ell)} \big)^\top \mathbf{z}_\ell^{(L)} 
+ (\mathbf{Q}^{(\ell)})^\top \nabla_\theta \mathbf{z}_\ell^{(L)}
\Big].
\]

Because $\mathbf{Q}^{(k)}$ and $\mathbf{z}_\ell^{(L)}$ are both outputs of the encoder $\Phi(\cdot; \theta)$, and $\Phi$ is $L_\Phi$-Lipschitz, it follows that
\[
\left\| \nabla_\theta \mathbf{Q}^{(k)} \right\|_F \leq L_\Phi, \quad
\left\| \nabla_\theta \mathbf{z}_\ell^{(L)} \right\|_F \leq L_\Phi.
\]
Moreover, since $\|\mathbf{Q}^{(k)}\|_2 = 1$ and $\|\hat{\mathbf{z}}_k^{(0)}\|_2, \|\mathbf{z}_\ell^{(L)}\|_2$ are bounded (as embeddings in practice), there exists a constant $M > 0$ such that
\[
\left\| \nabla_\theta \mathbf{a}_k \right\|_F \leq M, \quad
\left\| \nabla_\theta \mathbf{b}_k \right\|_F \leq M.
\]

Using the fact that for any differentiable vector-valued function $\mathbf{u}(\theta)$, 
\[
\left\| \nabla_\theta \|\mathbf{u}\|_2 \right\|_2 \leq \left\| \nabla_\theta \mathbf{u} \right\|_F,
\]
we obtain
\[
\left\| \nabla_\theta e(v_k) \right\|_2 
= \left\| \nabla_\theta \| \mathbf{a}_k - \mathbf{b}_k \|_2 \right\|_2
\leq \left\| \nabla_\theta (\mathbf{a}_k - \mathbf{b}_k) \right\|_F
\leq \left\| \nabla_\theta \mathbf{a}_k \right\|_F + \left\| \nabla_\theta \mathbf{b}_k \right\|_F
\leq 2M.
\label{eq.3}
\]

Substituting Eq. (\ref{eq.3}) into Eq. (\ref{eq.2}) yields
\[
\left\| \nabla_\theta \mathcal{L}_{\mathrm{pre}} \right\|_2 
\leq \sum_{k=1}^{N} 2 e(v_k) \cdot \left\| \nabla_\theta e(v_k) \right\|_2
\leq 4M \sum_{k=1}^{N} e(v_k).
\]

However, for the purpose of analyzing stability at a specific node $v_i \in \mathcal{T}$, we note that the dominant contribution to the gradient near $v_i$ comes from its own error term $e(v_i)$, especially when $\mathcal{T}$ is a low-error region (i.e., $e(v_k) \approx 0$ for $v_k \in \mathcal{T}$). Thus, up to a constant factor absorbing graph degree and dimensionality, we can write
\[
\left\| \nabla_\theta \mathcal{L}_{\mathrm{pre}} \right\|_2 \leq C \cdot e(v_i),
\]
for some $C > 0$ depending on $L_\Phi$, $r$, and the maximum node degree.

Finally, plugging this into Eq. (\ref{eq.1}) gives
\[
\left\| \nabla_\theta \mathcal{L}_{\mathrm{ft}} \right\|_2 
\leq \left\| \nabla_\theta \mathcal{L}_{\mathrm{task}} \right\|_2 + \lambda \cdot C \cdot e(v_i),
\]
which completes the proof.
\end{proof}

\section{Algorithms}
\label{app:algorithms}
Algorithm~\ref{alg. training} outlines the training procedure of \textsc{Gauge}, which iteratively refines node representations by constructing and flattening a neural vector bundle over the graph through gated consistency constraints. At each layer, local trivializations are computed via multi-path aggregation and QR decomposition, followed by a stop-gradient operation on initial embeddings to stabilize optimization.

\begin{algorithm}[H]
\caption*{\textbf{Algorithm~\ref{alg. training}} Training Procedure of \textsc{Gauge}.}
\begin{algorithmic}[1]
\REQUIRE Graph $\mathcal{G} = (\mathcal{V}, \mathcal{E})$, initial embeddings $\{\boldsymbol{z}_i^{(0)}\}$, model parameters $\mathbf{\Theta}$.
\FOR{each training step}
    \FOR{each network layer $l=1$ to $L$}
        \STATE Compute local coordinates $\{\mathbf{Q}^{(i),l}\}$ via Eqs. \ref{eq. multi-path}-\ref{eq. qr}.
        \STATE Refine neural vector bundle via a $2$-layer gated flattening via Eqs. \ref{eq. consist metric}-\ref{eq. flatten qr}.
        \STATE Compute node representations $\{\boldsymbol{z}_i^l\}$ via Eq. \ref{eq. update embedding}.
    \ENDFOR
    \STATE Stop gradient of $\{\boldsymbol{z}_i^{(0)}\}$ by $\hat{\boldsymbol{z}}_i^{(0)} = \mathrm{StopGrad}(\boldsymbol{z}_i^{(0)})$.
    \STATE Compute loss $\mathcal{L}(\mathcal{G})$ in Eq. (\ref{eq. loss}). 
    \STATE Update $\mathbf{\Theta}$ via backpropagation.
\ENDFOR
\end{algorithmic}
\end{algorithm}

Algorithm~\ref{alg. decode} identifies invariant structures by first selecting nodes with local prediction error at most $\varepsilon$, then extracting their connected components. A component is accepted as a invariant structure only if its average error is no greater than the stricter threshold $\tau$ (with $\varepsilon > \tau$). This ensures that the resulting structures are composed of reliably predictable nodes and exhibit strong internal consistency.

\begin{algorithm}[H]
\caption{Decoding Invariant Structures}
\label{alg. decode}
\begin{algorithmic}[1]
\REQUIRE Graph $\mathcal{G} = (\mathcal{V}, \mathcal{E})$, node embeddings $\mathbf{Z} \in \mathbb{R}^{|\mathcal{V}| \times d}$ encoded by pretrained model $\Phi$, small thresholds $\varepsilon > \tau > 0$.
\FOR{each $v_i \in \mathcal{V}$}
    \STATE Compute $e(v_i)=\left\| \left(\mathbf{Q}^{(i)}\right)^\top(\boldsymbol{z}_i) - \frac{1}{|\mathcal{N}_i|} \sum_{j\in\mathcal{N}_i}\left(\mathbf{Q}^{(j)}\right)^\top(\boldsymbol{z}_j) \right\|_2.$
\ENDFOR
\STATE Let $V_{\leq\varepsilon}=\{v_i|e(v_i)\leq \varepsilon\}$.
\STATE Find all connected components $\{\mathcal{T}_1, \dots, \mathcal{T}_K\}$ from $V_{\leq\varepsilon}$.
\STATE Initialize $\mathcal{C} \gets \emptyset$.

\FOR{each component $\mathcal{T}_k$}
    \STATE Compute average error: $\chi(\mathcal{T}_k) \gets \frac{1}{|\mathcal{T}_k|} \sum_{v \in \mathcal{T}_k} e(v)$
    \IF{$\chi(\mathcal{T}_k) \leq \tau$}
        \STATE $\mathcal{C} \gets \mathcal{C} \cup \{\mathcal{T}_k\}$
    \ENDIF
\ENDFOR
\STATE Return $\mathcal{C}$.
\end{algorithmic}
\end{algorithm}

\section{Empirical Details}
\label{app:empirical_details}

\subsection{Dataset Description}
\label{app:datasets}

This section provides a detailed description of the 13 benchmark datasets used in our experiments, categorized by their respective domains. For a comprehensive summary of their statistics, please refer to Table~\ref{tab:dataset_stats}.

\paragraph{Citation Networks.}
We utilize two widely adopted citation graphs: \texttt{ogbn-arxiv}~\cite{hu2020opengraphbench} and \texttt{PubMed}~\cite{yang2016revisiting}.  
In \texttt{ogbn-arxiv}, nodes correspond to computer science papers hosted on arXiv, and directed edges represent citation relationships. Node features are derived from the titles and abstracts of the papers, and the task is to predict the publication year or subject area (depending on the benchmark setup).  
\texttt{PubMed} comprises scientific publications from the PubMed database in the biomedical domain. Each node represents a paper, edges denote citation links, and node features are TF-IDF-weighted word vectors from the paper text. The classification task involves assigning each paper to one of three diabetes-related categories.  
Both datasets exhibit strong homophily—i.e., connected nodes tend to share similar labels—and are commonly used to evaluate graph neural networks in academic knowledge graphs.

\paragraph{Social \& Wiki Networks.}
This category includes four heterogeneous interaction graphs: \texttt{Reddit}~\cite{hamilton2017inductive}, \texttt{Questions} and \texttt{Roman-empire}~\cite{platonov2023a}, and \texttt{FacebookPagePage}~\cite{rozemberczki2021multiscale}.
\texttt{Reddit} is constructed from posts on the Reddit platform, where nodes represent posts and edges connect posts made by the same user or within the same discussion thread. The task is to classify posts into community (subreddit) labels.  
\texttt{Questions} models a question-and-answer interaction network from a technical forum, where nodes are questions and edges indicate sequential or semantic relationships (e.g., follow-up questions or shared topics).  
\texttt{FacebookPagePage} consists of verified public Facebook pages as nodes, with undirected edges indicating mutual likes between pages. Node labels correspond to page categories (e.g., politician, company, athlete).  
Notably, \texttt{Roman-empire} is a heterophilic graph extracted from Wikipedia link structures, where nodes are articles related to the Roman Empire and edges represent hyperlinks. Labels are based on article categories, and the graph exhibits low homophily, making it a challenging testbed for models that rely on neighborhood similarity.

\paragraph{Product \& Co-purchase Graphs.}
We include three e-commerce graphs: \texttt{Computers}, \texttt{Photo}, and \texttt{Amazon-Ratings}.  
\texttt{Computers} and \texttt{Photo}~\cite{shchur2019pitfalls} are subgraphs of the Amazon co-purchase network, where nodes represent products (electronics or photo-related items, respectively), and undirected edges indicate that two products were frequently purchased together (“also bought” relationships). Node features are bag-of-words representations of product reviews, and the task is multi-class product categorization.  
\texttt{Amazon-Ratings}~\cite{platonov2023a} extends this setting by modeling user-item interactions: it is a bipartite-like graph where nodes represent both users and items, and edges encode explicit ratings. For compatibility with standard node classification frameworks, we project this into a homogeneous item-item graph using rating co-occurrence or collaborative filtering signals. This dataset evaluates a model's ability to capture implicit preference patterns in recommendation scenarios.

\paragraph{Chemoinformatics \& Bioinformatics.}
To assess model transferability on molecular and protein structures, we employ five graph datasets: \texttt{PROTEINS} and \texttt{MUTAG}~\cite{Morris+2020}, \texttt{HIV}~\cite{wu2018moleculenet}, and \texttt{ZINC12K}~\cite{Dwivedi2023benchmarking}.
\texttt{PROTEINS} contains graph representations of proteins, where nodes are amino acids and edges denote spatial proximity (<6Å). Each graph is labeled as either enzyme or non-enzyme, serving as a binary classification task.
\texttt{HIV} is a molecular property prediction benchmark where each graph represents a chemical compound, nodes are atoms, and edges are bonds. The goal is to predict whether the compound inhibits HIV replication.  
\texttt{MUTAG} is a classic mutagenicity dataset comprising 188 aromatic and heteroaromatic nitro compounds. Nodes represent atoms, edges represent chemical bonds, and the binary label indicates whether a compound is mutagenic. This dataset is particularly valuable for evaluating interpretability and substructure sensitivity.  
\texttt{ZINC12K} is a subset of the ZINC database containing 12,000 small drug-like molecules. Each graph models a molecule with atom types as node features and bond types as edge attributes. The task is regression: predicting the constrained solubility (logP) of each molecule, which is critical in drug discovery. This dataset tests a model’s capacity to handle continuous targets and rich edge information in molecular graphs.

\begin{table*}[t]
    \centering
    \caption{\textbf{Statistics of 11 datasets used in our experiments.} We verify these statistics using the standard PyG framework. Note that for undirected graphs (e.g., Reddit, Computers, Molecules), PyG counts edges in both directions (bidirected), consistent with the tensor shapes used in training. "Avg. \#Nodes" denotes the average number of nodes per graph for graph classification tasks.}
    \label{tab:dataset_stats}
    \resizebox{0.95\textwidth}{!}{
    \begin{tabular}{ccccccc}
        \toprule
        \textbf{Domain} & \textbf{Dataset} & \textbf{Task} & \textbf{\# Graphs} & \textbf{Avg. \#Nodes} & \textbf{Avg. \#Edges} & \textbf{\# Classes} \\
        \midrule
        \multirow{2}{*}{Citation} 
         & \texttt{ogbn-arxiv} & Node & 1 & 169,343 & 1,166,243 & 40 \\
         & \texttt{PubMed} & Node & 1 & 19,717 & 88,648 & 3 \\
        \midrule
        \multirow{4}{*}{Social \& Wiki} 
         & \texttt{Reddit} & Node & 1 & 232,965 & 114,615,892 & 41 \\
         & \texttt{Questions} & Node & 1 & 48,921 & 153,540 & 2 \\
         & \texttt{FacebookPagePage} & Node & 1 & 22,470 & 171,002 & 4 \\
         & \texttt{Roman-empire} & Node & 1 & 22,662 & 32,927 & 18 \\
        \midrule
        \multirow{3}{*}{Product} 
         & \texttt{Computers} & Node & 1 & 13,752 & 491,722 & 10 \\
         & \texttt{Photo} & Node & 1 & 7,650 & 238,162 & 8 \\
         & \texttt{Amazon-Ratings} & Node & 1 & 24,492 & 93,050 & 5 \\
        \midrule
        \multirow{4}{*}{Molecules} 
         & \texttt{PROTEINS} & Graph & 1,113 & 39.1 & 145.6 & 2 \\
         & \texttt{HIV} & Graph & 41,127 & 25.5 & 54.9 & 2 \\
         & \texttt{MUTAG} & Graph & 188 & 17.9 & 39.6 & 2 \\
         & \texttt{ZINC12K} & Graph & 12000 & 23.2 & 49.8 & 1\\
        \bottomrule
    \end{tabular}
    }
\end{table*}

\subsection{Baselines}
\label{app:baselines}

We compare our proposed framework against a comprehensive set of baselines, categorized into Supervised Graph Neural Networks, Self-Supervised Graph Pre-training methods, and recent Graph Foundation Models (GFMs).

\subsubsection{Supervised Graph Neural Networks}
This category includes classical GNN architectures trained in a supervised manner on the target datasets, serving as fundamental baselines to evaluate the necessity of pre-training.

\begin{itemize}
    \item \textbf{GCN} \citep{kipf2017semi} introduces an efficient layer-wise propagation rule based on a first-order approximation of spectral graph convolutions.
    \item \textbf{GraphSAGE} \citep{hamilton2017inductive} is an inductive framework that leverages neighbor sampling and aggregation functions to generate node embeddings for previously unseen data.
    \item \textbf{GAT} \citep{velickovic2018graph} incorporates attention mechanisms to assign learnable weights to neighbors, enabling the model to focus on the most relevant parts of the graph structure.
\end{itemize}

\subsubsection{Self-Supervised Graph Pre-training}
This category represents methods that learn transferable node representations via self-supervised objectives on unlabeled graphs.

\begin{itemize}
    \item \textbf{DGI} \citep{velickovic2019deep} learns node representations by maximizing the mutual information between local patch representations and a global graph summary vector.
    \item \textbf{GRACE} \citep{zhu2020deep} employs a contrastive learning framework that generates two augmented views of the graph and maximizes the agreement between node representations in these two views.
    \item \textbf{GraphMAE} \citep{hou2022graphmae} is a generative self-supervised model that focuses on feature reconstruction. It masks a portion of input node features and trains a GNN encoder-decoder to reconstruct the masked features.
\end{itemize}

\subsubsection{Graph Foundation Models (GFMs)}
This group comprises state-of-the-art large-scale models designed for strong cross-domain generalization and few-shot adaptation.

\begin{itemize}
    \item \textbf{GFT} \citep{wang2024gft} formulates graph pre-training as a tree-reconstruction task, unifying diverse graph structures into a tokenized vocabulary for generative modeling.
    \item \textbf{RAGraph} \citep{jiang2024ragraph} introduces a Retrieval-Augmented Generation framework for graphs, enhancing performance by retrieving relevant subgraphs from an external knowledge base.
    \item \textbf{SAMGPT} \citep{yu2025samgpt} proposes a sequence-aware modeling approach, treating graph traversal paths as sequences to leverage Large Language Model architectures for cross-domain adaptation.
    \item \textbf{GCOPE} \citep{zhao2024all} proposes a cross-domain graph pre-training framework. Unlike prompting methods, it aims to harness underlying commonalities across diverse graph datasets to distill transferable knowledge, specifically enhancing performance in few-shot learning scenarios.
    \item \textbf{GraphAny} \citep{zhao2025fullyinductive} focuses on universal graph inference, employing a simplified architecture to handle node classification on any graph without the need for task-specific retraining.
    \item \textbf{MDGFM} \citep{wang2025multidomain} integrates multi-domain knowledge through topology alignment, capturing diverse structural patterns across heterogeneous graph datasets.
    \item \textbf{RiemannGFM} \citep{sun2025riemanngfm} leverages Riemannian geometry to build a structural foundation. It introduces a universal structural vocabulary of trees and cycles, mapped onto a product bundle manifold to capture complex topologies beyond Euclidean space, ensuring transferability across non-text-attributed graphs.
    \item \textbf{GET} \citep{zhao2025graphgpt} (Graph Eulerian Transformer) converts graphs into sequences using Eulerian paths and is pre-trained via next-token prediction, scaling effectively to large-scale parameter settings.
    \item \textbf{G$^2$PM} \citep{wang2025generative} is a generative pre-training framework that represents graphs as sequences of substructures, moving beyond traditional message passing to learn generalizable patterns.
    \item \textbf{UniPrompt} \citep{huang2025one} introduces a universal graph adaptation method that optimizes representation-level prompts to effectively adapt pre-trained models to diverse downstream scenarios. In this study, we utilize the training methodology of GraphMAE to train this GFM, adopting it as our baseline for comparison in the experiments.
\end{itemize}

\subsection{Implementation}
\label{sec:implementation}

All experiments are conducted on a Linux server equipped with 2$\times$ NVIDIA GeForce RTX 5090 GPUs (32\,GB VRAM per GPU) and an Intel Core i9-14900K CPU (16 cores). We use CUDA 12.8 for GPU acceleration.

We employ a 2-layer message-passing backbone with an additional 2-layer flat module, i.e., $n_{\text{layers}}=2$ and $n_{\text{flat}}=2$. To align different feature dimensions from different datasets, we use singular value decomposition or random projection. The input feature dimension is set to $d_{\text{in}}=128$, and the hidden dimension is set to $d_{\text{hid}}=512$. The fiber representation dimension is fixed to $d_{\text{fiber}}=16$. We adopt \texttt{GELU} activations with a dropout rate of 0.1.
To scale training to large graphs, we apply neighborhood sampling with \texttt{num\_neighbors}=[20,10].
In pre-training phase, we use Adam~\cite{diederik2015adam} with an initial learning rate of $1e^{-3}$. The temperature is set to 1.0.
In Adaption phase, we use the learning rate $1e^{-4}$, and the balanced coefficient $\lambda$ is set to $0.1$ in finetune loss.

\section{Additional Results}
% \subsection{More Visualizations}
\label{sec:more visual}
In this section, we present additional visual results of the transferable structures learned by \textsc{Gauge}, specifically for various graph structures, including binary trees, grids, paths, and star graphs, as well as a case study featuring visualizations on the Computers dataset.
\begin{figure*}[htb]
    \centering
    % ---- Row 1: raw graphs ----
    \begin{subfigure}[t]{0.36\textwidth}
        \centering
        \includegraphics[width=\linewidth]{images/raw_graph_tree.pdf}
        \caption{Binary tree.}
    \end{subfigure}
    % ---- Row 2: transferable structures ----
    \begin{subfigure}[t]{0.36\textwidth}
        \centering
        \includegraphics[width=\linewidth]{images/transferable_structures_tree.pdf}
        \caption{Binary tree.}
    \end{subfigure}

    \caption{Visualization of invariant structures on binary tree, grid, path and star graphs.}
    \label{fig:case_study}
\end{figure*}

\begin{figure*}[htb]
    \centering
    % ---- Row 1: raw graphs ----
    \begin{subfigure}[t]{0.36\textwidth}
        \centering
        \includegraphics[width=\linewidth]{images/raw_graph_grid.pdf}
        \caption{Grid.}
    \end{subfigure}
    % ---- Row 2: transferable structures ----
    \begin{subfigure}[t]{0.36\textwidth}
        \centering
        \includegraphics[width=\linewidth]{images/transferable_structures_grid.pdf}
        \caption{Grid.}
    \end{subfigure}

    \caption{Visualization of invariant structures on binary tree, grid, path and star graphs.}
    \label{fig:case_study}
\end{figure*}

\begin{figure*}[htb]
    \centering
    % ---- Row 1: raw graphs ----
    \begin{subfigure}[t]{0.36\textwidth}
        \centering
        \includegraphics[width=\linewidth]{images/raw_graph_path.pdf}
        \caption{Path.}
    \end{subfigure}
    % ---- Row 2: transferable structures ----
    \begin{subfigure}[t]{0.36\textwidth}
        \centering
        \includegraphics[width=\linewidth]{images/transferable_structures_path.pdf}
        \caption{Path.}
    \end{subfigure}

    \caption{Visualization of invariant structures on binary tree, grid, path and star graphs.}
    \label{fig:case_study}
\end{figure*}

\begin{figure*}[htb]
    \centering
    % ---- Row 1: raw graphs ----
    \begin{subfigure}[t]{0.36\textwidth}
        \centering
        \includegraphics[width=\linewidth]{images/raw_graph_star.pdf}
        \caption{Star.}
    \end{subfigure}
    % ---- Row 2: transferable structures ----
    \begin{subfigure}[t]{0.36\textwidth}
        \centering
        \includegraphics[width=\linewidth]{images/transferable_structures_star.pdf}
        \caption{Star.}
    \end{subfigure}

    \caption{Visualization of invariant structures on binary tree, grid, path and star graphs.}
    \label{fig:case_study}
\end{figure*}

\begin{figure}[htb]
    \centering
    \begin{subfigure}[t]{0.36\linewidth}
        \centering
        \includegraphics[width=\linewidth]{images/raw_graph_computers.pdf}
        \caption{Sampled connected subgraph}
        \label{fig:vis_computers_raw}
    \end{subfigure}
    \begin{subfigure}[t]{0.36\linewidth}
        \centering
        \includegraphics[width=\linewidth]{images/transferable_structures_computers.pdf}
        \caption{Invariant structures}
        \label{fig:vis_computers_cs}
    \end{subfigure}
    \caption{Visualization on Computers.}
    \label{fig:case_study}
\end{figure}

%%%%%%%%%%%%%%%%%%%%%%%%%%%%%%%%%%%%%%%%%%%%%%%%%%%%%%%%%%%%%%%%%%%%%%%%%%%%%%%
%%%%%%%%%%%%%%%%%%%%%%%%%%%%%%%%%%%%%%%%%%%%%%%%%%%%%%%%%%%%%%%%%%%%%%%%%%%%%%%

\end{document}